\def\theta{w}
\def\linfeps{\ell_\infty(\varepsilon)}
\def\w{\mathbf{w}}
\def\<#1,#2>{\langle #1,\,#2\rangle}
\begin{document}

\twocolumn[

\icmltitle{Concise Explanations of Neural Networks using Adversarial Training}
\begin{icmlauthorlist}
\icmlauthor{Prasad Chalasani}{x}
\icmlauthor{Jiefeng Chen}{uw}
\icmlauthor{Amrita Roy Chowdhury}{uw}
\icmlauthor{Somesh Jha}{x,uw}
\icmlauthor{Xi Wu}{g}
\end{icmlauthorlist}

\icmlaffiliation{x}{XaiPient}
\icmlaffiliation{uw}{University of Wisconsin (Madison)}
\icmlaffiliation{g}{Google}

\icmlcorrespondingauthor{Prasad Chalasani}{pchalasani@gmail.com}

\icmlkeywords{Attribution, Explainability, Adversarial, Logistic}
\vskip 0.3in]

\printAffiliationsAndNotice{}

\begin{abstract}

We show new connections between adversarial learning and
explainability for deep neural networks (DNNs). One form of 
explanation of the output of a neural network model in terms of its
input features, is a vector of feature-attributions.
Two desirable characteristics of an attribution-based
explanation are: (1) \textit{sparseness}: the attributions of
irrelevant or weakly relevant features should be negligible, thus
resulting in \textit{concise} explanations in terms of the significant
features, and (2) \textit{stability}: it should not vary significantly
within a small local neighborhood of the input. Our first contribution
is a theoretical exploration of how these two properties 
(when using attributions based on Integrated Gradients, or IG)
are related to adversarial training, for a class of
1-layer networks (which includes logistic regression models for binary and multi-class classification);
for these networks we show that (a) adversarial training using an
$\ell_\infty$-bounded adversary produces models with sparse
attribution vectors, and (b) natural model-training while encouraging
stable explanations (via an extra term in the loss function), is
equivalent to adversarial training. 
Our second contribution is an
empirical verification of phenomenon (a), which we show, somewhat
surprisingly, occurs \textit{not only in 1-layer networks, but also
  DNNs trained on standard image datasets}, and extends beyond IG-based attributions, 
  to those based on DeepSHAP:   adversarial training with
$\linf$-bounded perturbations yields significantly sparser
attribution vectors, with little degradation in performance on natural
test data, compared to natural training. Moreover, the sparseness of
the attribution vectors is significantly better than that achievable
via $\ell_1$-regularized natural training.

\end{abstract}

\section{Introduction}
\label{sec-intro}

Despite the recent dramatic success of deep learning models in a
variety of domains, two serious concerns have surfaced about these
models.

\paragraph{Vulnerability to Adversarial Attacks:} 
We can abstractly think of a neural network model as a function
$F(\bfx)$ of a $d$-dimensional input vector $\bfx \in \R^d$, and the
range of $F$ is either a discrete set of class-labels, or a continuous
set of class probabilities.  Many of these models can be foiled by an
adversary who imperceptibly (to humans) alters the input $\bfx$ by
adding a perturbation $\delta \in \R^d$ so that $F(\bfx + \delta)$ is
very different from $F(\bfx)$ \cite{Szegedy2013-tx, Goodfellow2014-cy,
  Papernot2015-pg, Biggio2013-rx}.  \textit{Adversarial training} (or
\textit{adversarial learning}) has recently been proposed as a method
for training models that are robust to such attacks, by applying
techniques from the area of Robust Optimization~\cite{Madry2017-zz,
  Sinha2018-mj}.  The core idea of adversarial training is simple: we
define a set $S$ of allowed perturbations $\delta \in \R^d$ that we
want to "robustify" against (e.g. $S$ could be the set of $\delta$
where $||\delta||_\infty \leq \varepsilon$), and perform
model-training using Stochastic Gradient Descent (SGD) exactly as in
natural training, except that each training example $x$ is perturbed
adversarially, i.e. replaced by $x + \delta^*$ where $\delta^* \in S$
maximizes the example's loss-contribution.

\paragraph{Explainability:} 
One way to address the well-known lack of explainability of deep
learning models is \textit{feature attribution}, which aims to explain
the output of a model $F(\bfx)$ as an attribution vector $A^F(\bfx)$
of the contributions from the \textit{features} $\bfx$.
There are several feature-attribution techniques in the literature,
such as \textit{Integrated Gradients} (IG)~\cite{Sundararajan2017-rz},
\textit{DeepSHAP}~\cite{lundberg2017unified}, and \textit{LIME}~\cite{Ribeiro2016-nj}.  For such an
explanation to be human-friendly, it is highly desirable
\cite{molnar2019} that the attribution-vector is \textit{sparse}, i.e., only
the features that are truly predictive of the output $F(\bfx)$ should
have significant contributions, and irrelevant or weakly-relevant
features should have negligible contributions.  A sparse attribution
makes it possible to produce a \textit{concise} explanation, where
only the input features with significant contributions are included.
For instance, if the model $F$ is used for a loan approval decision,
then various stakeholders (like customers, data-scientists and
regulators) would like to know the reason for a specific decision in
simple terms.  In practice however, due to artifacts in the training
data or process,
the attribution vector is often not sparse and irrelevant or
weakly-relevant features end up having significant
contributions~\cite{Tan2013-ot}.  Another desirable property of a good
explanation is \textit{stability}: the attribution vector should not
vary significantly within a small local neighborhood of the input $x$.
Similar to the lack of concise explainability, natural training often
results in explanations that lack
stability~\cite{Alvarez-Melis2018-qr}. 

Our paper shows new connections between adversarial
robustness and the above-mentioned desirable properties of explanations, namely
conciseness and stability. Specifically, let $\tilde{F}$ be an
adversarially trained version of a classifier $F$, and for a given input vector $\bfx$ and 
attribution method $A$, let $A^F(\bfx)$ and $A^{\tilde{F}}(\bfx)$ denote the 
corresponding attribution vectors. The central research question this paper addresses is:
\begin{quote}
  Is $A^{\tilde{F}}(\bfx)$ sparser and more stable than 
  $A^F(\bfx)$?
\end{quote}
The main contributions of our paper are as follows:

\noindent
{\it Theoretical Analysis of Adversarial Training}:  Our first set of
results show via a \textit{theoretical} analysis that
$\linfeps$-adversarial training 1-layer networks tends to produce
sparse attribution vectors for IG, which in turn leads to concise explanations.
In particular, under some assumptions, we show (Theorems
\ref{thm-exp-sgd} and \ref{thm:multi_class}) that for a general class of convex loss functions
(which includes popular loss functions used in 1-layer networks, such
as logistic and hinge loss, used for binary or multi-class classification), 
and adversarial perturbations $\delta$
satisfying $||\delta||_\infty \leq \varepsilon$, the weights of "weak"
features are on average more aggressively shrunk toward zero than
during natural training, and the rate of shrinkage is proportional to
the amount by which $\varepsilon$ exceeds a certain measure of the
"strength" of the feature.  This shows that $\linfeps$-adversarial
training tends to produce sparse \textit{weight vectors} in popular
1-layer models.  In Section \ref{sec-attr} we show (Lemma
\ref{lem-1-layer-attr}) a closed form formula for the IG vector of
1-layer models, that makes it clear that in these models, sparseness
of the \textit{weight vector} directly implies sparseness of the
\textit{IG vector}.

\noindent
{\it Empirically Demonstrate Attribution Sparseness}:  In Section \ref{sec-exp} we
\textit{empirically} demonstrate that this "sparsification" effect of
$\linfeps$-adversarial training holds not only for 1-layer networks
(e.g. logistic regression models), but also for Deep Convolutional
Networks used for image classification, and extends beyond IG-based attributions, to 
those based on DeepSHAP.
Specifically, we show this
phenomenon via experiments applying $\linfeps$-adversarial training to
(a) Convolutional Neural Networks on public benchmark image datasets
MNIST \cite{lecun-mnisthandwrittendigit-2010} and Fashion-MNIST
\cite{Xiao2017-zp}, and (b) logistic regression models on the Mushroom
and Spambase tabular datasets from the UCI Data Repository
\cite{Dua:2017}.  In all of our experiments, we find that it is
possible to choose an $\ell_\infty$ bound $\varepsilon$ so that
adversarial learning under this bound produces attribution vectors
that are sparse on average, \textit{with little or no drop in
  performance on natural test data}.  A visually striking example of
this effect is shown in Figure \ref{fig-intro-cifar10} (the Gini
Index, introduced in Section \ref{sec-exp}, measures the sparseness of
the map).

It is natural to wonder whether a traditional
\textit{weight-regularization} technique such as
$\ell_1$-regularization can produce models with sparse attribution vectors.  In
fact, our experiments show that for logistic regression models,
$\ell_1$-regularized training does yield attribution vectors that are on
average significantly sparser compared to attribution vectors from natural
(un-regularized) model-training, and the sparseness improvement is
almost as good as that obtained with $\linfeps$-adversarial training.
This is not too surprising given our result (Lemma
\ref{lem-1-layer-attr}) that implies a direct link between sparseness
of \textit{weights} and sparseness of \textit{IG vectors}, \textit{for
  1-layer models}.  Intriguingly, this does \textit{not} carry over to DNNs:
for multi-layer models (such as the ConvNets we trained for the image
datasets mentioned above) we find that with $\ell_1$-regularization,
the sparseness improvement is significantly inferior to that
obtainable from $\linfeps$-adversarial training (when controlling for
model accuracy on natural test data), as we show in Table
\ref{tab-summary}, Figure \ref{fig-boxplots} and
Figure \ref{fig-varplots}.  
Thus it appears that for DNNs, the \textit{attribution-sparseness} that results from 
adversarial training is not necessarily related to \textit{sparseness of weights}.

{\it Connection between Adversarial Training and Attribution Stability}:  We also show
theoretically (Section \ref{sec-stable}) that training 1-layer networks
naturally, while encouraging stability of explanations (via a suitable
term added to the loss function), is in fact equivalent to adversarial
training.

\begin{figure}[htb]
	\centering 
	\begin{minipage}{\linewidth}
		\hspace{3.2cm} \begin{tiny}{\textbf{Natural Training}}\end{tiny} \hspace{0.5cm} \begin{tiny}{\textbf{Adversarial Training}}\end{tiny}
	\end{minipage}
	\begin{subfigure}{\textwidth}
		\begin{subfigure}{0.15\textwidth}
			\includegraphics[width=\linewidth,bb=0 0 449 464]{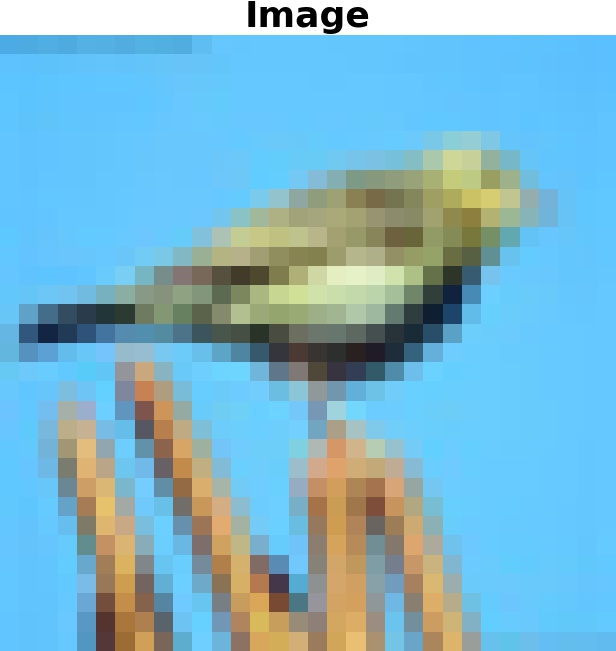} 
			\captionsetup{justification=centering}
			\caption*{}
		\end{subfigure}
		\begin{subfigure}{0.15\textwidth}
			\includegraphics[width=\linewidth,bb=0 0 449 464]{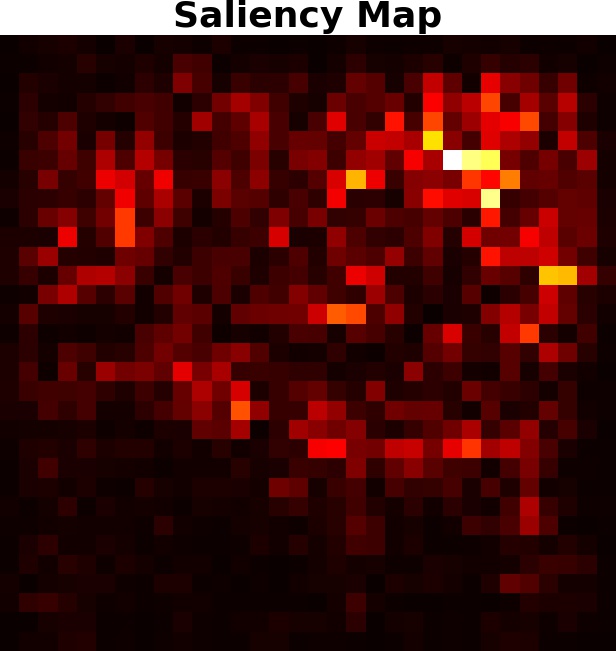}
			\captionsetup{justification=centering}
			\caption*{\begin{small}{Gini: 0.6150}\end{small}}
		\end{subfigure}
		\begin{subfigure}{0.15\textwidth}
			\includegraphics[width=\linewidth,bb=0 0 449 464]{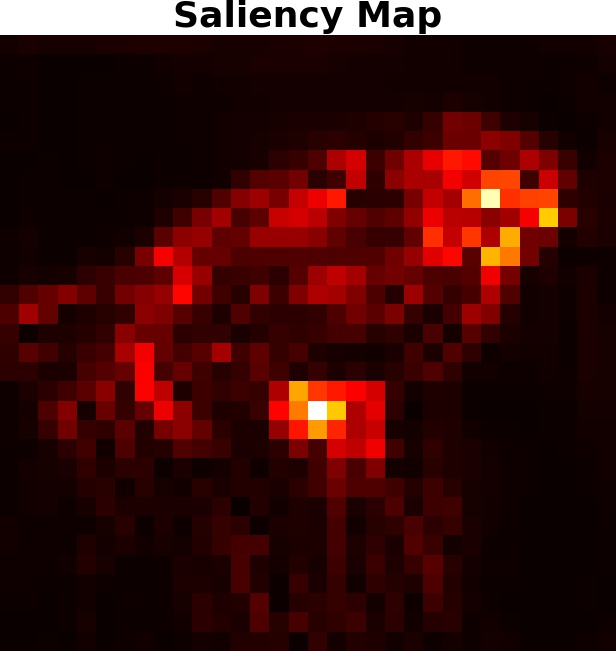} 
			\captionsetup{justification=centering}
			\caption*{\begin{small}{Gini: 0.7084}\end{small}}
		\end{subfigure}
	\end{subfigure}
	
	\caption{Both models correctly predict "Bird", but the IG-based saliency map  of the adversarially trained model is much sparser than that of the naturally trained model.}
	\label{fig-intro-cifar10}    
	
\end{figure}

\section{Setup and Assumptions} \label{sec-notation}

\newcommand{\lossxyw}{\mathcal{L}(\bfx, y; \bfw)}
\newcommand{\deltainf}{||\delta||_\infty}
\newcommand{\deltainfeps}{\Delta_\infty(\varepsilon)}
For ease of understanding, we consider the case of binary classification for the rest of our discussion in the main paper.
We assume there is a distribution 
$\mathcal{D}$ of data points $(\bfx, y)$ where $\bfx \in \R^d$ is an input feature vector, and $y \in \{\pm 1\}$ is its true label\footnote{It is trivial to convert -1/1 labels to 0/1 labels and vice versa}. 
For each $i \in [d]$, the $i$'th component of $\bfx$ represents an input feature, and is denoted by $x_i$.
The model is assumed to have learnable parameters ("weights")
$\bfw \in \R^d$, and for a given data point $(\bfx, y)$,
the \textit{loss} is given by some function  $\lossxyw$. 
\textit{Natural model training}
\footnote{Also referred to as \textit{standard training} by \cite{Madry2017-zz}}
 consists of minimizing the expected loss, known as \textit{empirical risk}:
\begin{equation} \label{eq-nat-loss}
	\E_{(\bfx,y) \sim \calD}[ \lossxyw ].
\end{equation}
We sometimes assume the existence of an $\linfeps$-\textit{adversary}  who may perturb the input example $\bfx$ by adding a vector $\delta \in \R^d$ whose $\linf$-norm is bounded by $\varepsilon$; such a perturbation $\delta$ is referred to as an $\linfeps$-perturbation.
For a given data point $(\bfx, y)$ and a given loss function $\loss(.)$,
an $\linfeps$-\textit{adversarial} perturbation is a $\delta^*$ that maximizes the \textit{adversarial loss} $\loss(\bfx + \delta^*, y; \bfw)$.

\newcommand\aifxu{A_i^F(\bfx, \bfu)}
\newcommand\axu{A_i(\bfx, \bfu)}
\newcommand\afxu{A^F(\bfx, \bfu)}

\newcommand\igifxu{\IG_i^F(\bfx, \bfu)}
\newcommand\igxu{\IG_i(\bfx, \bfu)}
\newcommand\igfxu{\IG^F(\bfx, \bfu)}

\newcommand\shifxu{\SH_i^F(\bfx, \bfu)}
\newcommand\shxu{\SH_i(\bfx, \bfu)}
\newcommand\shfxu{\SH^F(\bfx, \bfu)}

Given a function $F: \Real^d \rightarrow [0,1]$ representing a neural network,
an input vector $\bfx \in \Real^d$, 
and a suitable baseline vector $\bfu \in \Real^d$, 
an \textit{attribution} of the prediction of $F$ at input $\bfx$ relative to $\bfu$ is a vector
$A^F(\bfx, \bfu) \in \Real^d$ whose $i$'th component
$A_i^F(\bfx, \bfu)$
represents the "contribution" of $x_i$ to the prediction $F(\bfx)$. 
A variety of \textit{attribution methods} have been proposed in the literature (see \cite{Arya2019-zy} for a survey),
but in this paper we will focus on two of the most popular ones: 
Integrated Gradients \cite{Sundararajan2017-rz},
and DeepSHAP \cite{lundberg2017unified}.
When discussing a specific attribution method, 
we will denote the IG-based attribution vector as $\igfxu$,
and the DeepSHAP-based attribution vector as $\shfxu$. In all cases we will drop the superscript $F$ and/or the baseline vector $\bfu$ when those are clear from the context.

The aim of \textit{adversarial training}\cite{Madry2017-zz} is to train a model that is \textit{robust} to an $\linfeps$-adversary (i.e. performs well in the presence of such an adversary),
and consists of minimizing the expected  $\linfeps$-adversarial loss:
\begin{equation} \label{eq-adv-loss}
	\E_{(\bfx,y) \sim \calD}
	    [ \max_{\deltainf \leq \varepsilon} 
	    \loss(\bfx + \delta, y; \bfw)
	    ].
\end{equation}
In the expectations (\ref{eq-nat-loss}) and (\ref{eq-adv-loss}) we often drop the subscript under $\E$ when it is clear that the expectation is over $(\bfx, y) \sim \calD$.

Some of our theoretical results make assumptions regarding the form and properties of the loss function $\mathcal{L}$,  the properties of its first derivative.
For the sake of clarify, we highlight these assumptions (with mnemonic names) here for ease of future reference. 

\newtheorem{innercustomass}{Assumption}
\newenvironment{ass}[1]
  {\renewcommand\theinnercustomass{#1}\innercustomass}
  {\endinnercustomass}

\newtheorem{innercustomdef}{Definition}
\newenvironment{defn}[1]
  {\renewcommand\theinnercustomdef{#1}\innercustomdef}
  {\endinnercustomdef}
 
\begin{ass}{LOSS-INC} \label{ass-loss-non-dec}
The loss function is of the form $\lossxyw = g(-y\langle \bfw, \bfx \rangle)$ where $g$ is a \textit{non-decreasing} function.
\end{ass}

\begin{ass}{LOSS-CVX} \label{ass-loss-cvx}
The loss function is of the form $\lossxyw = g(-y\langle \bfw, \bfx \rangle)$ where $g$ is \textit{non-decreasing}, almost-everywhere differentiable, and convex.
\end{ass}

Section \ref{sec-app-loss-fn} in the Supplement shows that these Assumptions are satisfied by popular loss functions such as logistic and hinge loss.
Incidentally, note that for any differentiable function $g$, $g$ is convex if and only if its first-derivative $g'$ is non-decreasing, 
and we will use this property in some of the proofs.


\begin{ass}{FEAT-TRANS}
\label{ass-feat-exp}
For each $i \in [d]$, if $x'_i$ is the feature in the original dataset, 
$x_i$ is the \textbf{translated} version of $x'_i$ defined by 
$x_i = x'_i - [ E(x'_i | y=1) + E(x'_i | y=-1) ] / 2$.
\end{ass}

In Section \ref{sec-app-ass-feat-exp} (Supplement) we show that 
this mild assumption implies that for each feature $x_i$ there is a constant $a_i$ such that
\begin{align} 
\E(x_i | y ) &= a_i y  \label{eq-prop-exp} \\
\E(yx_i) &= \E[ \E(yx_i|y) ] = \E[ y^2 a_i ] = a_i \label{eq-eyxi} \\
\E(yx_i|y) &= y\E[ x_i | y ] = y^2 a_i = a_i \label{eq-eyxigiveny} 
\end{align}
For any $i \in [d]$, we can think of $\E(yx_i)$ as 
the \textit{degree of association}\footnote{When the features are standardized to have mean 0, $\E(y x_i)$ is in fact the covariance of $y$ and $x_i$.}
 between feature $x_i$ and label $y$.
Since $\E(yx_i) = a_i$ (Eq. \ref{eq-eyxi}), 
 we refer to $a_i$ as the \textit{directed strength}\footnote{This is related to the feature "robustness" notion introduced in \cite{Ilyas2019-kp}}
  of feature $x_i$, and $|a_i|$ is referred to as the \textit{absolute strength} of $x_i$. 
  In particular when $|a_i|$ is large (small) we say that $x_i$ is a \textit{strong} (\textit{weak}) feature.  
  
\subsection{Averaging over a group of features}  \label{sec-av}
For our main theoretical result (Theorem \ref{thm-exp-sgd}), we need a notion of weighted average defined as follows:

\newcommand\barqws{q^{\w}_S}
\newcommand\baraws{a^{\w}_S}
\newcommand\barnablaws{\nabla^{\w}_S}
\begin{defn}{WTD-AV}
Given a quantity $q_i$ defined for each feature-index $i \in [d]$, 
a subset $S \subset [d]$ (where $|S| \geq 1$) of feature-indices, and 
a feature weight-vector $\w$ with $w_i \neq 0$ for at least one $i \in S$, 
the \textbf{$\w$-weighted average of $q$ over $S$} is defined as 
\begin{equation} \label{eq-wtd-av}
	\barqws := 
 	\frac{\sum_{i \in S} w_i q_i}
 	     {\sum_{i \in S} |w_i|}       
\end{equation}
\end{defn}

Note that the quantity $w_i q_i$ can be written as $|w_i| \sgn(w_i) q_i$, so 
$\barqws$ is essentially a $|w_i|$-weighted average of $\sgn(w_i) q_i$ over $i \in S$. 

For our result we will use the above $\w$-weighted average definition for two particular quantities $q_i$. The first one is $q_i := a_i$, the directed strength of feature $x_i$ (Eq. \ref{eq-eyxi}). 
Intuitively, the quantity $\sgn(w_i) \E(y x_i) = a_i \sgn(w_i) $ captures 
the \textit{aligned strength} of feature $x_i$ in the following sense: 
if this quantity is large and positive (large and negative), it indicates both that 
the current weight $w_i$ of $x_i$ is aligned (misaligned) with the directed strength of $x_i$, 
\textit{and} that this directed strength is large. 
Thus $\baraws$ represents an average of the aligned strength over the feature-group $S$.

\newcommand\delbar{\overline{\Delta}}
\newcommand\delbarws{\overline{\Delta}^{\w}_S}
The second quantity for which we define the above $\w$-weighted average is
$q_i := \delbar_i$, where $\delbar_i := - \E [\partial \loss / \partial w_i]$
represents the \textit{expected} SGD update (over random draws from the data distribution)
of the weight $w_i$, given the loss function $\loss$, for a unit learning rate
(details are in the next Section).
The quantity $\sgn(w_i) \delbar_i$ has a natural interpretation, analogous to the above interpretation of $a_i \sgn(w_i)$: 
a large positive (large negative) value of 
$\sgn(w_i) \delbar_i$ corresponds to an \textit{large expansion} (\textit{large shrinkage}), in expectation,  
of the weight $w_i$ away from zero magnitude (toward zero magnitude).
Thus the $\w$-weighted average $\delbarws$ represents the $|w_i|$-weighted average of this effect over the feature-group $S$.


\section{Analysis of SGD Updates in Adversarial Training}
\label{sec-sgd-updates}
One way to understand the characteristics of the weights in an adversarially-trained neural network model,
is to analyze how the weights  evolve during adversarial training 
under Stochastic Gradient Descent (SGD) optimization. 
One of the main results of this work is a theoretical characterization of the weight updates during 
a single SGD step, when applied to a
randomly drawn data point $(\bfx, y) \sim \calD$ 
that is subjected to an $\linfeps$-adversarial perturbation.

Although the holy grail would be to do this for general DNNs
(and we expect this will be quite difficult)
we take a first step in this direction by analyzing 
\textit{single-layer networks} for binary or multi-class classification, 
 where each weight is associated with an input feature.
Intriguingly, our results (Theorem \ref{thm-exp-sgd} for binary classification and \ref{thm:multi_class} for multi-class classification in the Supplement) show that for these models, 
$\linfeps$-adversarial training tends to selectively reduce the weight-magnitude of \textit{weakly relevant} 
or \textit{irrelevant} features, and 
does so much more aggressively than natural training.
In other words, natural training can result in models where many weak features have significant weights, 
whereas adversarial training would tend to push most of these weights close to zero. The resulting model weights would thus be more sparse, 
and the corresponding IG-based attribution vectors would on average be more sparse as well
(since in linear models,  sparse  weights imply sparse IG vectors; 
this is a consequence of Lemma \ref{lem-1-layer-attr})
compared to naturally-trained models.

Our experiments (Sec. \ref{sec-exp}) show  that indeed
for logistic regression models 
(which satisfy the conditions of Theorem \ref{thm-exp-sgd}),
adversarial training leads to sparse IG vectors.
Interestingly, our extensive experiments with Deep Convolutional Neural Networks 
on public image datasets demonstrate that 
 this phenomenon extends to DNNs as well, and to attributions based on DeepSHAP, even though our theoretical results only apply to 1-layer networks and IG-based attributions.

As a preliminary, it is easy to show the following expressions related to the $\linfeps$-adversarial perturbation $\delta^*$
(See Lemmas \ref{lem-adv-closed-form} and \ref{lem-grad-loss-adv} 
in Section \ref{sec-app-loss-grad} of the Supplement):
For loss functions satisfying Assumption
\ref{ass-loss-non-dec}, the $\linfeps$-adversarial perturbation $\delta^*$ is given by:
\begin{equation} \label{eq-adv-closed-form}
\delta^* = -y \sign(\bfw) \varepsilon,
\end{equation}
the corresponding $\linfeps$-adversarial loss is 
\beq \label{eq-linfeps-adv-loss}
\loss(\bfx + \delta^*, y; \; \bfw) = 
    g(\varepsilon ||\bfw||_1 - y \wdotx),
\eeq
and the gradient of this loss w.r.t. a weight $w_i$ is 
\begin{multline}
	\label{eq-grad-adv}
\frac{\partial \loss(\bfx + \delta^*, y; \; \bfw)}{\partial w_i}
           =  \\
-g'(\varepsilon ||\bfw||_1 - y \langle \bfw, \bfx\rangle) \;
	   (y x_i - \sgn(w_i)\varepsilon). 
\end{multline}
In our main result, the expectation of the $g'$ term in (\ref{eq-grad-adv}) plays an important role, so we will use the following notation:
\begin{equation}
	\label{eq-gprimebar}
	\gprimebar := \E \big[ 
	        g'(\varepsilon ||\bfw||_1 - 
	             y \langle \bfw, \bfx\rangle) 
	              \big],
\end{equation}
and by Assumption \ref{ass-loss-non-dec},  \textbf{$\gprimebar$ is non-negative}.

Ideally, we would like to understand the nature of the weight-vector $\bfw^*$ that minimizes the expected adversarial loss 
(\ref{eq-adv-loss}). 
This is quite challenging, 
so rather than analyzing the \textit{final} optimum of (\ref{eq-adv-loss}), 
we instead analyze how an SGD-based optimizer for 
(\ref{eq-adv-loss}) \textit{updates} the model weights $\bfw$.
We assume an idealized SGD process:  
(a) a data point $(\bfx, y)$ is drawn  from distribution $\calD$,
(b) $\bfx$ is replaced by $\bfx' = \bfx + \delta^*$
where $\delta^*$ is an $\linfeps$-adversarial perturbation 
with respect to the loss function $\loss$,
(c) each weight $w_i$ is updated by an amount 
$\deltawi = -\partial \loss(\bfx', y; \bfw) / \partial w_i$
(assuming a unit learning rate to avoid notational clutter).
We are interested in the \textit{expectation} 
$\delbar_i := \E \deltawi = -\E[ \partial \loss(\bfx', y; \w)/\partial w_i]$, 
in order to understand how a weight $w_i$ evolves \textit{on average} during a single SGD step.
Where there is a \textit{conditionally independent} feature subset $S \in [d]$
(i.e. the features in $S$ are conditionally independent of the rest given the label $y$),
our main theoretical result characterizes the behavior of $\delbar_i$ for $i \in S$,
and the corresponding $\w$-weighted average $\delbarws$:



\begin{restatable}[Expected SGD Update in Adversarial Training]{thm}{thmsgd} 
\label{thm-exp-sgd}
For any loss function $\loss$ satisfying Assumption \ref{ass-loss-cvx},
a dataset $\calD$ satisfying Assumption \ref{ass-feat-exp},
a subset $S$ of features that are conditionally independent of the rest given the label $y$,
if a data point $(\bfx, y)$ is randomly drawn from $\calD$, 
and $\bfx$ is perturbed to $\bfx' = \bfx + \delta^*$, where $\delta^*$ is an $\linfeps$-adversarial perturbation, 
then during SGD using the $\linfeps$-adversarial loss $\loss(\bfx', y; \bfw)$, 
the expected weight-updates $\delbar_i := \E \Delta w_i$ for $i \in S$ 
and the corresponding $\w$-weighted average $\delbarws$ satisfy the following properties:
\begin{enumerate}
	\item If $w_i = 0\; \forall i \in S$, then for each $i \in S$,
\begin{equation} \label{eq-thm-wzero}
	\delbar_i  = \gprimebar \; a_i,
\end{equation} 
	\item and otherwise, 
\begin{equation} \label{eq-thm-wnonzero}
	\delbarws \leq \gprimebar (\baraws - \varepsilon) ,
\end{equation}
and equality holds in the limit as $w_i \rightarrow 0\; \forall i \in S$,
\end{enumerate}
where 
$\gprimebar$ is the expectation in (\ref{eq-gprimebar}),
$a_i = \E(x_i y)$ is the directed strength of feature $x_i$ 
from Eq. (\ref{eq-eyxi}), and $\baraws$ is the corresponding 
$\w$-weighted average over $S$.
\end{restatable}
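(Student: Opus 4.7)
The plan is to begin from the closed-form gradient of the $\linfeps$-adversarial loss in Eq. (\ref{eq-grad-adv}), which rearranges to
$$\delbar_i \;=\; \E\!\left[g'\bigl(\varepsilon\,||\bfw||_1 - y\langle\bfw,\bfx\rangle\bigr)\,\bigl(yx_i - \sgn(w_i)\,\varepsilon\bigr)\right].$$
Both cases will be established by exploiting the conditional independence of the features in $S$ given the label $y$ in this expression; Assumption \ref{ass-loss-cvx} enters only in Case 2 through the monotonicity of $g'$.

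For Case 1, where $w_i=0$ for every $i\in S$, I would observe that the $\sgn(w_i)\varepsilon$ term vanishes and that the argument of $g'$ reduces to $\varepsilon\,||\bfw||_1 - y\langle\bfw_{\bar S},\bfx_{\bar S}\rangle$, depending only on $y$ and the features outside $S$. Conditioning on $y$, the conditional independence of $x_i$ (for $i\in S$) from $\bfx_{\bar S}$ lets the expectation factor; Assumption \ref{ass-feat-exp} supplies $\E[x_i\mid y]=a_i y$, and since $y^2=1$ this collapses to $\delbar_i=\gprimebar\,a_i$.

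For Case 2, the strategy is to aggregate before taking weighted averages: a direct calculation yields
$$\sum_{i\in S}w_i\delbar_i \;=\; \E\!\bigl[g'(\cdot)\,y\langle\bfw_S,\bfx_S\rangle\bigr] \;-\; \varepsilon\,\gprimebar\sum_{i\in S}|w_i|,$$
where $g'(\cdot)$ abbreviates $g'(\varepsilon||\bfw||_1 - y\langle\bfw,\bfx\rangle)$. Dividing by $\sum_{i\in S}|w_i|$, the claim reduces to showing the first expectation on the right is at most $\gprimebar\sum_{i\in S}w_i a_i = \gprimebar\cdot\baraws\sum_{i\in S}|w_i|$. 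This negative-correlation bound is the technical heart of the proof and the step I expect to be the main obstacle. I would argue it by conditioning on the pair $(y,\bfx_{\bar S})$: the argument of $g'$ is then an affine function of $y\langle\bfw_S,\bfx_S\rangle$ with coefficient $-1$, so (using convexity of $g$, which makes $g'$ non-decreasing) $g'(\cdot)$ is a non-increasing function of the scalar $y\langle\bfw_S,\bfx_S\rangle$. Chebyshev's correlation inequality, applied in this conditional universe where both quantities depend on the same univariate random variable in opposite monotone fashions, yields a non-positive conditional covariance. Taking the outer expectation via the tower property, and using $\E[y\langle\bfw_S,\bfx_S\rangle\mid y,\bfx_{\bar S}]=\sum_{i\in S}w_i a_i$ (again by conditional independence together with Assumption \ref{ass-feat-exp}), translates this into the desired bound $\delbarws\leq\gprimebar(\baraws-\varepsilon)$.

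For the equality claim, I would note that as $\bfw_S\to 0$ the term $y\langle\bfw_S,\bfx_S\rangle$ disappears from the argument of $g'$, so $g'(\cdot)$ becomes measurable with respect to $(y,\bfx_{\bar S})$ alone and is thereby conditionally independent of $y\langle\bfw_S,\bfx_S\rangle$; the Chebyshev step collapses to an equality in the limit and the bound is attained. The key hurdle throughout is arranging the right conditioning so that the inequality between $g'(\cdot)$ and $y\langle\bfw_S,\bfx_S\rangle$ reduces to a one-dimensional monotonicity statement; once that is in place, the remaining steps are bookkeeping with the factorization identities from Assumption \ref{ass-feat-exp} and the tower property of conditional expectation.
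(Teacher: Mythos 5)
Your proposal is correct and follows essentially the same route as the paper's proof: Case 1 by factoring the conditional expectation given $y$, and Case 2 by aggregating the weighted gradients over $S$, separating the $\varepsilon\gprimebar\sum_{i\in S}|w_i|$ term, and bounding the remaining cross term $\E[g'(\cdot)\,y\langle \bfw_S,\bfx_S\rangle]$ by $\gprimebar\sum_{i\in S}w_i a_i$ via a negative-correlation argument that exploits the monotonicity of $g'$ and the conditional independence of $S$, with equality in the limit because $g'(\cdot)$ loses its dependence on the $S$-features. The only (immaterial) difference is that the paper packages the correlation bound as a standalone lemma conditioning on $y$ alone and proved by the pointwise inequality $(Z-\E Z)(f(Z,V)-f(\E Z,V))\le 0$, whereas you condition on the finer $\sigma$-algebra of $y$ together with the features outside $S$ and invoke the univariate Chebyshev correlation inequality; the two mechanisms are interchangeable here.
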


For space reasons, a detailed discussion of the implications of this result is presented in 
 Sec. \ref{sec-implication} of the Supplement, but here we note the following.
 Recalling the interpretation of the $\w$-weighted averages $\baraws$ and $\delbarws$ in Section \ref{sec-av},
%
we can interpret the above result as follows. 
 For any conditionally independent feature subset $S$, 
 if the weights of all features in $S$ are zero, 
 then by Eq. (\ref{eq-thm-wzero}), an SGD update causes, on average, 
 each of these weights $w_i$ to grow (from 0) in a direction consistent with the directed feature-strength $a_i$ (since $\gprimebar \geq 0$ as noted above). 
 If at least one of the features in $S$ has a non-zero weight,  
 (\ref{eq-thm-wnonzero}) implies $\delbarws < 0$, i.e., an aggregate shrinkage of the weights of features in $S$, if either of the following hold: (a) 
 $\baraws < 0$, i.e., the weights of features in $S$ are mis-aligned on average,  or 
(b) the weights of features in $S$ are aligned on average, i.e., $\baraws$ is positive, but dominated by $\varepsilon$, i.e. the features $S$ are \textit{weakly correlated} with the label. In the latter case the weights of features in $S$ are (in aggregate and in expectation) aggressively pushed toward zero, 
and this aggressiveness is proportional to the extent to which $\varepsilon$ dominates 
$\baraws$.
A partial generalization of the above result for the multi-class setting 
(for a single conditionally-independent feature)
is presented in Section \ref{sec:multi-class} (Theorem \ref{thm:multi_class}) of the Supplement.



\section{\uppercase{Feature Attribution using Integrated Gradients}}
\label{sec-attr}
Theorem \ref{thm-exp-sgd} showed that $\linfeps$-adversarial training tends to shrink the \textit{weights} of features that are "weak"
(relative to $\varepsilon$).
We now show a link between weights and \textit{explanations},
specifically explanations in the form of a vector of feature-attributions given by the \textit{Integrated Gradients} (IG) method \cite{Sundararajan2017-rz}, which is defined as follows:
Suppose $F: \mathbb{R}^d \rightarrow \mathbb{R}$ is a real-valued function of an input vector. For example $F$ could represent the output of a neural network, or even a loss function
$\loss(\bfx, y; \bfw)$ when the label $y$ and weights $\bfw$ are held fixed.
Let $\bfx \in \R^d$ be a specific input, and 
$\bfu \in \mathbb{R}^d$ be a baseline input. 
The IG is defined as the path integral of the gradients along the straight-line path from the baseline $\bfu$ to the input $\bfx$. The IG along the $i$'th dimension for an input $\bfx$ and baseline $\bfu$ is defined as:
\begin{equation}
	\igifxu := (x_i - u_i) \times 
	\int_{\alpha=0}^1 \partial_i F(\bfu + \alpha(\bfx- \bfu)) d\alpha,
	\label{eq-ig-def}
\end{equation}
where $\partial_i F(\bfz)$ denotes the gradient of $F(\bfv)$ along the $i$'th dimension, at $\bfv=\bfz$. The vector of all IG components $\igifxu$ is denoted as $\igfxu$.
Although we do not show $\bfw$ explicitly as an argument in 
the notation $\igfxu$, it should be understood that the IG depends on the model weights $\bfw$ since the function $F$ depends on $\bfw$.

The following Lemma (proved in Sec. \ref{app-lem-ig} of the Supplement)
shows a closed form exact expression for the $\igfxu$ when $F(\bfx)$ is of the form 
\begin{equation} \label{eq-1-layer-fn}
		F(\bfx) = A(\<\bfw, \bfx>),
\end{equation}
where  
$\bfw \in \R^d$ is a vector of weights, 
  $A$ is a  differentiable scalar-valued function, and 
$\<\bfw, \bfx>$ denotes the dot product of $\bfw$ and $x$. 
Note that this form of $F$ could represent a single-layer neural network 
with any differentiable activation function
(e.g., logistic (sigmoid) activation $A(\bfz) = 1/[1 + \exp(-\bfz)]$ or 
Poisson activation $A(\bfz) = \exp(\bfz)$),
or a differentiable loss function, such as those that satisfy Assumption
\ref{ass-loss-non-dec} for a fixed label $y$ 
and weight-vector $\bfw$.
For brevity, we will refer to a
function of the form (\ref{eq-1-layer-fn}) as representing a
"1-Layer Network", with the understanding that it could equally well represent a suitable loss function.

\begin{restatable}[IG Attribution for 1-layer Networks]{lem}{lemig} \label{lem-1-layer-attr}
	If  $F(\bfx)$ is computed by a 1-layer  network (\ref{eq-1-layer-fn}) with weights vector $\bfw$,
then the Integrated Gradients for all dimensions of $\bfx$ relative to a baseline $\bfu$ are given by:
\begin{equation}
	\igfxu = [ F(\bfx) - F(\bfu) ] \frac{(\bfx - \bfu) \odot \bfw }{ \< \bfx - \bfu, \bfw >}, \label{eq-ig-1-layer}
\end{equation}
where the $\odot$ operator denotes the entry-wise product of vectors.
\end{restatable}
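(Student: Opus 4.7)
The plan is to evaluate the integral in the definition of $\igifxu$ directly by exploiting the one-dimensional structure that $F(\bfx) = A(\<\bfw,\bfx>)$ imposes on the gradient along the straight-line path from $\bfu$ to $\bfx$. The key observation is that along this path, the argument of $A$ varies affinely in the integration parameter $\alpha$, so the integrand is a pure derivative and the fundamental theorem of calculus collapses the integral to a difference of values of $A$.

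First I would apply the chain rule to $F(\bfv) = A(\<\bfw,\bfv>)$ to obtain
\begin{equation*}
\partial_i F(\bfv) = A'(\<\bfw,\bfv>)\, w_i,
\end{equation*}
and substitute this into the definition (\ref{eq-ig-def}) with $\bfv = \bfu + \alpha(\bfx - \bfu)$, which gives
\begin{equation*}
\igifxu = (x_i - u_i)\, w_i \int_0^1 A'\bigl(\<\bfw,\bfu> + \alpha \<\bfw, \bfx - \bfu>\bigr)\, d\alpha.
\end{equation*}

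Next I would introduce $h(\alpha) := \<\bfw,\bfu> + \alpha \<\bfw,\bfx - \bfu>$ so that $h'(\alpha) = \<\bfw,\bfx - \bfu>$ is a constant in $\alpha$. Multiplying and dividing by this constant (assuming it is nonzero, the case the lemma implicitly treats), the integrand becomes $\frac{1}{\<\bfw,\bfx-\bfu>}\frac{d}{d\alpha} A(h(\alpha))$, and the fundamental theorem of calculus yields
\begin{equation*}
\int_0^1 A'(h(\alpha))\, d\alpha = \frac{A(h(1)) - A(h(0))}{\<\bfw,\bfx - \bfu>} = \frac{F(\bfx) - F(\bfu)}{\<\bfw,\bfx - \bfu>}.
\end{equation*}
Plugging this back in gives $\igifxu = (x_i - u_i)\, w_i \,[F(\bfx)-F(\bfu)] / \<\bfx - \bfu, \bfw>$, which is exactly the $i$'th entry of the claimed vector expression (\ref{eq-ig-1-layer}).

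There is no serious obstacle here; the only thing to be cautious about is the degenerate case $\<\bfx - \bfu, \bfw> = 0$, where both sides of (\ref{eq-ig-1-layer}) should be interpreted as a limit (the integrand becomes the constant $A'(\<\bfw,\bfu>)$ and yields $\igifxu = (x_i-u_i) w_i A'(\<\bfw,\bfu>)$, consistent with L'Hôpital applied to the claimed formula). I would mention this edge case briefly and otherwise present the proof as the two-line chain-rule-plus-FTC calculation above.
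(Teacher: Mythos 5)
Your proposal is correct and follows essentially the same route as the paper's proof: chain rule to get $\partial_i F(\bfv) = w_i A'(\<\bfw,\bfv>)$, then the observation that $A'$ evaluated along the affine path is a pure derivative in $\alpha$ divided by the constant $\<\bfx-\bfu,\bfw>$, and the fundamental theorem of calculus collapses the integral to $[F(\bfx)-F(\bfu)]/\<\bfx-\bfu,\bfw>$. Your explicit handling of the degenerate case $\<\bfx-\bfu,\bfw>=0$ is a small bonus the paper omits, but it does not change the argument.
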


Thus for 1-layer networks, the IG of each feature is essentially proportional to the feature's fractional contribution to the 
logit-change $\< \bfx - \bfu, \bfw >$.
This makes it clear that in such models, if the weight-vector $\bfw$ is sparse, then the IG vector will also be correspondingly sparse.
%

%
%


\section{Training with Explanation Stability is equivalent to  Adversarial Training} \label{sec-stable}

Suppose we use the IG method described in Sec. \ref{sec-attr} 
as an explanation for the output of a model $F(\bfx)$ on a specific input $\bfx$.
A desirable property of an explainable model is that the explanation
for the value of $F(\bfx)$ is \textit{stable}\cite{Alvarez-Melis2018-qr}, i.e., does not change much under small perturbations of the input $\bfx$.
One way to formalize this is to say the  following \textit{worst-case} $\ell_1$-norm of the change in $\IG$  should be small:
\beq \label{eq-max-ig-change}
\max_{\bfx' \in N(\bfx, \varepsilon)}
||\IG^F(\bfx', \bfu) -  \IG^F(\bfx, \bfu)||_1,
\eeq
where $N(\bfx, \varepsilon)$ denotes a suitable $\varepsilon$-neighborhood of $\bfx$, and $\bfu$ is an appropriate 
baseline input vector.
If the model $F$ is a single-layer neural network,
it would be a function of $\<\bfw, \bfx>$ for some weights $\bfw$,
and typically when training such networks the loss  
is a function of $\<\bfw, \bfx>$ as well,
so we would not change the essence of (\ref{eq-max-ig-change})  much if instead of $F$ in each IG, we use $\lossxyw$ for a fixed $y$; let us denote this function by $\loss_y$.
\newcommand\lossy{\loss_y}
Also intuitively, 
$||\IG^{\lossy}(\bfx', \bfu) - \IG^{\lossy}(\bfx, \bfu)||_1$
is not too different from 
$||\IG^{\lossy}(\bfx', \bfx)||_1$.
These observations motivate the following definition of \textit{Stable-IG Empirical Risk}, which is a modification of the usual empirical risk 
(\ref{eq-nat-loss}), with a regularizer to encourage stable IG explanations:
\begin{multline} \label{eq-nat-loss-stable}
	\E_{(\bfx,y) \sim \calD}
	\Big[ \;
	\lossxyw \;+\; \\
	  \max_{||\bfx' - \bfx||_\infty \leq \varepsilon}
	     || \IG^{\lossy}(\bfx, \bfx') ||_1 	 
	 \; \Big].
\end{multline}

The following somewhat surprising result is proved in Section \ref{app-thm-stable} of the Supplement.

\begin{restatable}[Equivalence of Stable IG and Adversarial Robustness]{thm}{thmstable} \label{thm-stable}
For loss functions $\lossxyw$ 
satisfying Assumption \ref{ass-loss-cvx},
the augmented loss inside the expectation 
(\ref{eq-nat-loss-stable}) equals the 
$\linfeps$-adversarial loss inside the expectation 
(\ref{eq-adv-loss}), i.e.
\begin{multline}
 \label{eq-stable-robust}
	\lossxyw \;+\;
	 \max_{||\bfx' - \bfx||_\infty \leq \varepsilon}
	     || \IG^{\lossy}(\bfx, \bfx') ||_1 
 \;\;=   \\
\max_{\deltainf \leq \varepsilon} 
	    \loss(\bfx + \delta, y; \bfw) 
\end{multline}
\end{restatable}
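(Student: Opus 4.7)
The plan is to reduce everything to a one-variable convexity inequality by applying Lemma \ref{lem-1-layer-attr} to $\lossy$ viewed as a 1-layer function of $\bfx$. Specifically, $\lossy(\bfx) = g(-y\langle \bfw,\bfx\rangle) = g(\langle -y\bfw, \bfx\rangle)$, so the lemma applies with effective weights $-y\bfw$. Working out the formula (the $-y$ factors cancel between numerator and denominator since the IG formula has $\bfw$ in both), I would get
\[
\IG^{\lossy}(\bfx, \bfx') \;=\; \bigl[\lossy(\bfx) - \lossy(\bfx')\bigr]\,\frac{(\bfx-\bfx')\odot \bfw}{\langle \bfx-\bfx',\bfw\rangle}.
\]
Taking entrywise absolute values and summing,
\[
\|\IG^{\lossy}(\bfx,\bfx')\|_1 \;=\; \frac{|\lossy(\bfx) - \lossy(\bfx')|}{|\langle \bfx-\bfx',\bfw\rangle|}\sum_{i}|(x_i - x_i')w_i|.
\]

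Next I would change variables to $\delta := \bfx'-\bfx$ with $\|\delta\|_\infty \le \varepsilon$, and set $t := -y\langle\bfw,\bfx\rangle$ and $s := -y\langle\bfw,\delta\rangle$, so $\lossy(\bfx) = g(t)$ and $\lossy(\bfx') = g(t+s)$. In these variables the expression becomes
\[
\|\IG^{\lossy}(\bfx,\bfx')\|_1 \;=\; \frac{|g(t+s)-g(t)|}{|s|}\sum_i |w_i\delta_i|.
\]
I would then bound each factor separately. For the sum, $\sum_i|w_i\delta_i|\le \varepsilon\|\bfw\|_1$ by Hölder. For the difference quotient, I would invoke Assumption \ref{ass-loss-cvx}: since $g$ is convex and non-decreasing, the chord slope $s\mapsto \frac{g(t+s)-g(t)}{s}$ is non-negative and non-decreasing on $\mathbb{R}\setminus\{0\}$, and $|s|\le\sum_i|w_i\delta_i|\le \varepsilon\|\bfw\|_1=:S$, so $\frac{|g(t+s)-g(t)|}{|s|}\le \frac{g(t+S)-g(t)}{S}$. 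Multiplying gives $\|\IG^{\lossy}(\bfx,\bfx')\|_1 \le g(t+S) - g(t)$.

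To see this upper bound is attained, I would plug in the adversarial perturbation $\delta^{*}=-y\,\mathrm{sgn}(\bfw)\,\varepsilon$ from Eq. (\ref{eq-adv-closed-form}): this simultaneously drives $s$ up to $S$ and saturates the Hölder inequality (since $|\delta^{*}_i|=\varepsilon$ for every $i$), giving equality on both factors. Hence $\max_{\|\delta\|_\infty\le\varepsilon}\|\IG^{\lossy}(\bfx,\bfx')\|_1 = g(t+S) - g(t)$. Adding $\lossxyw = g(t)$ to both sides yields $g(t+S) = g(\varepsilon\|\bfw\|_1 - y\langle\bfw,\bfx\rangle)$, which by Eq. (\ref{eq-linfeps-adv-loss}) is exactly $\max_{\|\delta\|_\infty\le\varepsilon}\loss(\bfx+\delta,y;\bfw)$, as required.

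The main subtlety — and the only place convexity is genuinely used — is step 3: the $\|\IG\|_1$ expression couples a chord slope of $g$ with a Hölder-style sum, and one must verify that both are maximized by the same $\delta$. Monotonicity of $g$ alone would not suffice; convexity is what makes the chord slope monotone in $|s|$ and lets the two bounds be activated simultaneously at $\delta^{*}$. Everything else is a direct application of Lemma \ref{lem-1-layer-attr} and the closed-form adversarial loss.
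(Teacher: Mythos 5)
Your proof is correct and follows essentially the same route as the paper's: apply Lemma \ref{lem-1-layer-attr} to $\lossy$, reduce $\|\IG^{\lossy}(\bfx,\bfx')\|_1$ to a chord-slope of $g$ times a weighted sum, use convexity to push the slope to its extreme, and recognize the result as $g(-y\langle\bfw,\bfx\rangle+\varepsilon\|\bfw\|_1)$ via Eq.~(\ref{eq-linfeps-adv-loss}). If anything, your version is slightly more careful than the paper's at the maximization step, since you bound the two factors separately and then verify that $\delta^*=-y\,\mathrm{sgn}(\bfw)\,\varepsilon$ saturates both bounds simultaneously, rather than asserting directly that the extremal $\delta$ maximizes the product.
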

This implies that for loss functions satisfying Assumption 
\ref{ass-loss-cvx},  minimizing the Stable-IG Empirical Risk 
(\ref{eq-nat-loss-stable}) is equivalent to 
minimizing the expected $\linfeps$-adversarial loss.
In other words, for this class of loss functions, 
\textit{natural model training while encouraging IG stability 
is equivalent to $\linfeps$-adversarial training!
}
Combined with Theorem \ref{thm-exp-sgd} and the corresponding experimental results in Sec \ref{sec-exp}, this equivalence implies that, for this class of loss functions, and data distributions satisfying Assumption \ref{ass-feat-exp}, the explanations for the models produced by $\linfeps$-adversarial training are both \textit{concise} (due to the sparseness of the models), and \textit{stable}.

\section{Experiments}
\label{sec-exp}

\subsection{Hypotheses}
Recall that one implication of Theorem \ref{thm-exp-sgd} is the following:
For 1-layer networks where the loss function satisfies Assumption 
\ref{ass-loss-cvx}, 
$\linfeps$-adversarial training tends to more-aggressively prune the \textit{weight-magnitudes} of "weak" features 
compared to natural training. 
In Sec. \ref{sec-attr} we observed that a consequence of Lemma \ref{lem-1-layer-attr} is that for 1-layer models the sparseness of the weight vector implies 
sparseness of the IG vector.
Thus a reasonable conjecture is that, for 1-layer networks, $\linfeps$-adversarial training leads to models with sparse attribution vectors in general (whether using IG or a different method, such as DeepSHAP).
We further conjecture that this sparsification phenomenon extends to practical 
multi-layer Deep Neural Networks, not just 1-layer networks, and that this benefit can be realized without significantly impacting accuracy on natural test data.
Finally, we hypothesize that the resulting sparseness  of \textit{attribution vectors} 
is better than what can be achieved by a traditional \textit{weight  regularization} 
technique such as L1-regularization, for a comparable level of natural test accuracy.

\subsection{Measuring Sparseness of an Attribution Vector}
For an attribution method $A$, we quantify the sparseness of the attribution vector $\afxu$ using the 
\textit{Gini Index} applied to the vector of absolute values $\afxu$.
For a vector $\bfv$ of non-negative values, the Gini Index, denoted $G(\bfv)$
(defined formally in Sec. \ref{app-gini} in the Supplement),
is a metric for sparseness of $\bfv$ that is known \cite{Hurley2009-iw} to satisfy a number of desirable properties,
and has been used to quantify sparseness of weights in a neural network \cite{Guest2017-tf}.
The Gini Index by definition lies in [0,1], and a higher value indicates more sparseness.

Since the model $F$ is clear from the context, and the baseline vector $\bfu$ are fixed for a given dataset, 
we will denote the attribution vector 
on input $\bfx$ simply as $A(\bfx)$, 
and our measure of sparseness is $G(| A(\bfx)|)$, 
which we denote for brevity as $G[A](\bfx)$, and refer to informally as the \textit{Gini of $A$}, 
where $A$ can stand for $\IG$ (when using IG-based attributions) or $\SH$ (when using DeepSHAP for attribution).
As mentioned above, one of our hypotheses is that the   sparseness of attributions of models produced by $\linfeps$-adversarial training is much better than what can be achieved by natural training using $\ell_1$-regularization, for a comparable level of accuracy. 
To verify this hypothesis we will compare the sparseness of attribution vectors resulting from three types of models:
(a) n-model: \textit{naturally-trained} model with no adversarial perturbations and no $\ell_1$-regularization,
(b) a-model: \textit{$\linfeps$-adversarially trained} model, 
and
(c) l-model: naturally trained model with \textit{$\ell_1$-regularization} strength $\lambda > 0$.
For an attribution method $A$, we denote the  Gini indices $G[A](\bfx)$ resulting from these models respectively as 
$G^n[A](\bfx)$, 
$G^a[A](\bfx; \; \varepsilon)$ and 
$G^l[A](\bfx; \; \lambda)$.

In several of our datasets, individual feature vectors are already quite sparse: for example in the MNIST dataset, most of the area consists of black pixels, and in the Mushroom dataset, after 1-hot encoding the 22 categorical features, the resulting 120-dimensional feature-vector is sparse.
On such datasets, even an n-model can achieve a "good" level of sparseness of attributions in the absolute sense, i.e. $G^n[A](\bfx)$ 
can be quite high.
Therefore for all datasets we compare the 
\textit{sparseness improvement}
resulting from an a-model relative to an n-model,
with that from an l-model relative to a n-model.
Or more precisely, we will compare the two quantities defined below, for a given attribution method A:
\begin{align} 
	dG^a[A](\bfx; \; \varepsilon) &:= G^a[A](\bfx; \; \varepsilon) - G^n[A](\bfx), 
	\label{eq-dgiga}
	\\
    dG^l[A](\bfx; \; \lambda) &:= G^l[A](\bfx; \; \lambda) - G^n[A](\bfx).
    	\label{eq-dgigl}
\end{align}    
The above quantities define the IG sparseness improvements for a \textit{single} example $\bfx$. 
It will be convenient to define the overall sparseness improvement from a model, as measured on a test dataset, by averaging over all examples $\bfx$ in that dataset. 
We denote the corresponding \textit{average} sparseness metrics by $G^a[A](\varepsilon)$, 
$G^l[A](\lambda)$ and $G^n[A]$ respectively.
We then define the \textit{average sparseness improvement} of an a-model and l-model as:
\begin{align} 
	dG^a[A](\varepsilon) &:= 
	G^a[A](\varepsilon) - G^n[A], 
	\label{eq-d-av-giga}
	\\
    dG^l[A](\lambda) &:= 
    G^l[A](\lambda) - G^n[A].
    	\label{eq-d-av-gigl}
\end{align}    
We can thus re-state our hypotheses in terms of this notation:
For each of the attribution methods $A \in \{IG, SH\}$, 
the average sparseness improvement $dG^a[A](\varepsilon)$ resulting from
type-a models is high, and is significantly higher than
the average sparseness improvement $dG^l[A](\lambda)$ resulting from type-l models.

\subsection{Results}

We ran experiments on five standard public benchmark datasets: three image datasets MNIST, Fashion-MNIST, and CIFAR-10, and two tabular datasets from the UCI Data Repository: Mushroom and Spambase.
Details of the datasets and training methodology are in Sec. \ref{sec-datasets} of the Supplement.
The code for all experiments is at this repository: \texttt{https://github.com/jfc43/advex}.

For each of the two tabular datasets (where we train logistic regression models), for a given model-type (a, l or n), we found the average Gini index of the attribution vectors is virtually identical when using IG or DeepSHAP.
This is not surprising: as pointed out in \cite{Ancona2017-tw}, DeepSHAP is a variant of DeepLIFT, and for simple linear models, DeepLIFT gives a very close approximation of IG.
To avoid clutter, we therefore omit DeepSHAP-based results on the tabular datasets.
Table \ref{tab-summary} shows a summary of some results on the above 5 datasets,
and Fig. \ref{fig-boxplots} and \ref{fig-varplots} display results graphically
\footnote{The official implementation of DeepSHAP (\url{https://github.com/slundberg/shap}) doesn't support the network we use for CIFAR-10 well, so we do not show results for this combination.}

\begin{table}[h]
	\caption{Results on 5 datasets. 
		For each dataset, "a" indicates an $\linfeps$-adversarially trained model with the indicated $\varepsilon$,
		and 
		"l" indicates a naturally trained model with 
		the indicated $\ell_1$-regularization strength $\lambda$. 
		The \textbf{attr} column indicates the feature attribution method (IG or DeepSHAP).
		Column \textbf{dG} shows the average sparseness improvements  
		of the models
		relative to the baseline naturally trained model,
		as measured by the $dG^a[A](\varepsilon)$ and $dG^l[A](\lambda)$ 
		defined in Eqs. (\ref{eq-d-av-giga}, \ref{eq-d-av-gigl}).
		Column \textbf{AcDrop} indicates the drop in accuracy relative to the baseline model.
	}
	\label{tab-summary}
	\begin{tabular}{@{}lllll@{}}
		\toprule
		\textbf{dataset} & \textbf{attr} & \textbf{model}        & \textbf{dG} & \textbf{AcDrop} \\ \midrule
		MNIST           & IG & a $(\varepsilon=0.3)$ & 0.06          & 0.8\%            \\
		& IG & l $(\lambda=0.01)$     & 0.004         & 0.4\%            \\
		& SHAP & a $(\varepsilon=0.3)$     & 0.06         & 0.8\%            \\ 
		& SHAP & l $(\lambda=0.01)$     & 0.007         & 0.4\%            \\                                 
		\midrule
		Fashion & IG &  a $(\varepsilon=0.1)$ & 0.06          & 4.7\%            \\
		 -MNIST             & IG   & l $(\lambda=0.01)$    & 0.008         & 3.4\%            \\ 
					  & SHAP   & a $(\varepsilon=0.1)$    & 0.08         & 4.7\%          \\            
		              & SHAP  & l $(\lambda=0.01)$    & 
		                  0.003         & 3.4\%            \\ 
		
		                 \midrule
		CIFAR-10 & IG &  a $(\varepsilon=1.0)$ &    0.081    & 0.57\%            \\
		& IG   & l $(\lambda=10^{-5})$    &    0.022    &  1.51\%            \\ 
		\midrule
		Mushroom         
		& IG & a $(\varepsilon=0.1)$ & 0.06          & 2.5\%            \\
		& IG &  l $(\lambda=0.02)$    & 0.06          & 2.6\%            \\ 
		
		\midrule
		Spambase         
		& IG & a $(\varepsilon=0.1)$ & 0.17          & 0.9\%            \\
		& IG & l $(\lambda=0.02)$     & 0.15         & 0.1\%            \\ 
		
		\bottomrule
	\end{tabular}
\end{table}

\begin{figure*}[h]
\centering
\includegraphics[scale=0.7]{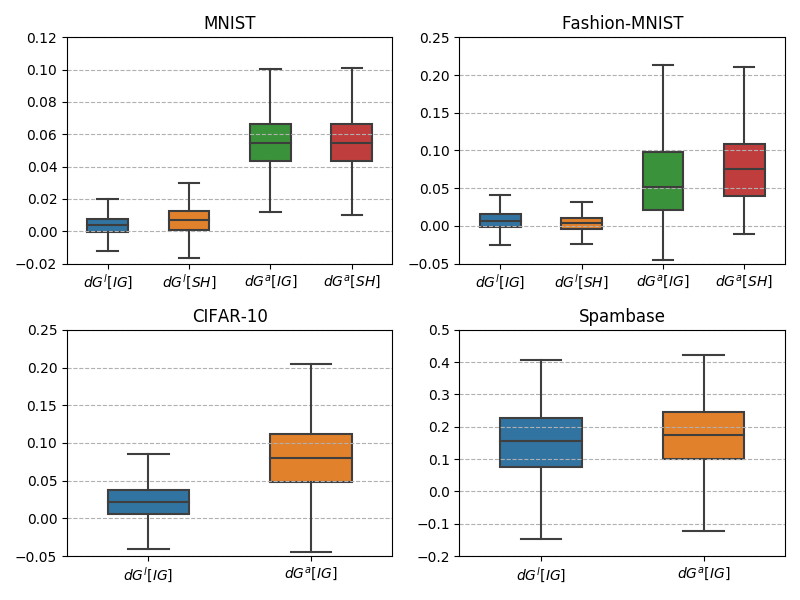}
\caption{Boxplot of pointwise sparseness-improvements 
from adversarially trained models ($dG^a[A](\bfx, \varepsilon)$)
and  naturally trained models with $\ell_1$-regularization 
($dG^l[A](\bfx, \lambda)$),
for attribution methods $A \in \{IG,SH\}$.
} 
\label{fig-boxplots}
\end{figure*}
\begin{figure*}[!htp]
\centering
\includegraphics[scale=0.8]{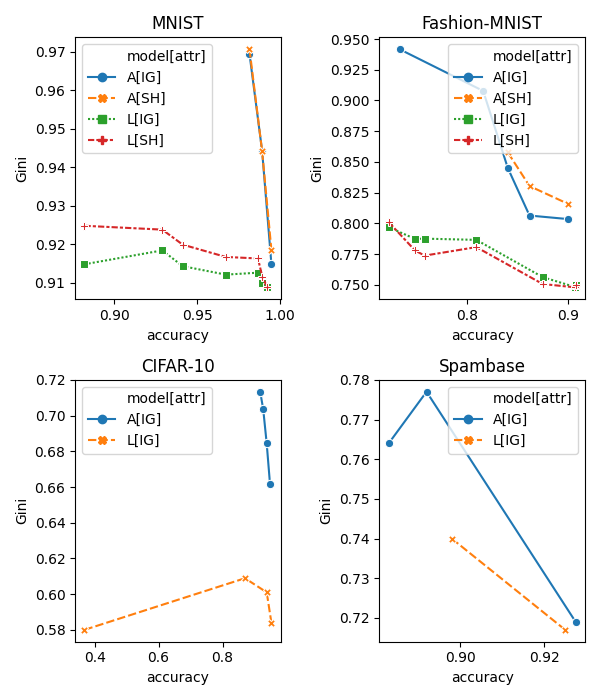}
\caption{
For four benchmark datasets, each line-plot labeled $M[A]$ shows the tradeoff between Accuracy and  
Attribution Sparseness achievable by various combinations of models $M$ and attribution methods $A$.
$M=A$ denotes $\linfeps$-adversarial training, and the plot shows the accuracy/sparseness for various choices of $\varepsilon$. 
$M=L$ denotes $\ell_1$-regularized natural training, and the plot shows 
accuracy/sparseness for various choices of $\ell_1$-regularization parameter $\lambda$.
$A=IG$ denotes the IG attribution method, whereas $A=SH$ denotes DeepSHAP.
Attribution sparseness is measured by the Average Gini Index over the dataset
($G^a[A](\varepsilon)$ and $G^l[A](\lambda)$, for adversarial training and $\ell_1$-regularized natural training respectively). In all cases, and especially in the image datasets, 
adversarial training achieves a significantly better accuracy vs attribution sparseness tradeoff.
}
\label{fig-varplots}
\end{figure*}

These results make it clear that for comparable levels of accuracy, the 
sparseness of attribution vectors from $\linfeps$-adversarially trained models 
is much better than the sparseness from natural training with 
$\ell_1$-regularization. The effect is especially pronounced in the
two image datasets. 
The effect is less dramatic in the two tabular datasets,
for which we train logistic regression models.
Our discussion at the end of Sec. \ref{sec-attr} suggests a possible explanation.

In the Introduction we gave an example of a \textit{saliency map}
\cite{DBLP:journals/corr/SimonyanVZ13, Baehrens2010-af}
 (Fig. \ref{fig-intro-cifar10})
to dramatically highlight the sparseness induced by adversarial training. 
We show several more examples of saliency maps in the supplement  (Section \ref{sec-app-images}).

\section{Related Work} \label{sec-related}
In contrast to the growing body of work on defenses against 
adversarial attacks 
\cite{Yuan2017-vz, Madry2017-zz, Biggio2013-rx}
or explaining adversarial examples 
\cite{Goodfellow2014-cy, Tsipras2018-fu},
the focus of our paper is the connection between 
adversarial robustness and explainability. 
We view the process of adversarial training as a tool to produce more explainable models.
A recent series of papers \cite{Tsipras2018-fu, Ilyas2019-kp}
essentially argues that adversarial examples exist because 
standard training produces models are heavily reliant on 
highly predictive but \textit{non-robust features}
(which is similar to our notion of "weak" features 
in Sec \ref{sec-sgd-updates})
which are vulnerable  to an adversary who can "flip" them and cause performance to degrade. 
Indeed the authors of \cite{Ilyas2019-kp} touch upon some connections between explainability and robustness, 
and conclude, "\textit{As such, producing human-meaningful explanations that remain faithful to underlying models cannot be pursued independently from the training of the models themselves}", 
by which they are implying that 
good explainability may require intervening 
in the model-training procedure itself; 
this is consistent with our findings.
We discuss other related work in the Supplement Section \ref{sec-app-related}.

\section{Conclusion} \label{sec-conc}

We presented theoretical and experimental results that show a strong connection between adversarial robustness (under $\linf$-bounded perturbations) and two desirable properties of model explanations: conciseness and stability.
Specifically, we considered model explanations in the form of feature-attributions based on the Integrated Gradients (IG) and DeepSHAP techniques.
For 1-layer models using a popular family of loss functions, we theoretically showed that 
$\linfeps$-adversarial training tends to produce \textit{sparse} and \textit{stable}
IG-based attribution vectors.
With extensive experiments on benchmark tabular and image datasets, we demonstrated 
that the "attribution sparsification" effect extends to Deep Neural Networks, when using two popular attribution methods.
Intriguingly, especially in DNN models for image classification, the attribution sparseness from
natural training with $\ell_1$-regularization is much inferior 
to that achievable via $\linfeps$-adversarial training.
Our theoretical results are  a first step in explaining some of these phenomena.


\newpage


\bibliography{paper.bib}
\bibliographystyle{icml2020}

\newpage

\appendix

\begin{appendices}
\renewcommand{\theequation}{\thesection.\arabic{equation}}
\renewcommand\thefigure{\thesection.\arabic{figure}}  
\renewcommand\thetable{\thesection.\arabic{table}}  

\newpage
\onecolumn

\section{Additional Related Work} \label{sec-app-related}

Section \ref{sec-related} discussed some of the work most directly related to this paper. Here we describe some additional related work.

\paragraph{Adversarial Robustness and Interpretability.} 
Through a very different analysis, \cite{Yeh2019-vb} show a result closely related to our Theorem \ref{thm-stable}: 
the show that adversarial training is analogous to making gradient-based explanations more "smooth", which lowers the sensitivity of gradient explanation.
The paper of \cite{Noack2019DoesIO} considers a question that is the converse of the one we examine in our paper: They show evidence that models that are
forced to have interpretable gradients are more robust to
adversarial examples than models trained in a standard
manner. 
Another recent paper \cite{kim2019safeml} analyzes the effect of adversarial training on the interpretability of 
neural network loss gradients.

\paragraph{Relation to work on Regularization Benefits of AML.} There has been prior work on the \textit{regularization benefits} of adversarial training 
\cite{Xu2009-gz,Szegedy2013-tx,Goodfellow2014-cy,Shaham2015-lg,Sankaranarayanan2017-vy,Tanay2018ANA}, 
primarily in image-classification applications: when a model is adversarially trained, its classification accuracy on natural (i.e. un-perturbed) test data can improve. 
All of this prior work has focused on the \textit{performance-improvement} (on natural test data) aspect of regularization, 
but none have examined the \textit{feature-pruning} benefits explicitly.
In contrast to this work, our primary interest is in the explainability benefits of adversarial training, and specifically 
the ability of adversarial training to significantly improve feature-concentration while maintaining (and often improving) performance on natural test data.

\paragraph{Adversarial Training vs Feature-Selection.} Since our results show that adversarial training can effectively shrink the weights of irrelevant or weakly-relevant features (while preserving weights on relevant features),
a legitimate counter-proposal might be that one could weed out such features beforehand via a pre-processing step where features with negligible label-correlations can be "removed" from the training process.
Besides the fact that this scheme has no guarantees whatsoever with regard to adversarial robustness, there are some practical reasons why correlation-based feature selection is not as effective as adversarial training, in producing pruned models: (a) With adversarial training, one needs to simply try different values of the adversarial strength parameter $\varepsilon$ and find a level where accuracy (or other metric such as AUC-ROC) is not impacted much but model-weights are significantly more concentrated; on the other hand with the correlation-based feature-pruning method, one needs to set up an iterative loop with gradually increasing correlation thresholds, and each time the input pre-processing pipeline needs to be re-executed with a reduced set of features. (b) When there are categorical features with large cardinalities, where just some of the categorical values  have negligible feature-correlations, it is not even clear how one can "remove" these specific feature \textit{values}, since the feature itself must still be used; at the very least it would require a re-encoding of the categorical feature each time a subset of its values is "dropped" (for example if a one-hot encoding or hashing scheme is used). Thus correlation-based feature-pruning is a much more cumbersome and inefficient process compared to adversarial training.

\paragraph{Adversarial Training vs Other Methods to Train Sparse Logistic Regression Models.}
\cite{Tan2013-ot, Tan2014-yw} propose an approach to train sparse logistic regression models based on a min-max optimization problem that can be solved by the cutting plane algorithm. 
This requires a specially implemented custom optimization procedure.
By contrast, $\linfeps$-adversarial training can be implemented as a 
simple and efficient "bolt-on" layer on top of existing ML pipelines based on TensorFlow, PyTorch or SciKit-Learn, which makes it
highly practical. 
Another paper \cite{Abramovich2017-jw} proposes a feature selection procedure based on penalized maximum likelihood with a complexity
penalty on the model size, but once again this requires 
special-purpose optimization code.

\section{Discussion of Assumptions}


\subsection{Loss Functions Satisfying Assumption \ref{ass-loss-cvx}}
\label{sec-app-loss-fn}

We show here that several popular loss functions satisfy the Assumption \ref{ass-loss-cvx}.

\textbf{Logistic NLL (Negative Log Likelihood) Loss.} \\
$\lossxyw = -\ln(\sigma(y \<\bfw, \bfx>)) 
= \ln(1 + \exp(-y\<\bfw, \bfx>))$, which can be written as 
$g(-y \<\bfw, \bfx>)$ where $g(z) = \ln(1 + e^z)$ is a non-decreasing and convex function.

\textbf{Hinge Loss}\\
$\lossxyw = (1 - y \<\bfw, \bfx>)^+$, which can be written as $g(-y\<\bfw, \bfx>)$ where $g(z) = (1 + z)^+$ is non-decreasing and convex.

\textbf{Softplus Hinge Loss.}\\
$\lossxyw = \ln(1 + \exp(1 - y \<\bfw, \bfx>))$, which can be written as $g(-y \< \bfw, \bfx>)$ where $g(z) = \ln(1 + e^{1+z})$, and clearly $g$ is non-decreasing. 
Moreover the first derivative of $g$,
$g'(z) = 1/(1 + e^{-1-z})$ is non-decreasing, and therefore $g$ is convex.

\subsection{Implications of Assumption \ref{ass-feat-exp}}
\label{sec-app-ass-feat-exp}


\begin{lemma}
Given random variables $X',Y$ where $Y \in \{\pm 1\}$,
if we define $X = X' -  [\E(X' | Y=1) + \E(X' | Y=-1)]/2$, then:
\begin{align} 
\E(X | Y ) &= a Y  \label{eq-prop-exp-sup} \\
\E(YX) &= \E[ \E(YX|Y) ] = \E[Y \E(X|Y)] = \E[ Y^2 a ] = a \label{eq-eyxi-sup} \\
\E(YX|Y) &= Y\E[ X | Y ] = Y^2 a = a, \label{eq-eyxigiveny-sup} 
\end{align}
where $a = [\E(X' | Y=1) - \E(X' | Y=-1)]/2$.
\end{lemma}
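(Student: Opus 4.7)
The plan is to establish the three display equations by direct computation, starting from the definition $X := X' - (\mu_+ + \mu_-)/2$, where I adopt the shorthand $\mu_+ := \E(X'\mid Y=1)$ and $\mu_- := \E(X'\mid Y=-1)$, so that $a = (\mu_+ - \mu_-)/2$ by definition. The only non-routine line is $\E(X\mid Y) = aY$; the remaining two displays are then immediate consequences of the tower property, the pull-out property of conditional expectation, and the identity $Y^2 = 1$ (which holds pointwise since $Y \in \{\pm 1\}$).

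First I would prove $\E(X\mid Y) = aY$ by computing each value of the conditional expectation on the two-point range of $Y$. By linearity of conditional expectation (the constant $(\mu_+ + \mu_-)/2$ pulls out),
\begin{align*}
\E(X\mid Y=1) &= \mu_+ - \tfrac{\mu_+ + \mu_-}{2} = \tfrac{\mu_+ - \mu_-}{2} = a, \\
\E(X\mid Y=-1) &= \mu_- - \tfrac{\mu_+ + \mu_-}{2} = -\tfrac{\mu_+ - \mu_-}{2} = -a.
\end{align*}
Since $Y$ takes only the values $\pm 1$, these two cases can be packaged as $\E(X\mid Y) = aY$, which is exactly Eq.~(\ref{eq-prop-exp-sup}).

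Next, for Eq.~(\ref{eq-eyxigiveny-sup}), I pull $Y$ out of the inner conditional expectation (since $Y$ is $\sigma(Y)$-measurable) to get $\E(YX\mid Y) = Y\,\E(X\mid Y) = Y \cdot (aY) = aY^2 = a$, where the last step uses $Y^2 = 1$. Finally, for Eq.~(\ref{eq-eyxi-sup}), I apply the tower property: $\E(YX) = \E[\E(YX\mid Y)] = \E[a] = a$, exactly as displayed. Since each step is either a definitional substitution, linearity/pull-out, or the tautology $Y^2 = 1$, there is no real obstacle; the only thing to be careful about is making sure the centering constant is written in the form that produces the clean identity $\E(X\mid Y=-1) = -\E(X\mid Y=1)$, which is precisely why the midpoint of $\mu_+$ and $\mu_-$ (rather than some other shift) is subtracted in the definition of $X$.
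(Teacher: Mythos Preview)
Your proof is correct and takes essentially the same approach as the paper: both compute $\E(X\mid Y)$ at the two values $Y=\pm 1$ to establish $\E(X\mid Y)=aY$, and then derive the remaining identities from the tower and pull-out properties together with $Y^2=1$. The paper phrases the first step as fitting a linear function $aY+c$ through the two points $(\pm 1, \E(X'\mid Y=\pm 1))$ and then subtracting the intercept $c$, but this is just a repackaging of your direct computation.
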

\begin{proof}
Consider the function $f(Y) = \E(X' | Y)$, 
and let $b_0 := f(-1)$ and $b_1 := f(1)$. 
Since there are only \textit{two} values of $Y$ that are of interest, 
we can represent $f(Y)$ by a \textit{linear} function  $aY + c$,
and it is trivial to verify that $a = (b_1 - b_0)/2$ and $c = (b_1 + b_0)/2$ are the unique values that are consistent with $f(-1) = b_0$ and $f(1) = b_1$.
Thus if $X = X' - c$, then $\E(X|Y) = aY$, proving (\ref{eq-prop-exp-sup}), and the other two properties follow trivially.
\end{proof}


\section{Expressions for adversarial perturbation and 
 loss-gradient} \label{sec-app-loss-grad}

We show two simple preliminary results for loss functions that satisfy Assumption \ref{ass-loss-non-dec}: 
Lemma \ref{lem-adv-closed-form} shows a simple closed form expression for the $\linfeps$-adversarial perturbation,
and we use this result to derive an expression for the \textit{gradient} of the 
$\linfeps$-adversarial loss 
$\loss(\bfx + \delta^*, y; \bfw)$ with respect to a weight $w_i$ 
(Lemma \ref{lem-grad-loss-adv}).

%

\begin{lemma}[Closed form for $\linfeps$-adversarial perturbation]
\label{lem-adv-closed-form}
For a data point $(\bfx,y)$, given model weights $\bfw$, if the loss function $\lossxyw$ satisfies Assumption \ref{ass-loss-non-dec}, the $\linfeps$-adversarial perturbation $\delta^*$ is given by:
\begin{equation} \tag{\ref{eq-adv-closed-form}}
\delta^* = -y \sign(\bfw) \varepsilon,
\end{equation}
and the corresponding $\linfeps$-adversarial loss is 
\beq \tag{\ref{eq-linfeps-adv-loss}}
\loss(\bfx + \delta^*, y; \; \bfw) = 
    g(\varepsilon ||\bfw||_1 - y \wdotx)
\eeq
\end{lemma}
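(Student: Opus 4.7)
The plan is to reduce the adversarial maximization to a linear one by exploiting monotonicity of $g$, and then solve the resulting linear program over an $\ell_\infty$-ball coordinate-wise.

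First I would write out the adversarial loss using Assumption \ref{ass-loss-non-dec}:
\begin{equation*}
\loss(\bfx+\delta, y;\bfw) = g(-y\langle \bfw, \bfx+\delta\rangle) = g\bigl(-y\langle \bfw, \bfx\rangle - y\langle \bfw,\delta\rangle\bigr).
\end{equation*}
Since $g$ is non-decreasing, maximizing this expression over $\|\delta\|_\infty \leq \varepsilon$ is equivalent to maximizing its argument. The term $-y\langle \bfw,\bfx\rangle$ does not depend on $\delta$, so the problem reduces to maximizing the linear functional $-y\langle \bfw,\delta\rangle = \sum_i (-y w_i)\delta_i$ subject to $|\delta_i|\leq \varepsilon$ for each $i$.

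Next I would solve this coordinate-wise: for each $i$, the term $(-yw_i)\delta_i$ is maximized by $\delta_i^* = \varepsilon\,\sgn(-yw_i) = -y\varepsilon\,\sgn(w_i)$, where I use $y\in\{\pm 1\}$ to pull $y$ out of the sign (and treat the edge case $w_i=0$ as contributing $0$ regardless of the choice of $\delta_i$). Assembling the coordinates yields $\delta^* = -y\,\sgn(\bfw)\,\varepsilon$, which is exactly (\ref{eq-adv-closed-form}).

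Finally, plugging $\delta^*$ back into the loss, I would compute
\begin{equation*}
-y\langle \bfw,\delta^*\rangle = y^2\,\varepsilon\,\langle \bfw,\sgn(\bfw)\rangle = \varepsilon \sum_i |w_i| = \varepsilon \|\bfw\|_1,
\end{equation*}
using $y^2=1$ and $w_i\sgn(w_i)=|w_i|$. Substituting this into the adversarial loss gives $\loss(\bfx+\delta^*,y;\bfw) = g(\varepsilon\|\bfw\|_1 - y\langle \bfw,\bfx\rangle)$, matching (\ref{eq-linfeps-adv-loss}). There is no real obstacle here; the only subtlety worth a remark is that when some $w_i=0$ the maximizer is non-unique, but any choice of $\delta_i\in[-\varepsilon,\varepsilon]$ yields the same loss value, so the stated formula is a valid maximizer.
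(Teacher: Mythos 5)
Your proposal is correct and follows essentially the same route as the paper's proof: both use the monotonicity of $g$ to reduce the maximization to perturbing each coordinate by $\varepsilon$ in the direction $-y\,\sgn(w_i)$, then substitute back. Your version is simply more explicit about the coordinate-wise linear optimization and the $w_i=0$ edge case, which the paper leaves implicit.
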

\begin{proof}

Assumption \ref{ass-loss-non-dec} implies that the loss is non-increasing in 
$y \langle \bfw, x \rangle$, and therefore the 
$\linfeps$-perturbation $\delta^*$  of $x$ that maximizes the loss would be such that, for each $i \in [d]$, $x_i$ is changed by an amount $\varepsilon$ in the direction of $-y\sign(w_i)$, and the result immediately follows.
\end{proof}


\begin{lemma}[Gradient of adversarial loss] \label{lem-grad-loss-adv}
For any loss function satisfying 
Assumption \ref{ass-loss-non-dec}, for a given data point $(\bfx,y)$,
the gradient of the $\linfeps$-adversarial loss is given by:
\begin{equation}
	\tag{\ref{eq-grad-adv}}
	\frac{\partial \loss(\bfx + \delta^*, y; \; \bfw)}{\partial w_i} = 
	-g'(\varepsilon ||\bfw||_1 - y \langle \bfw, \bfx\rangle) \;
	   (y x_i - \sgn(w_i)\varepsilon)
\end{equation}
\end{lemma}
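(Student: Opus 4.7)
The plan is to apply the chain rule directly to the closed-form expression for $\loss(\bfx + \delta^*, y; \bfw)$ furnished by Lemma~\ref{lem-adv-closed-form}. That lemma already reduces the adversarial loss to $g(\varepsilon\|\bfw\|_1 - y\langle\bfw,\bfx\rangle)$, so the entire task collapses to differentiating a one-dimensional composition with respect to $w_i$, and the real content of the statement has in effect been established in the preceding lemma.

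First I would fix an index $i\in[d]$ and decompose the argument of $g$ into two pieces that depend on $\bfw$: the $\ell_1$ term $\varepsilon\|\bfw\|_1 = \varepsilon\sum_j |w_j|$, whose partial derivative with respect to $w_i$ is $\varepsilon\sgn(w_i)$ wherever $w_i\neq 0$, and the bilinear term $-y\langle\bfw,\bfx\rangle$, whose partial derivative is $-yx_i$. Summing the two partial derivatives and applying the chain rule yields
\begin{equation*}
\frac{\partial}{\partial w_i}\,g\bigl(\varepsilon\|\bfw\|_1 - y\langle\bfw,\bfx\rangle\bigr) \;=\; g'\bigl(\varepsilon\|\bfw\|_1 - y\langle\bfw,\bfx\rangle\bigr)\,\bigl(\varepsilon\sgn(w_i) - y x_i\bigr),
\end{equation*}
and factoring $-1$ out of the second factor yields exactly the claimed identity (\ref{eq-grad-adv}).

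The only subtlety is a measure-zero one: the map $w_i \mapsto |w_i|$ is not differentiable at $w_i = 0$, and under Assumption~\ref{ass-loss-non-dec} the function $g$ need only be differentiable almost everywhere (as is made explicit in Assumption~\ref{ass-loss-cvx}). I would therefore note that the identity holds wherever both derivatives exist; at $w_i = 0$ one interprets $\sgn(w_i)$ as an arbitrary element of the subdifferential $[-1,1]$ of $|\cdot|$, and at any exceptional point of $g$ one uses an analogous subgradient selection. Because the lemma is subsequently used only inside expectations over $(\bfx, y) \sim \calD$, this measure-zero indeterminacy is harmless. I do not anticipate a genuine obstacle: once Lemma~\ref{lem-adv-closed-form} has eliminated the inner maximization over $\delta$, the proof is a one-line chain-rule calculation, and the main thing to be careful about is simply bookkeeping the signs so that $\varepsilon\sgn(w_i) - y x_i$ is rewritten as $-(y x_i - \sgn(w_i)\varepsilon)$ to match the stated form.
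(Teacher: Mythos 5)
Your proposal is correct and follows essentially the same route as the paper's proof: substitute the closed form from Lemma~\ref{lem-adv-closed-form} and apply the chain rule, with the sign bookkeeping giving $-(y x_i - \sgn(w_i)\varepsilon)$. Your additional remark about the measure-zero non-differentiability at $w_i = 0$ and the subgradient interpretation of $\sgn(w_i)$ is a careful touch that the paper's one-line proof leaves implicit.
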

\begin{proof}
This is straightforward by substituting the expression (\ref{eq-adv-closed-form}) for $\delta^*$ in $g(-y \langle \bfw, \bfx + \delta^*\rangle )$, 
and applying the chain rule.
\end{proof}

\section{Expectation of SGD Weight Update} \label{sec-app-thm-exp-sgd}

The following Lemma will be used to prove Theorem \ref{thm-exp-sgd}.

\subsection{Upper bound on $\E [Z f(Z, V)]$}
\begin{lemma}[Upper Bound on expectation of $Z f(Z, V)$ when $f$ is non-increasing in $Z$, $(Z \perp V) | Y$, and $\E(Z|Y) = \E(Z)$]
\label{lem-exp-bound}
For any random variables $Z, V$, if:
\begin{itemize}
	\item $f(Z,V)$ is non-increasing in $Z$,
	\item $Z$, $V$ are conditionally independent given a third r.v. $Y$, and 
	\item  $\E(Z | Y) = \E (Z)$, 
	\end{itemize}  
then
\begin{equation} 
\E[ Z f(Z, V) ] \;\leq\; \E(Z) \E[f(Z, V)] \label{eq-exp-bound}
\end{equation}
\end{lemma}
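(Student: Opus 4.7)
The plan is to recognize the claimed inequality as the statement that $Z$ and $f(Z,V)$ are negatively correlated, i.e.\ $\mathrm{Cov}(Z, f(Z,V)) \le 0$, and then decompose this covariance using conditioning on $Y$. Specifically, by the law of total covariance,
\[
\mathrm{Cov}(Z, f(Z,V)) \;=\; \E\!\left[\mathrm{Cov}(Z, f(Z,V)\mid Y)\right] \;+\; \mathrm{Cov}\!\left(\E[Z\mid Y],\, \E[f(Z,V)\mid Y]\right).
\]
The second term vanishes immediately, because the hypothesis $\E(Z\mid Y)=\E(Z)$ makes $\E[Z\mid Y]$ a deterministic constant. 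So it suffices to prove the first term is non-positive, i.e.\ $\mathrm{Cov}(Z, f(Z,V)\mid Y=y) \le 0$ for (almost) every $y$.

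Next I would exploit the conditional independence $Z \perp V \mid Y$. Fixing $y$, define
\[
h_y(z) \;:=\; \E[f(z,V)\mid Y=y].
\]
Because $f(\cdot,V)$ is non-increasing in its first argument for every fixed $V$, the function $h_y$ inherits monotonicity: $h_y$ is non-increasing. Moreover, conditional independence gives
\[
\E[f(Z,V)\mid Y=y,\,Z=z] \;=\; \E[f(z,V)\mid Y=y] \;=\; h_y(z),
\]
so by the tower property $\E[Z f(Z,V)\mid Y=y] = \E[Z\, h_y(Z)\mid Y=y]$ and therefore $\mathrm{Cov}(Z, f(Z,V)\mid Y=y) = \mathrm{Cov}(Z, h_y(Z)\mid Y=y)$.

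Finally, the conditional covariance of a random variable with a non-increasing function of itself is $\le 0$. The cleanest way to see this is the two-copy trick: let $Z, Z'$ be i.i.d.\ copies from the conditional law of $Z$ given $Y=y$. Since $h_y$ is non-increasing, $(Z-Z')(h_y(Z)-h_y(Z')) \le 0$ pointwise, so taking expectations and expanding gives $2\,\mathrm{Cov}(Z, h_y(Z)\mid Y=y) \le 0$. Combining everything yields $\mathrm{Cov}(Z, f(Z,V)) \le 0$, which rearranges to the desired inequality.

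The main obstacle is really just bookkeeping: carefully invoking the conditional independence to reduce $f(Z,V)$ to a single-variable monotone function $h_y(Z)$ under the $Y=y$ conditional law, and then applying the elementary Chebyshev-style inequality. The hypothesis $\E(Z\mid Y)=\E(Z)$ is used in exactly one place — to kill the outer covariance term in the law-of-total-covariance decomposition — and without it the inequality would generally fail, which is a useful sanity check on the plan.
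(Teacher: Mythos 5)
Your proof is correct, but it takes a genuinely different route from the paper's. The paper centers $Z$ at its (unconditional, equivalently conditional) mean $\bar z$, shows that the cross term $\E[(Z-\bar z)f(\bar z,V)]$ vanishes by conditioning on $Y$ and using $(Z\perp V)\mid Y$ together with $\E(Z\mid Y)=\E(Z)$, and then bounds $\E[(Z-\bar z)(f(Z,V)-f(\bar z,V))]$ by the \emph{pointwise} inequality $(Z-\bar z)(f(Z,V)-f(\bar z,V))\le 0$, which holds jointly in $(Z,V)$ with no independence needed at that step. You instead decompose via the law of total covariance, kill the between-group term using $\E(Z\mid Y)=\E(Z)$, use conditional independence to integrate out $V$ and reduce to the one-variable monotone function $h_y(z)=\E[f(z,V)\mid Y=y]$, and finish with the two-copy Chebyshev association inequality. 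Your version is more modular and makes the structure of the statement transparent (conditional negative association plus a vanishing between-group covariance), and it generalizes cleanly; the paper's version is slightly leaner in that the monotonicity step compares against a single fixed point $\bar z$ rather than two independent copies, and it never needs to form $h_y$ explicitly. Both uses of each hypothesis match up one-for-one, so this is a legitimate alternative proof rather than a variant of the same argument.
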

\newcommand{\zbar}{\overline{z}}
\begin{proof}
Let $\zbar = \E(Z) = \E(Z|Y)$ and note that
\begin{align}
\E[Z f(Z,V)] - \E[Z] \E[f(Z,V)]  &= 
       \E[Z f(Z,V)] - \zbar \E[f(Z, V)] \label{eq-cov1} \\
              &= \E[(Z - \zbar) f(Z, V)]  \label{eq-cov-exp1}
\end{align}
We want to now argue that $\E[(Z - \zbar) f(\zbar, V)]=0$. To see this, apply the Law of Total Expectation by conditioning on $Y$:
\begin{align}
\E[(Z - \zbar) f(\zbar, V)]  &= 
	  \E \Big[ 
	      \E \big[ 
	         (Z - \zbar) f(\zbar, V) | Y 
	         \big]
	     \Big] & \nonumber \\ 
	     &= \E \Big[ 
	             \E \big[ (Z - \zbar) | Y \big] 
	             \E \big[ f(\zbar, V) | Y \big] 
	             \Big] & \text{(since}\; (Z \perp V) | Y \text{)} \\
         &= 0.  & \text{(since} \; \E(Z|Y) =  \E(Z) = \zbar \text{)}
\end{align}
Since $\E[(Z - \zbar) f(\zbar, V)]=0$, we can subtract it from the last expectation in (\ref{eq-cov-exp1}), and by linearity of expectations the RHS of (\ref{eq-cov-exp1}) can be replaced by
\begin{align}
\E\big[ (Z - \zbar) (f(Z, V) - f(\zbar,V)) \big].  \label{eq-cov-exp2}
\end{align}
That fact that $f(Z,V)$ is non-increasing in $Z$ implies that
$(Z - \zbar)(f(Z,V) - f(\zbar,V)) \leq 0$ for any value of $Z$ and $V$, with equality when $Z = \zbar$. 
Therefore the expectation (\ref{eq-cov-exp2}) is bounded above by zero, which implies the desired result.
\end{proof}	


\thmsgd*

\begin{proof}
Consider the adversarial loss gradient expression 
(\ref{eq-grad-adv}) from Lemma \ref{lem-grad-loss-adv}. 
For the case where $w_i=0$ for all $i \in S$, for any given $i \in S$, the negative expectation of the adversarial loss gradient is 
\begin{align} \label{eq-exp1}
	\delbar_i &= 
	 \E \big[
	      y x_i \,
     	 g'(\varepsilon ||\bfw||_1 - y\<\bfw, \bfx>)
	   \big] & \nonumber \\
	   &= 
	   	 \E \Big[ 
	   	      \E \big[
         	      y x_i \,
     	          g'(\varepsilon ||\bfw||_1 - y\<\bfw, \bfx>) \; 
     	          | y
          	   \big] 
          	\Big] & \nonumber \;\; \text{(Law of Total Expectation)} \\
	   &= 
	   	 \E \Big[ 
	   	     y \, \E \big[
         	      x_i \,
     	          g'(\varepsilon ||\bfw||_1 - y\<\bfw, \bfx>) \; 
     	          | y
          	   \big] 
          	\Big], & \nonumber           	
\end{align}
and in the last expectation above, we note that since $w_i=0\; \forall i \in S$,
the argument of $g'$ does not depend on $x_i$ for any $i \in S$,
and since the features in $S$ are conditionally independent of the rest given the label $y$,
$x_i$ is independent of the $g'$ term in the inner conditional expectation. 
Therefore the inner conditional expectation can be factored as a product of conditional expectations, which gives
\begin{align}
\delbar_i	&= 
	   	 \E \Big[ 
	   	     y \E (x_i | y) 
	   	     \E \big[ 
     	          g'(\varepsilon ||\bfw||_1 - y\<\bfw, \bfx>) \; 
     	          | y
          	   \big] 
          	\Big] & \nonumber \\
          	&= 
	   	 \E \Big[ 
	   	     y^2 a_i
	   	     \E \big[ 
     	          g'(\varepsilon ||\bfw||_1 - y\<\bfw, \bfx>) \; 
     	          | y
          	   \big] 
          	\Big] &  \text{(Assumption \ref{ass-feat-exp}, Eq \ref{eq-prop-exp-sup})}
          	                      \nonumber \\
          	&= 
	   	 a_i \E \Big[ 
   	   	     \E \big[ 
     	          g'(\varepsilon ||\bfw||_1 - y\<\bfw, \bfx>) \; 
     	          | y
          	   \big] 
          	\Big] &  \text{(since $y = \pm 1$)} \nonumber \\
          	&= a_i \gprimebar,
\end{align}
which establishes the first result.

Now consider the case where $w_i \neq 0$ for at least one $i \in S$.
Starting with the adversarial loss gradient expression 
(\ref{eq-grad-adv}) from Lemma \ref{lem-grad-loss-adv}, 
for any $i \in S$, 
multiplying throughout by $-\sgn(w_i)$ and taking expectations results in 
\begin{equation} \label{eq-exp2}
	\sgn(w_i) \delbar_i = 
	 \E \Big[
	     \big[ 
	      y x_i \sgn(w_i) - \varepsilon 
	     \big] \; 
     	 g'(\varepsilon ||w||_1 - y\<\bfw, \bfx>)
	   \Big]
\end{equation}
where the expectation is with respect to a random choice of data-point $(\bfx, y)$. 
The argument of $g'$ can be written as 
\begin{equation*}	
\varepsilon ||w||_1 - y \<\bfw, \bfx> = 
-\sum_{j=1}^d |w_j| (yx_j \sgn(w_j) - \varepsilon),
\end{equation*}
and for $j \in [d]$ if we let  $Z_j$ denote the random variable 
corresponding to $yx_j \sgn(w_j) - \varepsilon$, then
(\ref{eq-exp2}) can be written as
\begin{equation} \label{eq-exp3}
\sgn(w_i)\delbar_i = \E \left[ Z_i \; g'\left(- \sum_{j=1}^d |w_j| Z_j \right) \right].
\end{equation}
Taking the $|w_i|$-weighted average of both sides of (\ref{eq-exp3}) over $i \in S$ yields
\begin{equation} 
\delbarws = \frac{1}{\sum_{i\in S} |w_i|} 
           \E \left[  g'\left(- \sum_{j=1}^d |w_j| Z_j \right) \;
			           \sum_{i \in S} \left(|w_i| Z_i\right)
           \right]. \label{eq-exp4}
\end{equation}
If we now define $Z_S := \sum_{i \in S} (|w_i| Z_i)$, the argument of $g'$ in the expectation above can be written as $V_S - Z_S$ where $V_S$ denotes the negative sum of $|w_j|Z_j$ terms over all $j \not\in S$, and thus (\ref{eq-exp4}) can be written as
\begin{equation}  \label{eq-exp5}
\delbarws = \frac{1}{\sum_{i\in S} |w_i|} 
           \E \left[  g'\left(V_S - Z_S\right) Z_S
           \right].
\end{equation}
Note that  $Z_S$ is a function of $Y$ and the features in $S$, 
and $V_S$ is a function of $Y$ and the features in the \textit{complement} of $S$.
Since the features in $S$ are conditionally independent of the rest given the label $Y$
(this is a condition of the Theorem), it follows that 
$(V_S \perp Z_S) | Y$. 
Since by Assumption \ref{ass-loss-cvx}, $g'$ is a non-decreasing function, $g'(V_S - Z_S)$ is \textit{non-increasing} in $Z_S$. Thus all three conditions of Lemma \ref{lem-exp-bound} are satisfied, with the random variables $Z,V,Y$ and function $f$ in the Lemma corresponding to random variables $Z_S, V_S, Y$ and function $g'$ respectively in the present Theorem. 
It then follows from Lemma \ref{lem-exp-bound} that 
\begin{equation} \label{eq-exp5.1}
	\delbarws \leq \frac{1}{\sum_{i\in S} |w_i|}  E(Z_S) \gprimebar. 
\end{equation}
The definition of $Z_S$, and the fact that $\E(Z_i) = \sgn(w_i) \E(yx_i) - \varepsilon$ = 
$a_i \sgn(w_i) - \varepsilon$ (property (\ref{eq-eyxi})), imply 
\begin{equation*}
\E(Z_S) = \sum_{i\in S} \left[ |w_i| \sgn(w_i) a_i \right] - \varepsilon \sum_{i\in S} |w_i|,
\end{equation*}
and the definition of $\baraws$ allows us to simplify (\ref{eq-exp5.1}) to
\begin{equation*}
	\delbarws \leq \gprimebar (\baraws - \varepsilon), \\
\end{equation*}
which establishes the upper bound (\ref{eq-thm-wnonzero}). 

To analyze the limiting case where  $w_i \rightarrow 0$ for all $i \in S$, write Eq. (\ref{eq-exp5}) as follows:
\newcommand\given[1][]{\:#1\vert\:}
\begin{equation}  \label{eq-exp6}
\delbarws = 
           \E \left[  g'\left(V_S - Z_S\right) \frac{Z_S}{\sum_{i \in S} |w_i|}
           \right].
\end{equation}
If we let $|w_i| \rightarrow 0$ for all $i \in S$, the $Z_S$ in the argument of $g'$ can be 
set to 0, and we can write the RHS of 
(\ref{eq-exp6}) as 
\begin{equation}  \label{eq-exp7}
\delbarws = 
           \E \left[  g'(V_S) \frac{Z_S}{\sum_{i \in S} |w_i|}    \right]
            = 
           \E \bigg[ \E \left[ g'(V_S) \frac{Z_S}{\sum_{i \in S} |w_i|} \; \given[\Big] Y \right]
           \bigg],
\end{equation}
where the inner conditional expectation can be factored as a product of conditional expectations
since $(Z_S \perp V_S | Y)$:
\begin{equation} \label{eq-exp8}
	\delbarws =  \E \bigg[ 
	      \E \left[ g'(V_S) \given[\Big] Y \right] 
	      \E \left[ \frac{Z_S}{\sum_{i \in S} |w_i|} \; \given[\Big] Y \right]
	      \bigg].
\end{equation}
Now notice that 
\begin{equation}
\E(Z_S | Y) = \E \left[ \sum_{i \in S} (|w_i| Z_i) \given[\Big] Y \right] = 
			  \sum_{i\in S} |w_i| \;
			       \E \left[ \sgn(w_i) Y x_i - \varepsilon \;|\; Y \right].
\end{equation}
From Property (\ref{eq-eyxigiveny}) of datasets satisfying Assumption \ref{ass-feat-exp}, 
$\E [Yx_i | Y] = \E(Y x_i) = a_i$, and so the second inner expectation in (\ref{eq-exp8})
simplifies to a constant:
\begin{equation}
	      \E \left[ \frac{Z_S}{\sum_{i \in S} |w_i|} \; \given[\Big] Y \right] 
	      = \frac{ \sum_{i \in S} \left[ |w_i| \sgn(w_i) a_i \right] } 
	             { \sum_{i \in S} |w_i| } - \varepsilon
	      = \baraws - \varepsilon.
\end{equation}
Eq. (\ref{eq-exp8}) can therefore be simplified to 
\begin{equation} \label{eq-exp9}
	\delbarws =  \E \bigg[ 
	      \E \left[ g'(V_S) \given[\Big] Y \right] 
	      \bigg] 
	       \left( \baraws - \varepsilon \right) = 
	       \gprimebar ( \baraws - \varepsilon),
\end{equation}
which shows the final statement of the Theorem, namely, that if $w_i \rightarrow 0$ for all $i \in S$, then (\ref{eq-thm-wnonzero}) holds with equality.
\end{proof}

\subsection{Implications of Theorem \ref{thm-exp-sgd}} 
\label{sec-implication}
Keeping in mind the interpretations of the $\w$-weighted average quantities $\baraws$ and $\delbarws$ described in the paragraph after the statement of Theorem \ref{thm-exp-sgd},
we can state the following implications of this result:

\textbf{If all weights of $S$ are zero, then they grow in the correct direction.}
When $w_i = 0$ for all $i \in S$ (recall that $S$ is a subset of features, conditionally independent of the rest given the label $y$), the expected SGD update $\delbar_i$ for each $i \in S$ is proportional to the directed strength $a_i$ of feature $x_i$, and if $\gprimebar \neq 0$, this means that on average the SGD update causes the weight $w_i$ to grow from zero in the \textit{correct direction}. This is what one would expect from an SGD training procedure.

\textbf{If the weights of $S$ are mis-aligned weights on average, then they shrink at a rate proportional to $\varepsilon + |\baraws|$.} 
Suppose for at least one $i \in S$, $w_i \neq 0$, and $\baraws < 0$, i.e. the weights of the features in $S$ are mis-aligned on average. In this case by (\ref{eq-thm-wnonzero}), $\delbarws < 0$, i.e. the weights of the features in $S$, in aggregate (i.e. in the $|w_i|$-weighted sense) shrink toward zero in expectation. The aggregate rate of this shrinkage is proportional to $\varepsilon + |\baraws|$. In other words, all other factors remaining the same, adversarial training (i.e. with $\varepsilon > 0$) shrinks mis-aligned faster than natural training (i.e. with $\varepsilon = 0$).

\textbf{If the weights of $S$ are aligned on average, and $\varepsilon > |\baraws|$ then they shrink at a rate proportional to $\varepsilon - |\baraws|$.}
Suppose that $w_i \neq 0$ for at least one $i \in S$, and the weights of $S$ are aligned on average, i.e. $\baraws > 0$. Even in this case, the weights of $S$ shrink on average, \textit{provided} 
the alignment strength $\baraws$ is dominated by the adversarial $\varepsilon$;
the rate of shrinkage is proportional to $\varepsilon - |\baraws|$, by Eq. (\ref{eq-thm-wnonzero}).
Thus  adversarial training with a \textit{sufficiently large} $\varepsilon$ that dominates the 
average strength of the features in $S$, will cause the weights of these features to shrink on average. This observation is key to explaining the "feature-pruning" behavior of adversarial training: "weak" features (relative to $\varepsilon$) are weeded out by the SGD updates.

\textbf{If the weights of $S$ are aligned, $\varepsilon < |\baraws|$, then the weights of $S$ expand up to a certain point.}
Consider the case where at least one of the $S$ weights is non-zero, 
and the adversarial $\varepsilon$ does not dominate the average strength $\baraws$.
Again from Eq. (\ref{eq-thm-wnonzero}), if  $\baraws > 0$ and $\varepsilon < |\baraws|$, then the upper bound (\ref{eq-thm-wnonzero}) on $\delbarws$ is non-negative. 
Since the Theorem states that equality holds in the limit as $w_i \rightarrow 0$ for all $i \in S$,
this means if all $|w_i|$ for $i \in S$ are sufficiently small, the expected SGD update $\delbarws$  is non-negative, i.e., the $S$ weights expand on average. In other words, the weights of 
a conditionally independent feature-subset $S$, if they are aligned on average, then their aggregate weights  expand on average up to a certain point, if $\varepsilon$ does not dominate their strength.

Note that Assumption \ref{ass-loss-cvx} implies that $\gprimebar \geq 0$,
and when the model $\bfw$ is "far" from the optimum, the values of $-\ywdotx$ will tend to be large, 
and since $g'$ is a non-decreasing function (Assumption \ref{ass-loss-cvx}), $\gprimebar$ will be large as well. 
So we can interpret $\gprimebar$ as being a proxy for "average model error".
Thus during the initial iterations of SGD, this quantity will tend to be large and positive, and shrinks toward zero as the model approaches optimality.
Since $\gprimebar$ appears as a factor in (\ref{eq-thm-wzero}) and (\ref{eq-thm-wnonzero}), we can conclude that the above effects will be more pronounced in the initial stages of SGD and less so in the later stages.
The experimental results described in Section \ref{sec-exp} are consistent with several of the above effects.

\section{Generalization of Theorem \ref{thm-exp-sgd} for the multi-class setting}\label{sec:multi-class}
\subsection{Setting and Assumptions}
Let there be $k\geq 3$ classes. For a given data point $\mathbf{x}\in \mathbb{R}^d$,  its true label,  $i \in [k]$, is represented by a vector $\mathbf{y}=[\underbrace{-1 \cdots -1}_\text{i-1} 1 \underbrace{-1 \cdots -1}_\text{k-i}]$. We assume that the input $(\mathbf{x},\mathbf{y})$ is drawn from the distribution $\mathcal{D}$. For this multi-class classification problem, we assume the usage of the standard one-vs-all classifiers, i.e., there are $k$ different classifiers with the $i$-th classifier (ideally) predicting $+1$ iff the true label of $\mathbf{x}$ is $i$, else it predicts $-1$. Let $\mathbf{w}$ represent the $k\times d$ weight matrix where $\mathbf{w_i}$ represents the $1\times d$ weight vector for the $i$-th classifier. $w_{ij}$ represents the $j$-th entry of $\mathbf{w_i}$.
Let $y_i$ represent the $i$-th entry of $\mathbf{y}$.

The assumptions presented in the main paper (Sec. \ref{sec-notation}) are slightly tweaked as follows and hold true for each of the $k$ one-vs-all classifiers:
\\\\
\textbf{Assumption LOSS-INC}: \textit{The loss function for each of the one-vs-all classifier is of the form $\mathcal{L}(\mathbf{x},y_i;\mathbf{w_i}) = g(-y_i\langle \mathbf{w_i},\mathbf{x} \rangle)$ where $g$
 is a non-decreasing function}.\\\\
 \textbf{Assumption LOSS-CVX}: \textit{The loss function for each of the one-vs-all classifier is of the form $\mathcal{L}(\mathbf{x},y_i;\mathbf{w_i}) = g(-y_i\langle \mathbf{w_i},\mathbf{x} \rangle)$ where $g$
 is  non-decreasing, almost-everywhere differentiable and convex.}\\\\
 \textbf{Assumption FEAT-INDEP}: \textit{The features $\mathbf{x}$ are conditionally independent given the label $y_i$ for the $i$-th one-vs-all classifier, i.e., for any two distinct induces $s,t$, $x_s$ is independent of $x_t$ given $y_i$, or more compactly, $(x_s\perp x_t)\thinspace | \thinspace y_i$.}\\\\
 \textbf{Assumption FEAT-EXP}: \textit{For each feature $x_j, j \in [d]$ and the $i$-th one-vs-all classifier $\mathbb{E}(x_j|y_i)=a_{ij}.y_i$ for  some constant $a_{ij}$}.\\\\
 Additionally, we introduce a new assumption on the distribution $\mathcal{D}$ as follows.\\\\
 \textbf{Assumption  DIST-EXPC}: \textit{The input distribution $\mathcal{D}$ satisfies the following expectation for a function $h_i, i \in [k]$ (defined by Eqs. \eqref{a1},\eqref{a2}, \eqref{a3}) and constant $g^*$ (defined by Eq. \eqref{g*})} \begin{gather}\mathbb{E}\Big[h_i(\text{sgn}(w_{ij})y_i,\mathbf{x},\mathbf{w_i},\epsilon)\Big]=0\label{E}\end{gather} 
\begin{gather*}Pr_{\mathcal{D}}[ \epsilon < x
_j < \epsilon + \rho]=0, \hspace{0.3cm}\rho \text{ is a small constant} \numberthis\label{rho} \\x_j\geq \epsilon+\rho\implies (x_j-\epsilon)g^*_i\geq (x_j+\epsilon)g'(-y_i\langle \mathbf{w_i},\mathbf{x}+\delta^*
\rangle)\numberthis \label{g*}\\\hspace{-8.2cm}\text{ If } y_i\text{sgn}(w_{ij})=-1, \text{ then}\\h(\text{sgn}(w_{ij})y_i,\mathbf{x},\mathbf{w_i},\epsilon) \geq (x_j+\epsilon)g^*_i-(x_j-\epsilon) g'(-y_i\langle \mathbf{w_i},\mathbf{x}+\delta^*
\rangle)\numberthis \label{a1}\\  \hspace{-7.2cm}\text{ If }  y_i\text{sgn}(w_{ij})=1 \wedge x_j > \epsilon, \text{ then}\\ -\Big((x_j-\epsilon)g^*_i- (x_j+\epsilon)g'(-y_i\langle \mathbf{w_i},\mathbf{x}+\delta^*
\rangle)\Big)\leq h(\text{sgn}(w_{ij})y_i,\mathbf{x},\mathbf{w_i},\epsilon) \leq 0 \numberthis \label{a2}\\ \hspace{-7.2cm}\text{ If }  y_i\text{sgn}(w_{ij})=1 \wedge x_j \leq \epsilon, \text{ then}\\h(\text{sgn}(w_{ij})y_i,\mathbf{x},\mathbf{w_i},\epsilon)\geq (x_j+\epsilon)g'(-y_i\langle\mathbf{w_i},\mathbf{x}+\delta^*\rangle)-(x_j-\epsilon)g^*_i\numberthis\label{a3}\end{gather*}
This assumption is not as restrictive as it may appear. Eq. \ref{rho} can be satisfied naturally for discrete domains. For example, for images $x_j \in \{0, 1, 2,\cdots, 254, 255\}$; thus $\rho \in (0,1)$. For continuous domains, $\rho$ can be set to a small value and the   values of $x_j$ can be appropriately rounded  in the input dataset. \\ For the rest of the discussion let us consider the case where $g'(z)=c$ (for example for hinge loss function $c=1$ for $z>-1$) and $x_j \in [0,1]$. Now consider,
\begin{gather*}
g^*=(1+\epsilon)c/\rho, \hspace{0.1cm} \rho=0.01\\
f_1(x):=((x+\epsilon)g^*-(x-\epsilon)c)\\
f_2(x):=((x+\epsilon)c -(x-\epsilon)g^*)\\h_i(\text{sgn}(w_{ij})y_i,\mathbf{x},\mathbf{w_i},\epsilon)=\begin{cases}f_1(x_j)\hspace{0.4cm} \mbox{if } \text{sgn}(w_{ij})y_i=+1 \\f_2(x_j)\hspace{0.4cm} \mbox{otherwise}\end{cases}\\
\hspace{-1.1cm}\mathbb{E}[h_i(\text{sgn}(w_{ij})y_i,\mathbf{x},\mathbf{w_i},\epsilon)]=\int_{0}^1 Pr[x_j|y_i\text{sgn}(w_{ij})=-1]\cdot f_1(x_j)dx +\\\hspace{1cm} \int_{\epsilon}^1\hspace{-0.2cm}Pr[x_j|y_i\text{sgn}(w_{ij})=+1]\cdot f_2(x) dx+ \int_{0}^\epsilon Pr[x_j|y_i\text{sgn}(w_{ij})=+1]\cdot f_2(x) dx
\end{gather*}
We observe that $f_1(x)$ is increasing in $x\in[0,1]$, and $x\geq\epsilon+\delta  \implies f_2(x)\leq 0$ and $f_2(x)$ is decreasing in $x \in [\epsilon+\delta,1]$. 
Thus intuitively for $\mathbb{E}(h_i)$ to be zero, $Pr[x_j|\text{sgn}(w_{ij})y_i=-1]$ must have high values for lower magnitudes of $x_j$ (say $x_j<0.5$), and $Pr[x_j|\text{sgn}(w_{ij})y_i=-1]$ has low values for $x_j
\leq\epsilon$ and high values for $x_j\geq\epsilon+\delta$. 
For example, let us assume $\epsilon=0.1$ and that the distributions $Pr[x_i|\text{sgn}(w_{ij})y_i=+1]$  and $Pr[x_i|\text{sgn}(w_{ij})y_i=-1]$ can be approximated by truncated Gaussian distributions (with appropriate adjustments to ensure $Pr[\epsilon < x_j < \epsilon + \delta]=0]$) with means $m_1$ and $m_2$ respectively. Then, it can be seen that there exists $h_i$ for $m_1<0.3$ and $m_2>0.6$ such that $\mathbb{E}[h_i]=0$.  \\\\
The overall loss function, $\mathcal{L}_T$ for the multi-class classifier is the sum of the loss functions of each individual one-vs-all classifiers and is given by  \begin{gather*}\mathcal{L}_T(\mathbf{x},\mathbf{y};\mathbf{w})=\sum_{i=1}^k\mathcal{L}(\mathbf{x},y_i;\mathbf{w_i})\\=\sum_{i=1}^k g(-y_i\langle\mathbf{w_i},\mathbf{x}\rangle)\hspace{1cm}[\mbox{From Assumption \textbf{LOSS-CVX}}]\end{gather*}
The \textbf{expected SGD update } $\overline{\Delta w_{ij}}$ is defined as follows:
\begin{gather}  \Delta w_{ij} =
    \begin{cases}
      \mathbb{E}\frac{\partial \mathcal{L}_T(\mathbf{x}+\delta^*,\mathbf{y};\mathbf{w})}{\partial w_{ij}} & \text{when $w_i=0$}\\
      -\mbox{sgn}(w_{ij})\mathbb{E}\frac{\partial \mathcal{L}_T(\mathbf{x}+\delta^*,\mathbf{y};\mathbf{w})}{\partial w_{ij}}
      & \text{when $w_i\neq 0$}
    \end{cases}   \end{gather}
    Also let \begin{gather} 
    \overline{g'_i}:=\mathbb{E}[g'(-y_i\langle\mathbf{w_i},\mathbf{x}+\delta^*\rangle)], i \in [k] \label{g2}\end{gather}

\begin{restatable}[Expected SGD Update in Adversarial Training for Multi-Class Classification]{thm}{thmmulticlass} \label{thm:multi_class} For any loss function $\mathcal{L}$ satisfying assumptions \textbf{LOSS-CVX}, \textbf{FEAT-INDEP} and \textbf{FEAT-EXP}, if a data point $(\mathbf{x},\mathbf{y})$ is randomly drawn from $\mathcal{D}$ that satisfies Assumption \textbf{DIST-EXPC}, and $\mathbf{x}$ is perturbed to $\mathbf{x}'=\mathbf{x}+\delta^*$, where $\delta^*$ is an $l_\infty(\epsilon)$-adversarial perturbation, then under the $l_\infty(\epsilon)$-adversarial loss $\mathcal{L}_T(\mathbf{x},\mathbf{y};\mathbf{w})$  the expected SGD-update of weight $w_{ij}$, namely $\overline{\Delta w_{ij}}$ satisfies the following properties
\begin{enumerate}\item If $w_{ij}=0$, then \begin{gather*} \overline{\Delta w_{ij}} = a_{ij}\overline{g'_i}\end{gather*} 
\item If $w_{ij}\neq 0$, then \begin{gather*} \overline{\Delta w_{ij}} \leq \tilde{g}[a_{ij}\thinspace \mbox{sgn}(w_{ij})-\epsilon]\\ \tilde{g} \in \{\overline{g'_i},g^*_i\}\end{gather*}
\end{enumerate}
\end{restatable}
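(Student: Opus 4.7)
The plan is to first reduce the analysis to a single one-vs-all classifier. Since $\mathcal{L}_T(\mathbf{x}, \mathbf{y}; \mathbf{w}) = \sum_{\ell=1}^{k} \mathcal{L}(\mathbf{x}, y_\ell; \mathbf{w}_\ell)$ and only the $i$-th summand involves $w_{ij}$, we have $\partial \mathcal{L}_T / \partial w_{ij} = \partial \mathcal{L}(\mathbf{x}+\delta^*, y_i; \mathbf{w}_i) / \partial w_{ij}$. Applying Lemma \ref{lem-grad-loss-adv} with $(y_i, \mathbf{w}_i)$ in place of $(y, \mathbf{w})$ gives
\[
-g'\bigl(\varepsilon \|\mathbf{w}_i\|_1 - y_i \langle \mathbf{w}_i, \mathbf{x}\rangle\bigr)\bigl(y_i x_j - \mathrm{sgn}(w_{ij})\varepsilon\bigr),
\]
so $\overline{\Delta w_{ij}}$ is (up to sign) the expectation of this quantity. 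Note that this step implicitly treats the adversary as per-classifier, which matches the form of the $g'$ argument that appears throughout \eqref{a1}--\eqref{a3}.

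For Part 1 ($w_{ij}=0$), the argument is a direct specialization of Theorem \ref{thm-exp-sgd}(1) to the singleton $S=\{j\}$. Because $w_{ij}=0$, the $g'$ argument is free of $x_j$; FEAT-INDEP then lets me condition on $y_i$ and factor $\mathbb{E}[y_i x_j\, g'(\cdot)] = \mathbb{E}\bigl[\mathbb{E}[y_i x_j\mid y_i]\cdot\mathbb{E}[g'(\cdot)\mid y_i]\bigr]$. FEAT-EXP replaces $\mathbb{E}[x_j\mid y_i]$ by $a_{ij} y_i$, so $\mathbb{E}[y_i x_j\mid y_i] = a_{ij} y_i^2 = a_{ij}$, and the outer expectation of the $g'$ factor is $\overline{g'_i}$ by \eqref{g2}, yielding $\overline{\Delta w_{ij}} = a_{ij}\,\overline{g'_i}$.

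For Part 2 ($w_{ij}\neq 0$), the $g'$ argument now depends on $x_j$, so clean factorization fails. Beginning from
\[
\overline{\Delta w_{ij}} = \mathbb{E}\!\left[g'(\cdot)\bigl(\mathrm{sgn}(w_{ij})\,y_i x_j - \varepsilon\bigr)\right],
\]
I would add the zero quantity $\mathbb{E}[h_i(\mathrm{sgn}(w_{ij})y_i,\mathbf{x},\mathbf{w}_i,\varepsilon)]$ guaranteed by \eqref{E} and then split the sample space by $\mathrm{sgn}(w_{ij})y_i\in\{\pm1\}$ and, within the $+1$ branch, by $x_j\leq \varepsilon$ versus $x_j\geq \varepsilon+\rho$ (the sliver $(\varepsilon,\varepsilon+\rho)$ carries no mass by \eqref{rho}). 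On each of the three regions, the matching inequality among \eqref{a1}--\eqref{a3}, combined with \eqref{g*}, provides a pointwise bound that replaces the intractable $g'(\cdot)$ either by the constant $g^*_i$ or by a factor whose expectation is $\overline{g'_i}$. Once $g'$ is removed from the residual expectation over $x_j$, FEAT-INDEP and FEAT-EXP factor it exactly as in Part 1, collapsing the three regional contributions into the unified bound $\tilde g\,(a_{ij}\,\mathrm{sgn}(w_{ij}) - \varepsilon)$ with $\tilde g\in\{\overline{g'_i},g^*_i\}$.

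The main obstacle will be the region-by-region bookkeeping in Part 2: the function $h_i$ is engineered precisely so that, term by term across the three sub-cases, the bounds \eqref{a1}--\eqref{a3} telescope into a single clean upper bound, and the measure-zero gap \eqref{rho} is what makes the split valid without boundary corrections. Once the correct inequality is matched to each region and the sign of $y_i\,\mathrm{sgn}(w_{ij})$ is tracked consistently, the remaining ingredients---LOSS-CVX to ensure $g'$ is non-decreasing and well-defined, and FEAT-INDEP to factor the residual $x_j$-integral---should make the rest of the argument essentially routine.
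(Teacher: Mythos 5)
Your Part 1 matches the paper's argument and is fine. The problem is in your reduction step and in Part 2. In the multi-class setting the adversary chooses $\delta^*$ to maximize the \emph{total} loss $\mathcal{L}_T=\sum_{s=1}^k g(-y_s\langle\mathbf{w}_s,\mathbf{x}+\delta\rangle)$, so the component $\delta^*_j$ need not equal $-y_i\,\mathrm{sgn}(w_{ij})\varepsilon$: different one-vs-all classifiers can pull $x_j$ in opposite directions, and the globally optimal perturbation may move $x_j$ in the direction $+y_i\,\mathrm{sgn}(w_{ij})$, i.e.\ \emph{favorably} for classifier $i$. Your proposal explicitly "treats the adversary as per-classifier," which collapses the analysis onto only one of the two possibilities. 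The paper's proof is organized around exactly this dichotomy: in Case I (Eq.~\eqref{pos}) the multiplier is $(y_i x_j\,\mathrm{sgn}(w_{ij})-\varepsilon)$ and the bound $\overline{\Delta w_{ij}}\le \overline{g'_i}\,[a_{ij}\,\mathrm{sgn}(w_{ij})-\varepsilon]$ follows from Lemma~\ref{lem-exp-bound} just as in Theorem~\ref{thm-exp-sgd}, with no need for $h_i$ at all; in Case II (Eq.~\eqref{neg}) the multiplier is $(y_i x_j\,\mathrm{sgn}(w_{ij})+\varepsilon)$, and the entire Assumption \textbf{DIST-EXPC} apparatus --- Eqs.~\eqref{g*} and \eqref{a1}--\eqref{a3} together with $\mathbb{E}[h_i]=0$ from \eqref{E} --- exists solely to convert the pointwise quantity $(x_j+\varepsilon)g'(\cdot)$ into $(x_j-\varepsilon)g^*_i$ plus a zero-mean correction, yielding the bound with $\tilde g=g^*_i$. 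That is why the theorem's conclusion offers two possible constants $\tilde g\in\{\overline{g'_i},g^*_i\}$: one per case.

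Concretely, your Part 2 plan would fail as written: starting from $\mathbb{E}\bigl[g'(\cdot)\bigl(\mathrm{sgn}(w_{ij})y_i x_j-\varepsilon\bigr)\bigr]$ and adding $\mathbb{E}[h_i]$ does not engage the inequalities \eqref{a1}--\eqref{a3}, because those inequalities bound expressions of the form $(x_j+\varepsilon)g'(\cdot)$ against $(x_j-\varepsilon)g^*_i$ (and vice versa); they are not statements about the $(x_j-\varepsilon)g'(\cdot)$ form you begin with, and in that form the result already follows from Lemma~\ref{lem-exp-bound} without any of this machinery. Also note that the region split in \eqref{a1}--\eqref{a3} is a case analysis on the sign of $y_i\,\mathrm{sgn}(w_{ij})$ and on $x_j\lessgtr\varepsilon$ used to verify a single pointwise inequality (Eq.~\eqref{wf}) before taking one expectation; it is not a partition of the sample space over which three separate expectations are telescoped. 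To repair the proof, you need to first establish that $\partial\mathcal{L}_T/\partial w_{ij}$ takes one of the two forms \eqref{pos} or \eqref{neg} depending on the direction the global adversary actually perturbs $x_j$, and then run the two cases separately.
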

\begin{proof}For \begin{gather}\delta^* \in \mathbb{R}^d \text{ s.t}\\\mathcal{L}_T(\mathbf{x}+\delta^*,\mathbf{y};\mathbf{w})=\mathbb{E}_{\mathbf{x},\mathbf{y}\sim \mathcal{D}}[\max_{||\delta||_{\infty}\leq \epsilon} \mathcal{L}_T(\mathbf{x}+\delta,\mathbf{y};\mathbf{w})]
\end{gather} the shift by $\epsilon$ in $x_j$ can be in either of the two directions, $y_i \text{sgn}(w_ij)$ or $-y_i\text{sgn}(w_ij)$. We have, 
\begin{gather}\frac{\partial\lT)}{\partial w_{ij}}=\sum_{s=1}^k\frac{\partial\big(  g(-y_s\langle\mathbf{w}_s,\mathbf{x}+\delta^*\rangle)\big)}{\partial w_{ij}}=\frac{\partial g(-y_i\langle\mathbf{w}_i,\mathbf{x}+\delta^*\rangle)}{\partial w_{ij}}\end{gather}
Thus by  Assumption \textbf{LOSS-CVX} either of the following two equations hold true 
\begin{gather}\frac{\partial\lT)}{\partial w_{ij}}=g'(-y_i\langle\mathbf{w}_i,\mathbf{x}+\delta^*\rangle)\sgnpos \label{pos}\\\frac{\partial\lT)}{\partial w_{ij}}=g'(-y_i\langle\mathbf{w}_i,\mathbf{x}+\delta^*\rangle)\sgnneg \label{neg}\end{gather}

Thus when $w_{ij}=0$,
\begin{gather*}\overline{\Delta w_{ij}}=\mathbb{E}\Big[y_ix_j
g'(-y_i\langle\mathbf{w_i},\mathbf{x}+\delta^*\rangle)\Big]\\=a_{ij}\overline{g'_i}[\text{Follows from the proof in Theorem \ref{thm-exp-sgd} in the paper}]\end{gather*}
Now let us consider the case when $w_{ij}\neq 0$.\\
\textcolor{red}{\textbf{Case I}}: Eq.~\ref{pos} is satisfied\\
This means that for $\delta^*$, $x_j$ is changed by $\epsilon$ in the direction of $-y_i\text{sgn}(w_{ij})$.
Thus after multiplying throughout with $-\text{sgn}(w_{ij})$ and taking expectations, we have
\begin{gather}\overline{\Delta w_{ij}}=\mathbb{E}\Big[[y_ix
_j\text{sgn}(w_{ij})-\epsilon]g'(\sum_{l=1, l\neq j}^ds_l\epsilon |w_{il}|+\epsilon|w_{ij}|-y_i\langle\mathbf{w_i},\mathbf{x}\rangle)\Big]\end{gather}
where $s_l$ represents the corresponding sign for the value $\epsilon|w_l|$ (based on which direction is $x_l$ perturbed in) and the expectation is with respect to a random choice of data point $(\mathbf{x},\mathbf{y})$. Let us define two random variables $V$ and $Z$ as follows \begin{gather}Z=y_ix_j\text{sgn}(w
_{ij})-\epsilon\\
V=-\sum_{l=1, l \neq j}^d |w_{il}|\Big(y_ix_l\text{sgn}(w_{il})-s_l\epsilon\Big)\end{gather} Thus, \begin{gather}\overline{\Delta w_{ij}}=\mathbb{E}[Zg'(V-|w_{ij}|Z)]\end{gather}
Let random variable $Y$ correspond  to the label $y_i$ of the data point. Since Z is a function of feature $x_j$ and 
$Y$ , and 
$V$ is a function of the remaining features and $Y$, Assumption \textbf{FEAT-INDEP}  implies $(V \perp Z)|Y$. Additionally by Assumption \textbf{FEAT-EXP} \begin{gather}
\mathbb{E}(Z) = \mathbb{E}(Z|Y) = a_j \text{sgn}(w_{ij}) - \epsilon \end{gather}
Since by Assumption \textbf{LOSS-CVX}, g' is a non-decreasing function, $g'(V - |w_{ij}|Z)$ is non-increasing in $Z$. Thus all three conditions of Lemma \ref{lem-exp-bound} are satisfied, with the random variables $Z, V, Y$ and function $f$ in the Lemma corresponding to random variables $Z, V, Y$ and function $g'$ respectively in the present theorem. Following an analysis similar to the one presented in the proof for Theorem \ref{thm-exp-sgd}, we have
\begin{gather}
\overline{\Delta w_{ij}} \leq \mathbb{E}(Z)\overline{g'_i}  = \overline{g'_i}[a_{ij} \text{sgn}(w_{ij}) - \epsilon]
\end{gather}
\textcolor{red}{\textbf{Case II:}} Eq. \ref{neg} is satisfied\\In this case, for $\delta^*$ $x_j$ is perturbed by $\epsilon$ in the direction $y_i\text{sgn}(w_{ij})$. Now multiplying both sides by $-\text{sgn}(w_{ij})$ 
\begin{gather}\Delta w_{ij}=(y_ix_j\text{sgn}(w_{ij})+\epsilon)g'(-y_i\langle\mathbf{w_i},\mathbf{x}+\delta^*\rangle)  \label{w} \end{gather}

Now let us consider the case when $y_i\text{sgn}(w_{ij})=-1$.
From Assumption \textbf{DIST-EXPC} (Eqs. \eqref{w},\eqref{g*} and \eqref{a1}), we have \begin{gather}\Delta w_{ij}\leq (y_ix_j\text{sgn}(w_{ij})-\epsilon)g^*_i+h_i(\text{sgn}(w_{ij})y_i,\mathbf{x},\mathbf{w_i},\epsilon)\label{wf}\end{gather}
For the case when $y_i\text{sgn}(w_{ij})=-1 \wedge x_j > \epsilon$, again from Eqs. \eqref{w},\eqref{g*} and \eqref{a2} Eq. \eqref{wf} holds true. Similarly, for the case of $y_i\text{sgn}(w_{ij})=-1 \wedge x_j \leq \epsilon$, the validity of Eq. \eqref{wf} can be verified from Eqs. \eqref{w},\eqref{g*} and \eqref{a3}.
Now taking expectations over both sides of Eq. \eqref{wf} results in \begin{gather}\overline{\Delta w_{ij}}\leq (a_{ij}\text{sgn}(w_{ij})-\epsilon)g^*_i \hspace{1cm}[\text{From Eq. \eqref{E}}] \end{gather}
This concludes our proof.
\end{proof}

\section{Proof of Lemma \ref{lem-1-layer-attr}}
\label{app-lem-ig}

\lemig*
\begin{proof}
\renewcommand{\sigma}{A}
Since the function $F$, 
the baseline input $\bfu$ and weight vector $\bfw$ are fixed, we omit them from $\igfxu$ and $\igifxu$ for brevity.
Consider the partial derivative $\partial_i F(\bfu + \alpha(\bfx-\bfu))$ in the definition (\ref{eq-ig-def}) of $\IG_i(\bfx)$. For a given $\bfx$, $\bfu$ and $\alpha$, let $\bfv$ denote the vector $\bfu + \alpha(\bfx- \bfu)$. 
Then $\partial_i F(\bfv) = \partial F(\bfv)/\partial v_i$, and by applying the chain rule we get:
\newcommand\aprimewv{A'(z)}
\begin{align*}
	\partial_i F(\bfv) &:= 
	 \frac{\partial F(\bfv)}{\partial v_i} = 
	 \frac{\partial A(\<\bfw, \bfv>)}{\partial v_i} = 
	 A'(z) \frac{\partial \<\bfw, \bfv>}{\partial v_i} = 
	 w_i A'(z),
\end{align*}
where $A'(z)$ is the gradient of the activation $A$ at $z = \<\bfw, \bfv>$.
This implies that:
\begin{align*}
\frac{\partial F(\bfv)}{\partial \alpha} 
			&= \sum_{i=1}^d \left( \frac{\partial F(\bfv)}{\partial v_i} 
                             \frac{\partial v_i}{\partial \alpha} \right)\\
			&= \sum_{i=1}^d [ \theta_i \aprimewv (x_i - u_i) ] \\
			&= \< \bfx - \bfu, \bfw> \aprimewv
\end{align*}
We can therefore write 
\begin{equation*}
	dF(\bfv) = \<\bfx - \bfu, \bfw > \aprimewv d\alpha,
\end{equation*}
and since $\<\bfx - \bfu, \bfw >$ is a scalar, this yields
\begin{equation*}
	\aprimewv d\alpha = \frac{dF(\bfv)}{\< \bfx - \bfu, \bfw >} 
\end{equation*}
Using this equation the integral in the definition of $IG_i(x)$ can be written as

\begin{align}
	\int_{\alpha=0}^1 \partial_i F(\bfv) d\alpha 
	 &= 	\int_{\alpha=0}^1 \theta_i \aprimewv d\alpha \nonumber \\ 
	 &= 	\int_{\alpha=0}^1 \theta_i \frac{d F(\bfv)}
	                            { \<\bfx - \bfu, \bfw >} \nonumber\\
	 &= 	\frac{w_i}{\< \bfx - \bfu,  \bfw >} \int_{\alpha=0}^1 d F(\bfv)  
	 \label{eq-ig-prf-int}\\
	 &= 	\frac{\theta_i}{\<\bfx - \bfu, \bfw >} [ F(\bfx) - F(\bfu)], \nonumber	 
\end{align}
where (\ref{eq-ig-prf-int}) follows from the fact that $(\bfx - \bfu)$ and $\bfw$ do not depend on $\alpha$. Therefore from the definition (\ref{eq-ig-def}) of $\IG_i(\bfx)$:
\begin{equation*}
	\IG_i(\bfx) =  [ F(\bfx) - F(\bfu) ] \frac{ (x_i - u_i)w_i }
	                                           {\< \bfx - \bfu, \bfw >},
\end{equation*}
and this yields the expression (\ref{eq-ig-1-layer}) for $\IG(\bfx)$.
\end{proof}

\section{Proof of Theorem \ref{thm-stable}}
\label{app-thm-stable}

\thmstable*
\begin{proof}
Recall that Assumption \ref{ass-loss-cvx} implies 
$\lossxyw = g(-y \wdotx)$ for some non-decreasing, differentiable, convex function $g$.
Due to this special form of $\lossxyw$,  the function $\lossy$ is a differential function of $\wdotx$, and by 
 Lemma \ref{lem-1-layer-attr} the $i$'th component of the $\IG$ term in (\ref{eq-stable-robust}) is 
\beqstar
\IG_i^{\lossy}(\bfx, \bfx'; \; \bfw) = 
    \frac{\bfw_i(\bfx' - \bfx)_i}{\langle \bfw, \bfx'-\bfx \rangle}
    \cdot 
    \big(g(-y\langle \bfw, \bfx'\rangle) - 
           g(-y\langle \bfw,  \bfx\rangle),
    \big)
\eeqstar
and if we let $\Delta = \bfx'-\bfx$ (which satisfies that $\|\Delta\|_\infty \le \varepsilon)$, its absolute value can be written as 
  \begin{align*}
    \frac{\big|
        g(
          -y\langle \bfw, \bfx\rangle - 
           y\langle \bfw, \Delta \rangle 
           )
    \;-\; g(-y\langle \bfw, \bfx\rangle)
       \big|}
        {|\langle \bfw, \Delta \rangle|}
    \cdot |\bfw_i\Delta_i|
  \end{align*}
Let $z = -y\langle \bfw, \bfx \rangle$ and $\delta = -y\langle \bfw, \Delta\rangle$, this is further
  simplified as $\frac{|g(z+\delta)-g(z)|}{|\delta|}|w_i \Delta_i|$.
By Assumption \ref{ass-loss-cvx}, $g$ is convex, and therefore 
the "chord slope" $[g(z+\delta) - g(z)]/\delta$ cannot decrease as 
$\delta$ is increased. 
In particular  
  to maximize the $\ell_1$-norm  of the $\IG$ term in Eq (\ref{eq-stable-robust}),
  we can set $\delta$ to be largest possible value subject to the constraint   $||\Delta||_\infty \leq \varepsilon$,
  and we achieve this by setting 
  $\Delta_i=-y\sign(\bfw_i)\varepsilon$,
   for each dimension $i$.
   This yields  $\delta = \|\bfw\|_1\varepsilon$, and  
  the second term on the LHS of (\ref{eq-stable-robust}) becomes
  \begin{align*}
    |g(z+\delta) - g(z)| \cdot \frac{\sum_i|\bfw_i\Delta_i|}{|\delta|}
    &= |g(z+\varepsilon\|\bfw\|_1) - g(z)| \cdot \frac{\sum_i|\bfw_i|\varepsilon}{\|\bfw\|_1\varepsilon} \\
    &= |g(z+\varepsilon\|\bfw\|_1) - g(z)| \\
    &= g(z+\varepsilon\|\bfw\|_1) - g(z)
  \end{align*}
  where the last equality follows because $g$ is nondecreasing. 
  Since $\lossxyw = g(z)$ by Assumption \ref{ass-loss-cvx},  the LHS of (\ref{eq-stable-robust}) simplifies to
  $$g(-y\langle \bfw, \bfx \rangle + \varepsilon\|\bfw\|_1),$$
  and by Eq. (\ref{eq-adv-closed-form}), this is exactly the 
  $\linfeps$-adversarial loss on the RHS of (\ref{eq-stable-robust}).
\end{proof}

\section{Aggregate IG Attribution over a Dataset} \label{app-agg-ig}

Recall that in Section \ref{sec-attr}
we defined $\igfxu$ in Eq. (\ref{eq-ig-def})
for a \textit{single} input  $\bfx$ (relative to a baseline input $\bfu$). This gives us a sense of the "importance" of each input feature in explaining a \textit{specific} model prediction $F(\bfx)$.
Now we describe some ways to produce \textit{aggregate} importance metrics over an entire dataset. For brevity let us simply write $\IG(\bfx)$ and $\IG_i(\bfx)$ and omit $F$ and $\bfu$ since these are fixed for a given model and a given dataset.

Note that in Eq. \ref{eq-ig-def},
$\bfx$ is assumed to be an input vector in "exploded" space, i.e.,
all categorical features are (explicitly or implicitly) one-hot encoded, 
and 
$i$ is the position-index corresponding to either a specific numerical feature, or a categorical feature-\textit{value}. 
Thus if $i$ corresponds to a categorical feature-value, then for any input $\bfx$ where $x_i = 0$  (i.e. the corresponding categorical feature-value is not "active" for that input), $\IG_i(\bfx) = 0$.
A natural definition of the overall importance of a feature (or feature-value) $i$ for a given model $F$ and dataset $\mathcal{D}$, is the average of $|\IG_i(\bfx)|$ over all inputs $\bfx \in \mathcal{D}$,  which we refer to as the \textbf{Feature Value Impact} $FV_i[\mathcal{D}]$.
For a categorical feature with $m$ possible values, we can further define its \textbf{Feature-Impact} (FI) as the \textit{sum} of 
$FV_i[\mathcal{D}]$ over all $i$ corresponding to possible values of this categorical feature. 

The FI metric is particularly useful in tabular datasets to gain an understanding of the aggregate importance of high-cardinality categorical features. 

\section{Definition of the Gini Index} \label{app-gini}
The definition is adapted from \cite{Hurley2009-iw}: Suppose we are given 
a vector of non-negative values 
$\bfv = [v_1, v_2, v_3, \ldots, v_d]$.
The vector is first \textit{sorted} in non-decreasing order, so that 
the resulting indices after sorting are $(1), (2), (3), \ldots, (d)$, 
i.e., $v_{(k)}$ denotes the $k$'th value in this sequence.
Then the Gini Index is given by:
\begin{equation}
	G(\bfv) = 1 - 2 \sum_{k=1}^d \frac{v_{(k)}}{||\bfv||_1} 
	              \Big ( \frac{d - k + 0.5}{d} \Big). \label{eq-gini}
\end{equation}
Another equivalent definition of the Gini Index is based on plotting the cumulative fractional contribution of the sorted values. In particular if the sorted non-negative values are $[v_{(1)}, v_{(2)}, \ldots, v_{(d)}]$, 
 and for  $k \in [d]$, we 
 plot $k/d$ (the fraction of dimensions up to $k$) vs 
 $\frac{\sum_{i=1}^k v_(i)}{||\bfv||_1}$ (the fraction of values until the $k$'th dimension),
 then the Gini Index $G(\bfv)$ is 0.5 minus the area under this curve

The Gini Index by definition lies in [0,1], and a higher value indicates more sparseness. 
For example if just one of the $v_i > 0$ and all the rest are 0, then $G(v) = 1.0$, indicating perfect sparseness.
At the other extreme, if all $v_i$ are equal to some positive constant, 
then $G(v) = 0$.

\section{Experiments}

\subsection{Experiment Datasets and Methodology}
\label{sec-datasets}

We experiment with 5 public benchmark datasets. Below we briefly describe each dataset and model-training details.

\textbf{MNIST.} 
This is a classic image benchmark dataset consisting of grayscale images of handwritten digits 0 to 9 in the form of 28 x 28 pixels, along with the correct class label (0 to 9) \cite{lecun-mnisthandwrittendigit-2010}. 
We train a Deep Neural Network consisting of two convolutional layers with 32 and 64 filters respectively,
each followed by 2x2 max-pooling, and a fully connected layer of size 1024.
Note that this is identical to the state-of-the-art adversarially trained model used by \cite{Madry2017-zz}.
We use 50,000 images for training, and 10,000 images for testing. When computing the IG vector for an input image, we use the predicted probability of the \textit{true class} as the function $F$ in the definition (\ref{eq-ig-def}) of IG. 
For training each of the model types on MNIST, we use the Adam optimizer with a learning rate $10^{-4}$, with a batch size of 50. For the naturally-trained model (with or without $\ell_1$-regularization)
we use 25,000 training steps. For adversarial training, we use 100,000 training steps overall,
and to generate adversarial examples we use Projected Gradient Descent (PGD) with random start.
The PGD hyperparameters depend on the specific $\varepsilon$ bound on the $\linf$-norm of the adversarial perturbations: the number of PGD steps was set as $\varepsilon * 100 + 10$,
and the PGD step size was set to $0.01$.

\textbf{Fashion-MNIST.} This is another image benchmark dataset which is a drop-in replacement for MNIST \cite{Xiao2017-zp}. Images in this dataset depict wearables
such as shirts and boots instead of digits. The image format,
the number of classes, as well as the number of train/test examples are
all identical to MNIST. We use the same model and training details as for MNIST.

\textbf{CIFAR-10.} The CIFAR-10 dataset~\cite{krizhevsky2009learning} is a dataset of 32x32 color images with ten classes, each consisting of 5,000 training images and 1,000 test images. The classes correspond to dogs, frogs, ships, trucks, etc. The pixel values are in range of $[0, 255]$. We use a wide Residual Network~\cite{he2016deep}, which is identical to the state-of-art adversarially trained model on CIFAR-10 used by \cite{Madry2017-zz}. When computing the IG vector for an input image, we use the predicted probability of the \textit{true class} as the function $F$ in the definition (\ref{eq-ig-def}) of IG.  For training each of the model types on CIFAR-10, we use Momentum Optimizer with weight decay. We set momentum rate as 0.9, weight decay rate as 0.0002, batch size as 128, and training steps as 70,000. We use learning rate schedule: the first 40000 steps, we use learning rate of $10^{-1}$; after 40000 steps and before 60,000 steps, we use learning rate of $10^{-2}$; after 60,000 steps, we use learning rate of $10^{-3}$. We use Projected Gradient Descent (PGD) with random start to generate adversarial examples.  The PGD hyperparameters depend on the specific $\varepsilon$ bound on the $\linf$-norm of the adversarial perturbations: the number of PGD steps was set as $\varepsilon + 1$, and the PGD step size was set to $1$.

\textbf{Mushroom.} This is a standard tabular public dataset from the UCI Data Repository \cite{Dua:2017}. The dataset consists of 8142 instances, each of which corresponds to a different mushroom species, and has 22 categorical features (and no numerical features), whose cardinalities are all under 10.
 The task is to classify an instance as edible (label=1) or not (label=0).
We train a simple \textit{logistic regression} model to predict the probability that the mushroom is edible, with a 70/30 train/test split, and use a 0.5 threshold to make the final classification. 
We train the models on 1-hot encoded feature vectors, and the IG computation is on these (sparse) 1-hot vectors, with the output function $F$ being the final predicted probability.
We train logistic regression models for this dataset, and for natural model training (with or without $\ell_1$-regularization) we use the Adam optimizer with a learning rate of 0.01, batch size of 32, and 30 training epochs.
Adversarial training is similar, except that each example batch is perturbed using the closed-form expression (\ref{eq-adv-closed-form}).

\textbf{Spambase.} This is another tabular dataset from the UCI Repository, consisting of 4601 instances with 57 numerical attributes (and no categorical ones). The instances are various numerical features of a specific email, and the task is classify the email as spam (label = 1) or not (label = 0). The model and training details are similar to those for the mushroom dataset.

The code for all experiments (included along with the supplement) was written using Tensorflow 2.0. The following subsections contain results that were left out of the main body of the paper due to space constraints.

\subsection{Mushroom Dataset: Average IG-based Feature Impact}

We contrast between the weights learned by natural training and adversarial training with $\varepsilon=0.1$. 
Since all features in this dataset are categorical, many with cardinalities close to 10, 
there are too many features in the "exploded" space to allow a clean display, so we instead look at the 
average Feature Impact (FI, defined in Section \ref{app-agg-ig}) over the 
(natural, unperturbed) test dataset, see Figure \ref{fig-mushroom-attribs-bars}. 
It is worth noting that several features that have a significant impact on the naturally-trained model have essentially no impact on the adversarially trained model.

\begin{figure}[h] 
\centering
\includegraphics[scale=0.45]{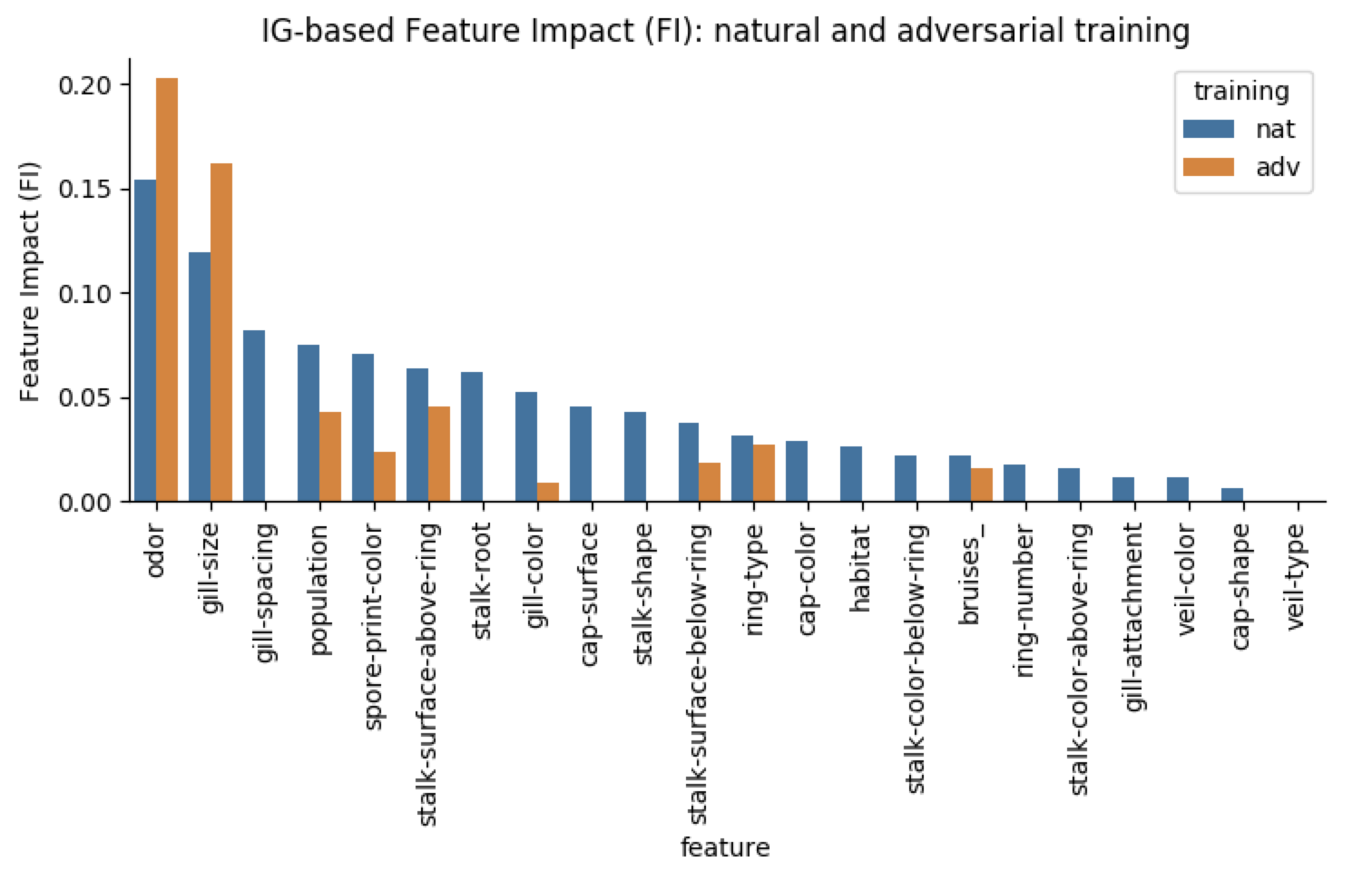}
\caption{\small
Comparison of aggregate Feature Impact (FI) for a naturally-trained model, and an adversarially-trained model 
with $\varepsilon=0.1$, on the mushroom dataset. The features are arranged left to right in decreasing order of the FI value in the naturally-trained model.
\label{fig-mushroom-attribs-bars}}
\end{figure}

\subsection{Spambase: Average IG-based Feature Impact} \label{sec-spambase}

We fix $\varepsilon=0.1$ for adversarial training and show in Figure 
\ref{fig-spambase-attribs-bars}
a bar-plot comparing the average Feature-Impacts (FI), 
between naturally-trained and adversarially-trained models.
Note how the adversarially trained model has significantly fewer features with non-negligible impacts, compared to a naturally trained model.

\begin{figure}[h] 
\centering
\includegraphics[scale=0.5]{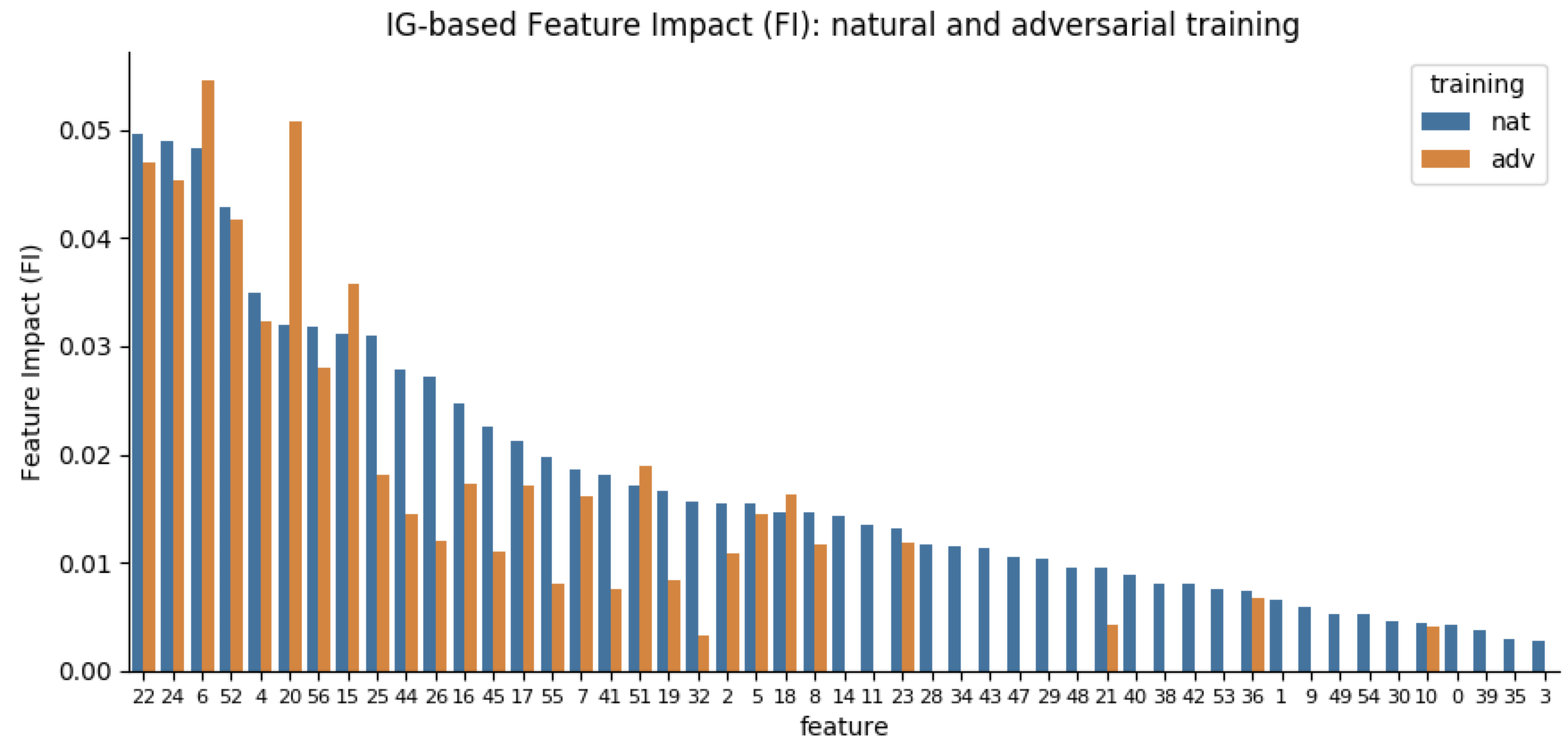}
\caption{\small
Comparison of aggregate Feature Impact (FI) for a naturally-trained model, and an adversarially-trained model 
with $\varepsilon=0.1$, on the spambase dataset.
The features are arranged left to right in decreasing order of their FI in the naturally-trained model.
To avoid clutter, we show only features that have an FI at least 5\% of the highest FI (across both models).
\label{fig-spambase-attribs-bars}}
\end{figure}

\subsection{MNIST, Fashion-MNIST and CIFAR-10: examples}
\label{sec-app-images}
Figs. \ref{fig:demo-mnist-ig}, \ref{fig:demo-fashion-mnist-ig} and \ref{fig:demo-cifar10-ig} below show IG-based saliency maps  of images correctly classified by  three model types: Naturally trained un-regularized model, 
naturally trained model with $\ell_1$-regularization, 
and an $\linfeps$-adversarially trained model.
The values of $\lambda$ and $\varepsilon$ are those indicated in Table \ref{tab-summary}.
In each example, all three models predict the correct class with high probability, 
and we compare the Gini Indices of the IG-vectors (with respect to the predicted probability of the true class).
The  sparseness of the saliency maps of the adversarially-trained models 
is visually striking compared to those of the other two models,
and this is reflected in the Gini Indices as well. 
Figs. \ref{fig:demo-mnist-shap} and \ref{fig:demo-fashion-mnist-shap} show analogous results, but using the DeepSHAP \cite{lundberg2017unified} attribution method instead of IG.
The effect of adversarial training on the sparseness of the saliency maps is even more visually striking when using DeepSHAP, compared to IG (We had difficulty running DeepSHAP on CIFAR-10 data, so we are only able to show results for DeepSHAP on MNIST and Fashion-MNIST).

\begin{figure}[htb]
    \centering 
    \begin{minipage}{\linewidth}
        \centering
         \hspace{4.5cm} \textbf{Natural Training} \hspace{0.2cm} \textbf{L1-norm Regularization} \hspace{0.1cm} \textbf{Adversarial Training}
    \end{minipage}
    \begin{subfigure}{\textwidth}
        \centering
        \begin{subfigure}{0.23\textwidth}
            \includegraphics[width=\linewidth,bb=0 0 449 464]{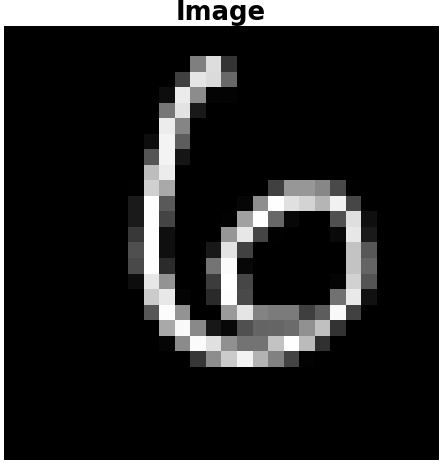} 
            \captionsetup{justification=centering}
            \caption*{}
        \end{subfigure}
        \begin{subfigure}{0.23\textwidth}
            \includegraphics[width=\linewidth,bb=0 0 449 464]{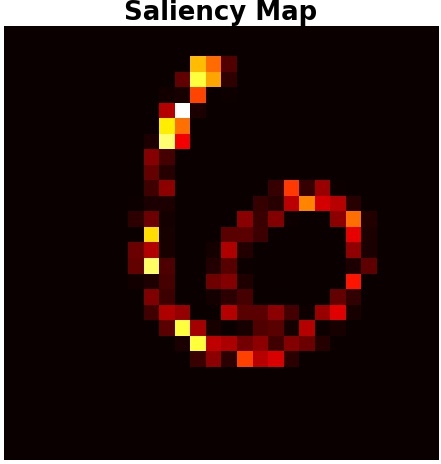} 
            \captionsetup{justification=centering}
            \caption*{Gini: 0.9271}
        \end{subfigure}
        \begin{subfigure}{0.23\textwidth}
            \includegraphics[width=\linewidth,bb=0 0 449 464]{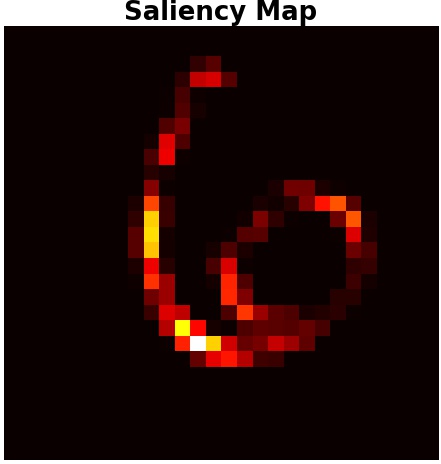} 
            \captionsetup{justification=centering}
            \caption*{Gini: 0.9266}
        \end{subfigure}
        \begin{subfigure}{0.23\textwidth}
            \includegraphics[width=\linewidth,bb=0 0 449 464]{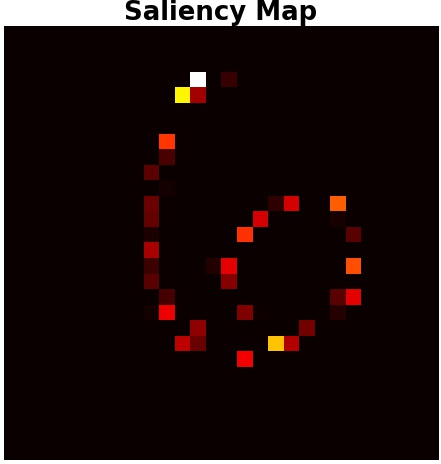} 
            \captionsetup{justification=centering}
            \caption*{Gini: 0.9728}
        \end{subfigure} 
        \caption{For all images, the models give \emph{correct} prediction -- 6.}
    \end{subfigure}
    \begin{subfigure}{\textwidth}
        \centering
        \begin{subfigure}{0.23\textwidth}
            \includegraphics[width=\linewidth,bb=0 0 449 464]{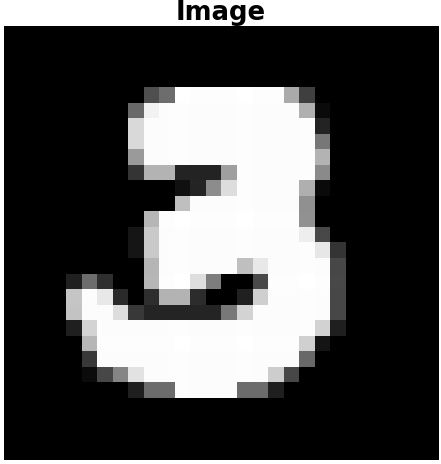} 
            \captionsetup{justification=centering}
            \caption*{}
        \end{subfigure}
        \begin{subfigure}{0.23\textwidth}
            \includegraphics[width=\linewidth,bb=0 0 449 464]{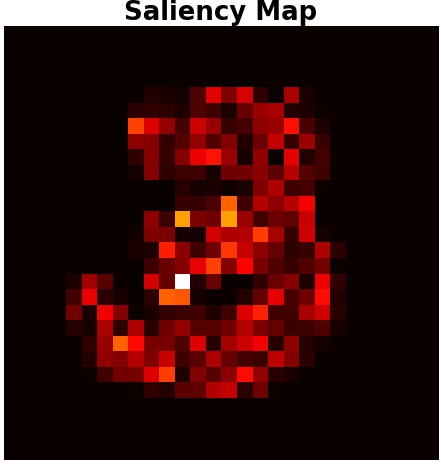} 
            \captionsetup{justification=centering}
            \caption*{Gini: 0.8112}
        \end{subfigure}
        \begin{subfigure}{0.23\textwidth}
            \includegraphics[width=\linewidth,bb=0 0 449 464]{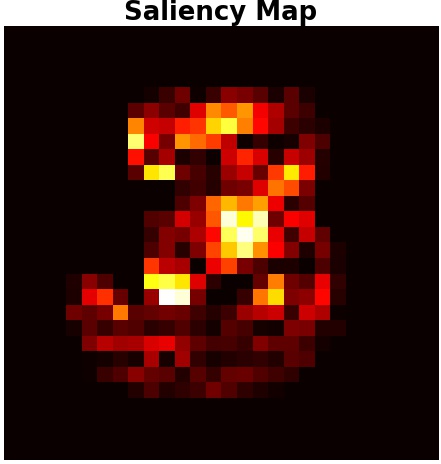} 
            \captionsetup{justification=centering}
            \caption*{Gini: 0.8356}
        \end{subfigure}
        \begin{subfigure}{0.23\textwidth}
            \includegraphics[width=\linewidth,bb=0 0 449 464]{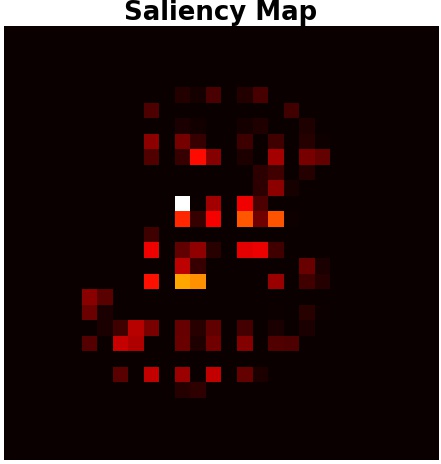} 
            \captionsetup{justification=centering}
            \caption*{Gini: 0.9383}
        \end{subfigure} 
        \caption{For all images, the models give \emph{correct} prediction -- 3.}
    \end{subfigure}
    \begin{subfigure}{\textwidth}
        \centering
        \begin{subfigure}{0.23\textwidth}
            \includegraphics[width=\linewidth,bb=0 0 449 464]{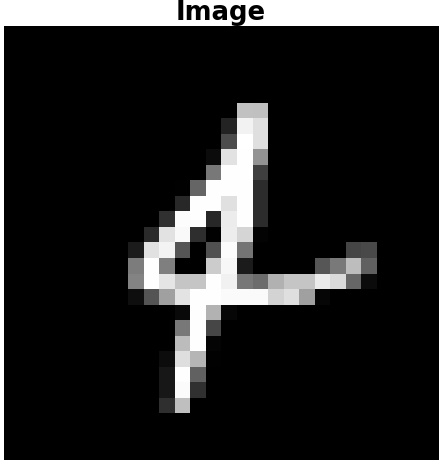} 
            \captionsetup{justification=centering}
            \caption*{}
        \end{subfigure}
        \begin{subfigure}{0.23\textwidth}
            \includegraphics[width=\linewidth,bb=0 0 449 464]{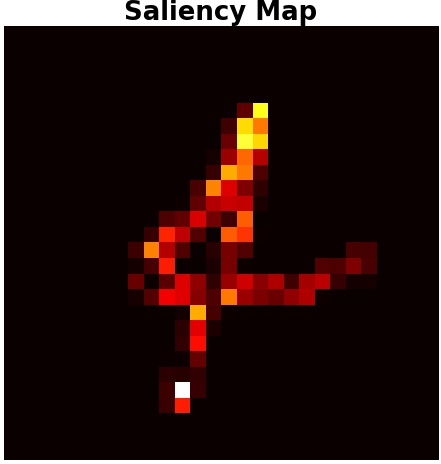} 
            \captionsetup{justification=centering}
            \caption*{Gini: 0.9315}
        \end{subfigure}
        \begin{subfigure}{0.23\textwidth}
            \includegraphics[width=\linewidth,bb=0 0 449 464]{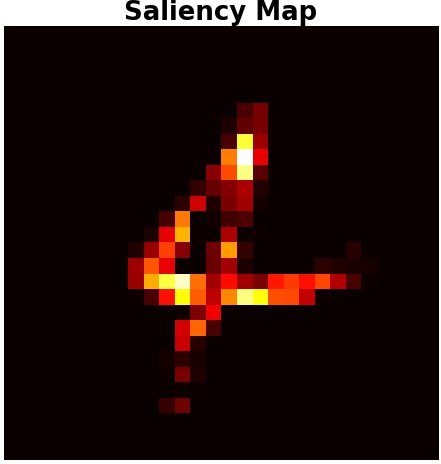} 
            \captionsetup{justification=centering}
            \caption*{Gini: 0.9366}
        \end{subfigure}
        \begin{subfigure}{0.23\textwidth}
            \includegraphics[width=\linewidth,bb=0 0 449 464]{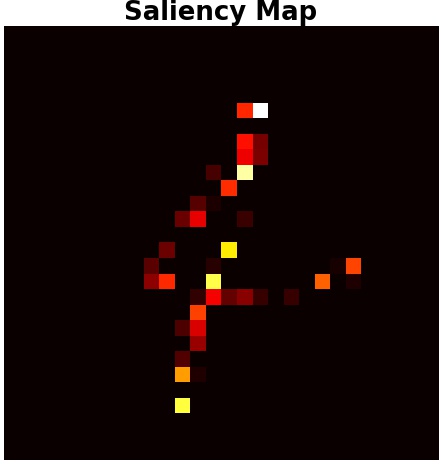} 
            \captionsetup{justification=centering}
            \caption*{Gini: 0.9738}
        \end{subfigure} 
        \caption{For all images, the models give \emph{correct} prediction -- 4.}
    \end{subfigure}
    \begin{subfigure}{\textwidth}
        \centering
        \begin{subfigure}{0.23\textwidth}
            \includegraphics[width=\linewidth,bb=0 0 449 464]{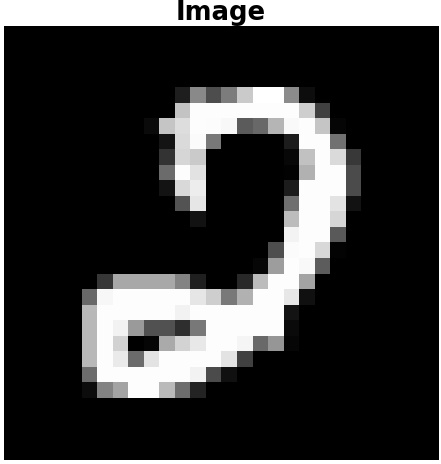} 
            \captionsetup{justification=centering}
            \caption*{}
        \end{subfigure}
        \begin{subfigure}{0.23\textwidth}
            \includegraphics[width=\linewidth,bb=0 0 449 464]{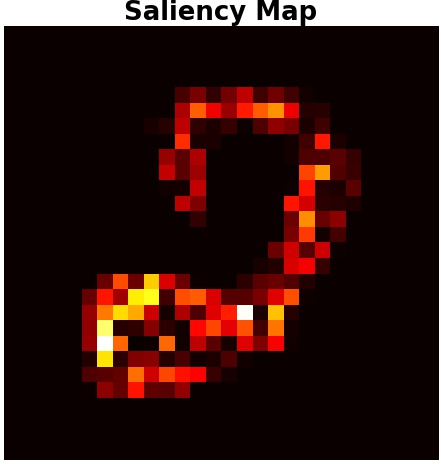} 
            \captionsetup{justification=centering}
            \caption*{Gini: 0.8843}
        \end{subfigure}
        \begin{subfigure}{0.23\textwidth}
            \includegraphics[width=\linewidth,bb=0 0 449 464]{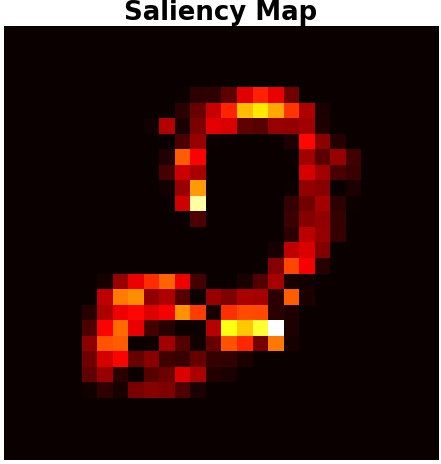} 
            \captionsetup{justification=centering}
            \caption*{Gini: 0.8807}
        \end{subfigure}
        \begin{subfigure}{0.23\textwidth}
            \includegraphics[width=\linewidth,bb=0 0 449 464]{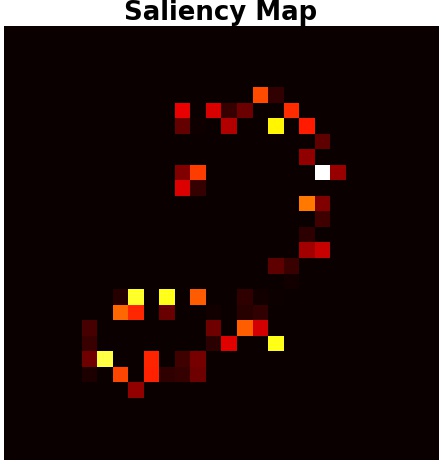} 
            \captionsetup{justification=centering}
            \caption*{Gini: 0.9595}
        \end{subfigure} 
        \caption{For all images, the models give \emph{correct} prediction -- 2.}
    \end{subfigure}
    \caption{Some examples on MNIST. We can see the saliency maps (also called feature importance maps), computed via IG, of adversarially trained model are much sparser compared to other models.}
    \label{fig:demo-mnist-ig}
\end{figure}

\begin{figure}[htb]
    \centering 
    \begin{minipage}{\linewidth}
        \centering
        \hspace{4.5cm} \textbf{Natural Training} \hspace{0.2cm} \textbf{L1-norm Regularization} \hspace{0.1cm} \textbf{Adversarial Training}
    \end{minipage}
    \begin{subfigure}{\textwidth}
        \centering
        \begin{subfigure}{0.23\textwidth}
            \includegraphics[width=\linewidth,bb=0 0 449 464]{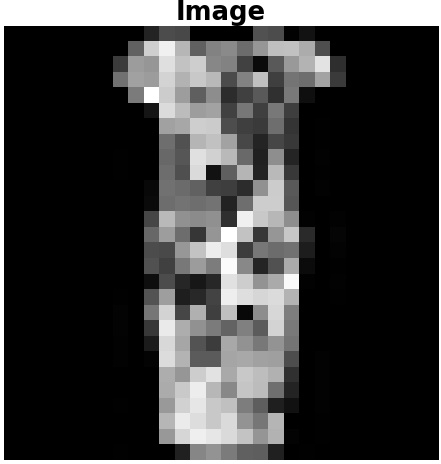} 
            \captionsetup{justification=centering}
            \caption*{}
        \end{subfigure}
        \begin{subfigure}{0.23\textwidth}
            \includegraphics[width=\linewidth,bb=0 0 449 464]{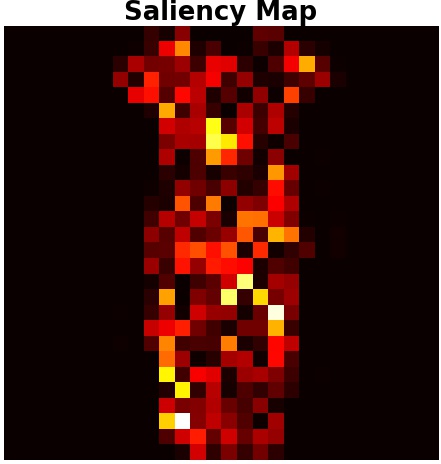} 
            \captionsetup{justification=centering}
            \caption*{Gini: 0.8190}
        \end{subfigure}
        \begin{subfigure}{0.23\textwidth}
            \includegraphics[width=\linewidth,bb=0 0 449 464]{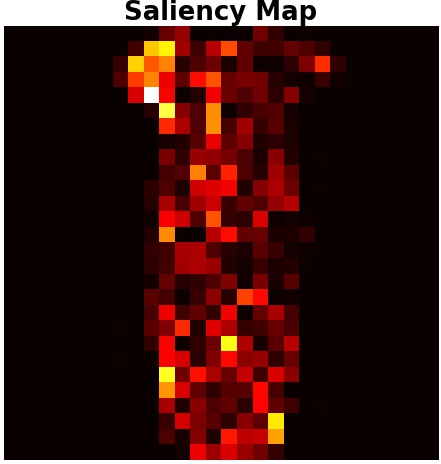} 
            \captionsetup{justification=centering}
            \caption*{Gini: 0.8183}
        \end{subfigure}
        \begin{subfigure}{0.23\textwidth}
            \includegraphics[width=\linewidth,bb=0 0 449 464]{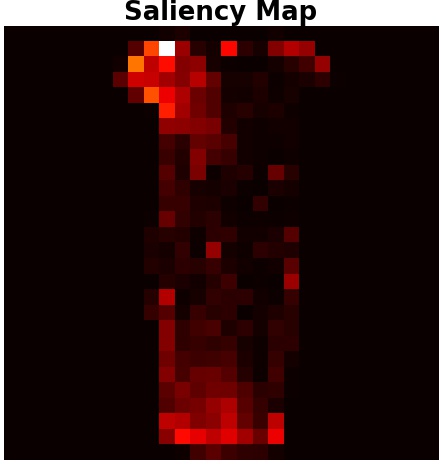} 
            \captionsetup{justification=centering}
            \caption*{Gini: 0.8532}
        \end{subfigure} 
        \caption{For all images, the models give \emph{correct} prediction -- Dress.}
    \end{subfigure}
    \begin{subfigure}{\textwidth}
        \centering
        \begin{subfigure}{0.23\textwidth}
            \includegraphics[width=\linewidth,bb=0 0 449 464]{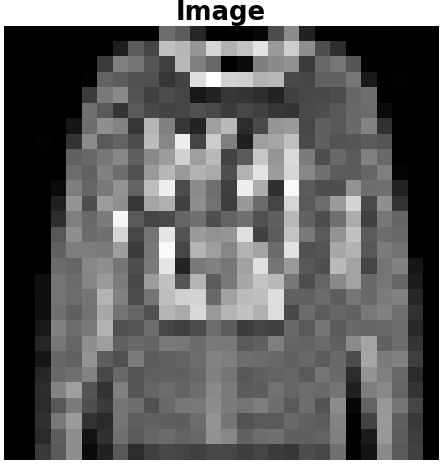} 
            \captionsetup{justification=centering}
            \caption*{}
        \end{subfigure}
        \begin{subfigure}{0.23\textwidth}
            \includegraphics[width=\linewidth,bb=0 0 449 464]{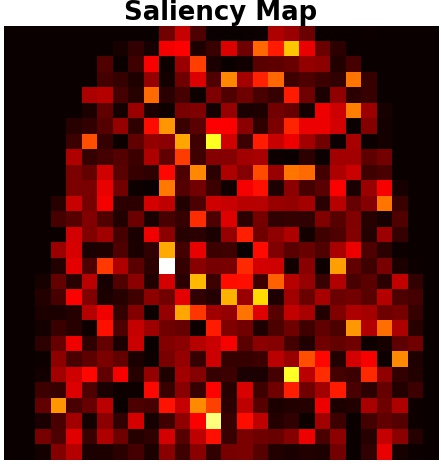} 
            \captionsetup{justification=centering}
            \caption*{Gini: 0.5777}
        \end{subfigure}
        \begin{subfigure}{0.23\textwidth}
            \includegraphics[width=\linewidth,bb=0 0 449 464]{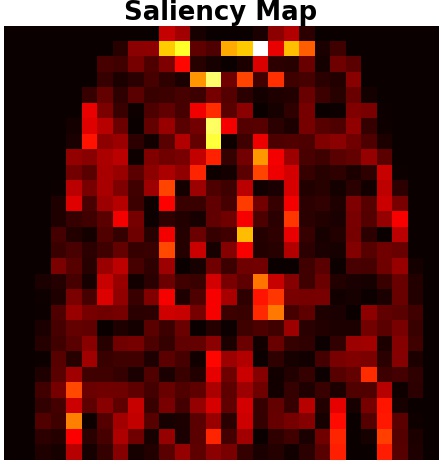} 
            \captionsetup{justification=centering}
            \caption*{Gini: 0.5925}
        \end{subfigure}
        \begin{subfigure}{0.23\textwidth}
            \includegraphics[width=\linewidth,bb=0 0 449 464]{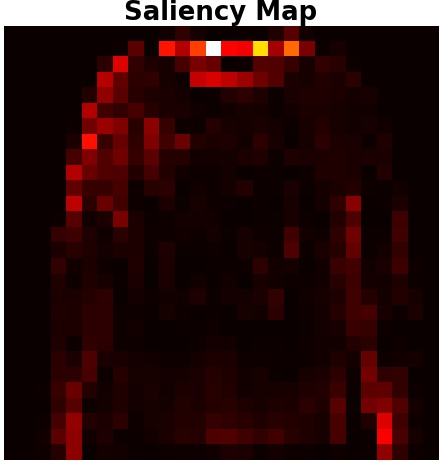} 
            \captionsetup{justification=centering}
            \caption*{Gini: 0.7024}
        \end{subfigure} 
        \caption{For all images, the models give \emph{correct} prediction -- Pullover.}
    \end{subfigure}
    \begin{subfigure}{\textwidth}
        \centering
        \begin{subfigure}{0.23\textwidth}
            \includegraphics[width=\linewidth,bb=0 0 449 464]{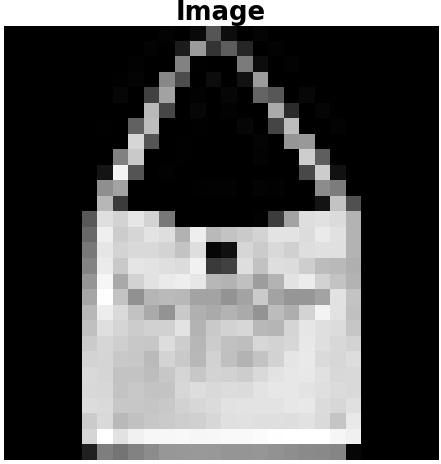} 
            \captionsetup{justification=centering}
            \caption*{}
        \end{subfigure}
        \begin{subfigure}{0.23\textwidth}
            \includegraphics[width=\linewidth,bb=0 0 449 464]{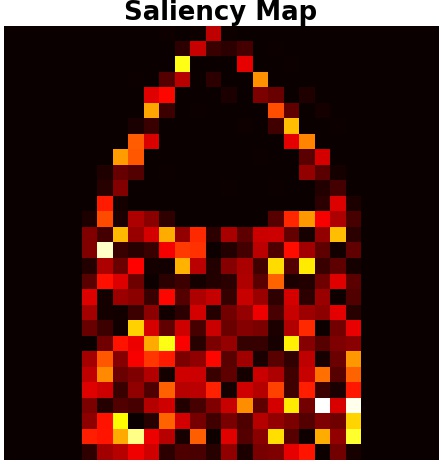} 
            \captionsetup{justification=centering}
            \caption*{Gini: 0.7698}
        \end{subfigure}
        \begin{subfigure}{0.23\textwidth}
            \includegraphics[width=\linewidth,bb=0 0 449 464]{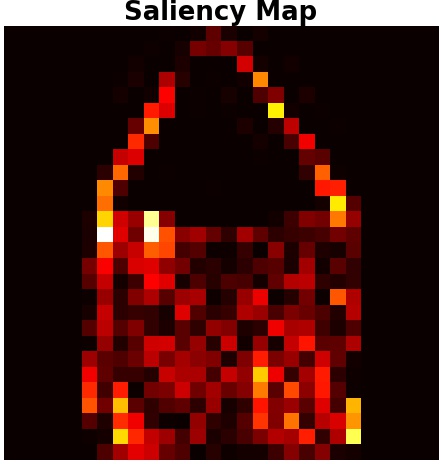} 
            \captionsetup{justification=centering}
            \caption*{Gini: 0.7784}
        \end{subfigure}
        \begin{subfigure}{0.23\textwidth}
            \includegraphics[width=\linewidth,bb=0 0 449 464]{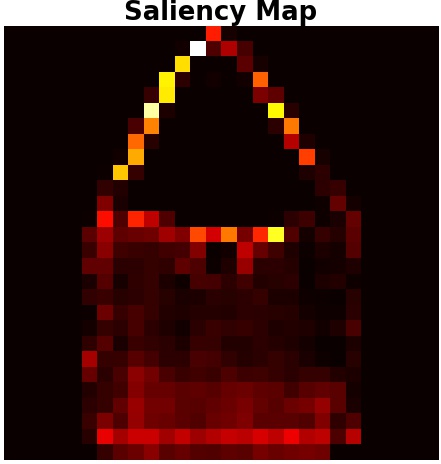} 
            \captionsetup{justification=centering}
            \caption*{Gini: 0.7981}
        \end{subfigure} 
        \caption{For all images, the models give \emph{correct} prediction -- Bag.}
    \end{subfigure}
    \begin{subfigure}{\textwidth}
        \centering
        \begin{subfigure}{0.23\textwidth}
            \includegraphics[width=\linewidth,bb=0 0 449 464]{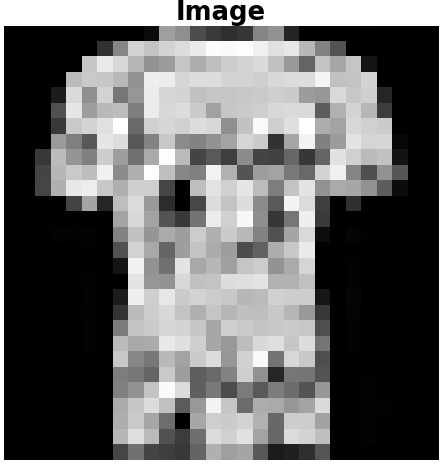} 
            \captionsetup{justification=centering}
            \caption*{}
        \end{subfigure}
        \begin{subfigure}{0.23\textwidth}
            \includegraphics[width=\linewidth,bb=0 0 449 464]{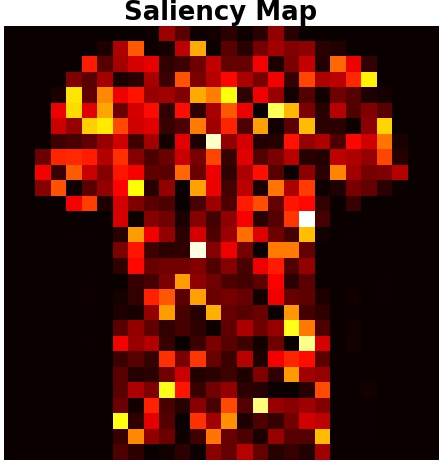} 
            \captionsetup{justification=centering}
            \caption*{Gini: 0.6840}
        \end{subfigure}
        \begin{subfigure}{0.23\textwidth}
            \includegraphics[width=\linewidth,bb=0 0 449 464]{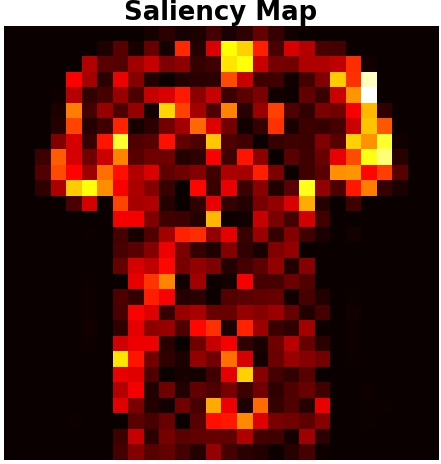} 
            \captionsetup{justification=centering}
            \caption*{Gini: 0.6899}
        \end{subfigure}
        \begin{subfigure}{0.23\textwidth}
            \includegraphics[width=\linewidth,bb=0 0 449 464]{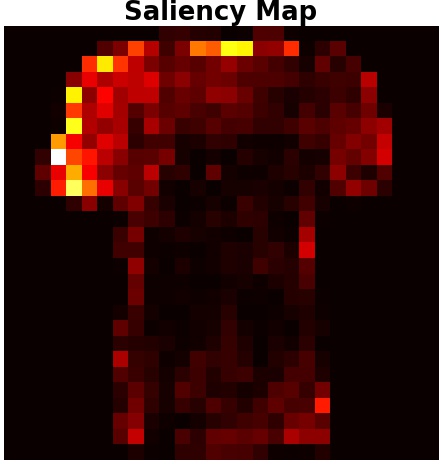} 
            \captionsetup{justification=centering}
            \caption*{Gini: 0.7503}
        \end{subfigure} 
        \caption{For all images, the models give \emph{correct} prediction -- T-shirt.}
    \end{subfigure}
    \caption{Some examples on Fashion-MNIST. We can see the saliency maps (also called feature importance maps), computed via IG, of adversarially trained model are much sparser compared to other models.}
    \label{fig:demo-fashion-mnist-ig}
\end{figure}

\begin{figure}[htb]
	\centering 
	\begin{minipage}{\linewidth}
		\centering
		\hspace{4.5cm} \textbf{Natural Training} \hspace{0.2cm} \textbf{L1-norm Regularization} \hspace{0.1cm} \textbf{Adversarial Training}
	\end{minipage}
	\begin{subfigure}{\textwidth}
		\centering
		\begin{subfigure}{0.23\textwidth}
			\includegraphics[width=\linewidth,bb=0 0 449 464]{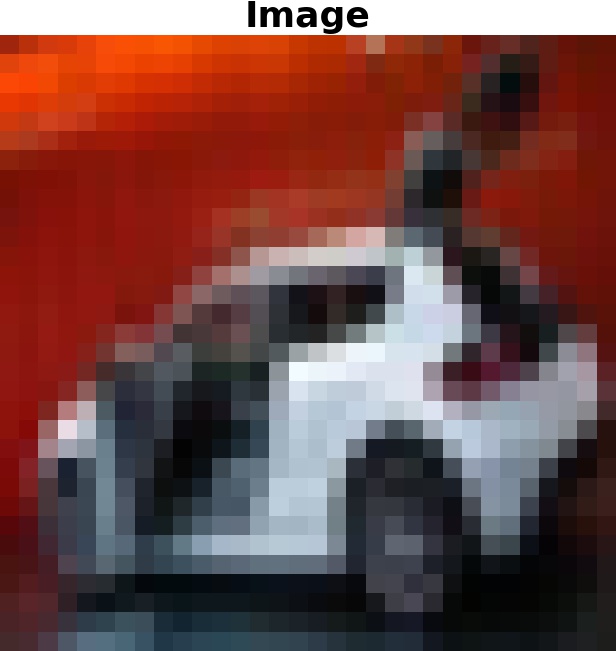} 
			\captionsetup{justification=centering}
			\caption*{}
		\end{subfigure}
		\begin{subfigure}{0.23\textwidth}
			\includegraphics[width=\linewidth,bb=0 0 449 464]{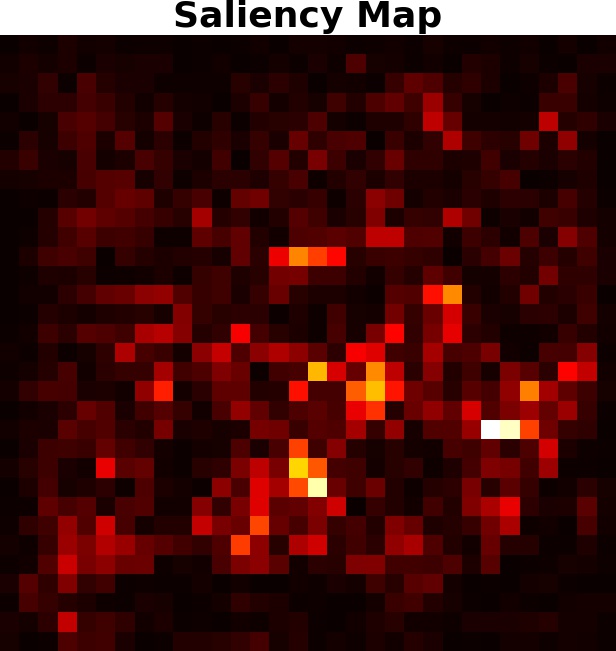} 
			\captionsetup{justification=centering}
			\caption*{Gini: 0.6395}
		\end{subfigure}
		\begin{subfigure}{0.23\textwidth}
			\includegraphics[width=\linewidth,bb=0 0 449 464]{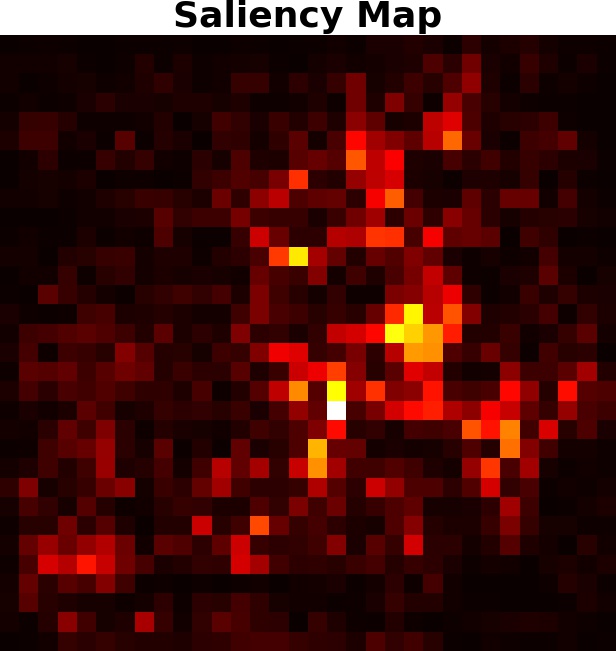} 
			\captionsetup{justification=centering}
			\caption*{Gini: 0.6556}
		\end{subfigure}
		\begin{subfigure}{0.23\textwidth}
			\includegraphics[width=\linewidth,bb=0 0 449 464]{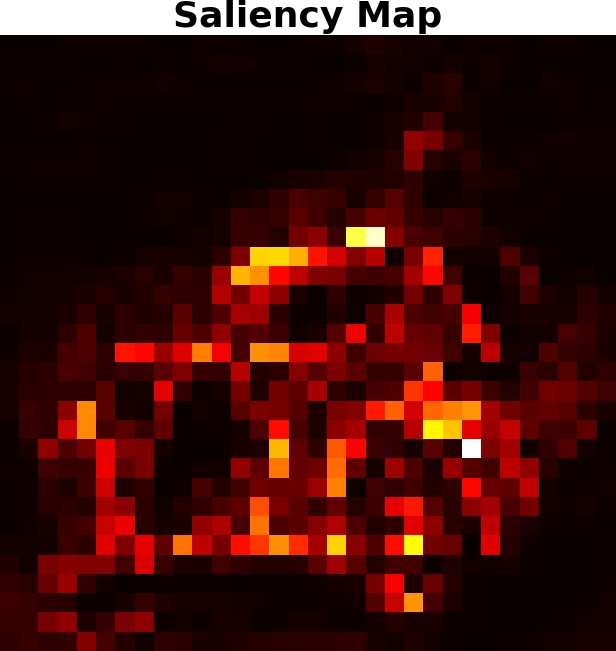} 
			\captionsetup{justification=centering}
			\caption*{Gini: 0.7309}
		\end{subfigure} 
		\caption{For all images, the models give \emph{correct} prediction -- automobile.}
	\end{subfigure}
	\begin{subfigure}{\textwidth}
		\centering
		\begin{subfigure}{0.23\textwidth}
			\includegraphics[width=\linewidth,bb=0 0 449 464]{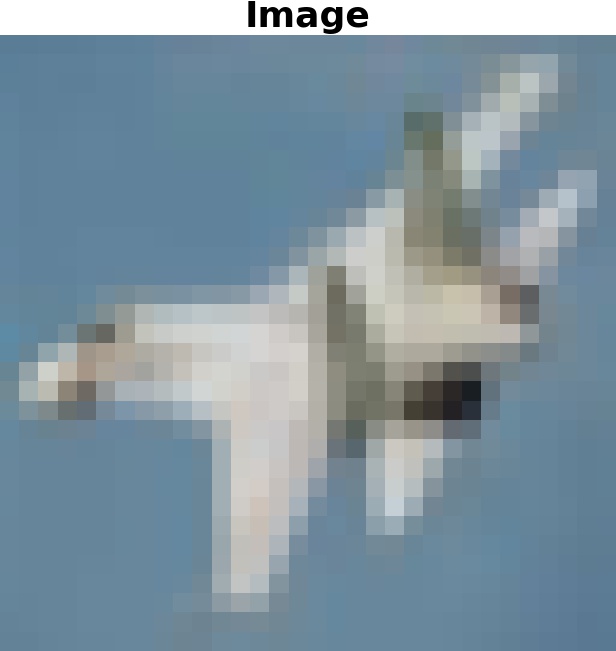} 
			\captionsetup{justification=centering}
			\caption*{}
		\end{subfigure}
		\begin{subfigure}{0.23\textwidth}
			\includegraphics[width=\linewidth,bb=0 0 449 464]{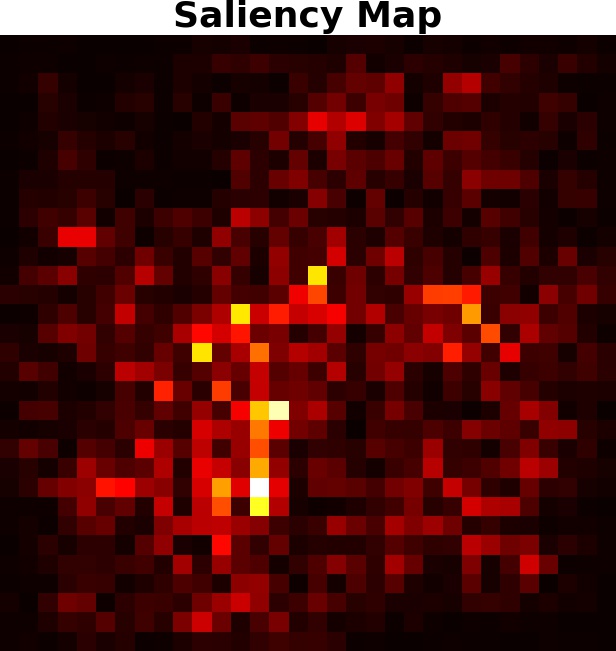} 
			\captionsetup{justification=centering}
			\caption*{Gini: 0.5736}
		\end{subfigure}
		\begin{subfigure}{0.23\textwidth}
			\includegraphics[width=\linewidth,bb=0 0 449 464]{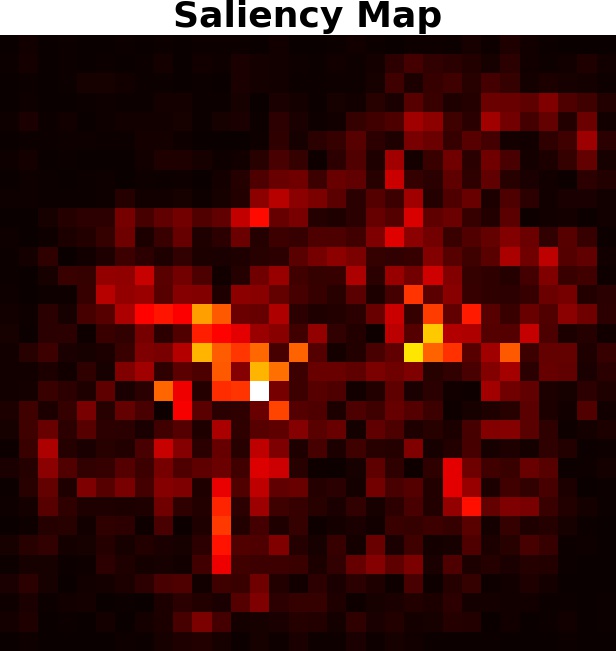} 
			\captionsetup{justification=centering}
			\caption*{Gini: 0.6175}
		\end{subfigure}
		\begin{subfigure}{0.23\textwidth}
			\includegraphics[width=\linewidth,bb=0 0 449 464]{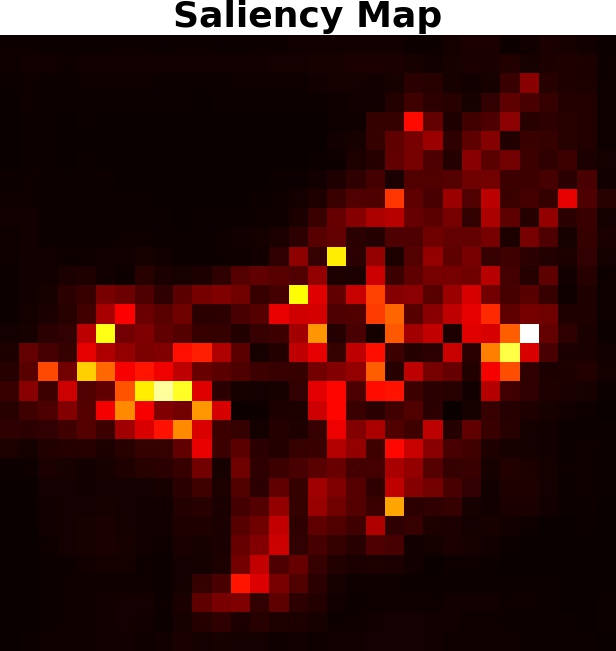} 
			\captionsetup{justification=centering}
			\caption*{Gini: 0.6914}
		\end{subfigure} 
		\caption{For all images, the models give \emph{correct} prediction -- airplane.}
	\end{subfigure}
	\begin{subfigure}{\textwidth}
		\centering
		\begin{subfigure}{0.23\textwidth}
			\includegraphics[width=\linewidth,bb=0 0 449 464]{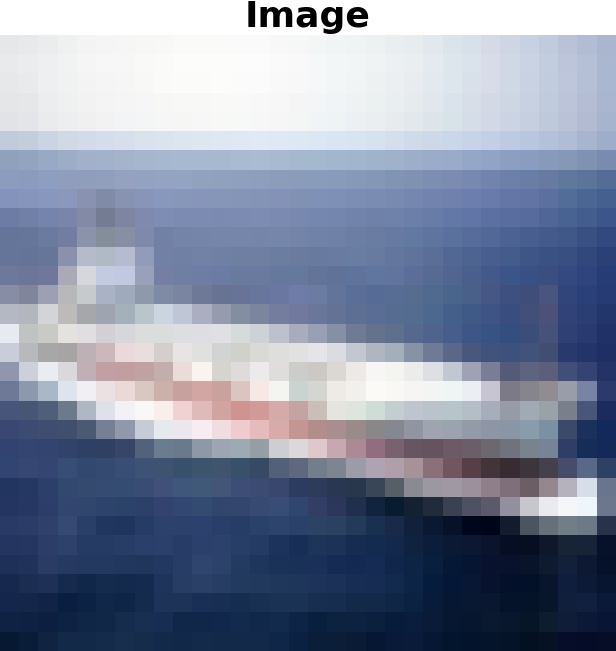} 
			\captionsetup{justification=centering}
			\caption*{}
		\end{subfigure}
		\begin{subfigure}{0.23\textwidth}
			\includegraphics[width=\linewidth,bb=0 0 449 464]{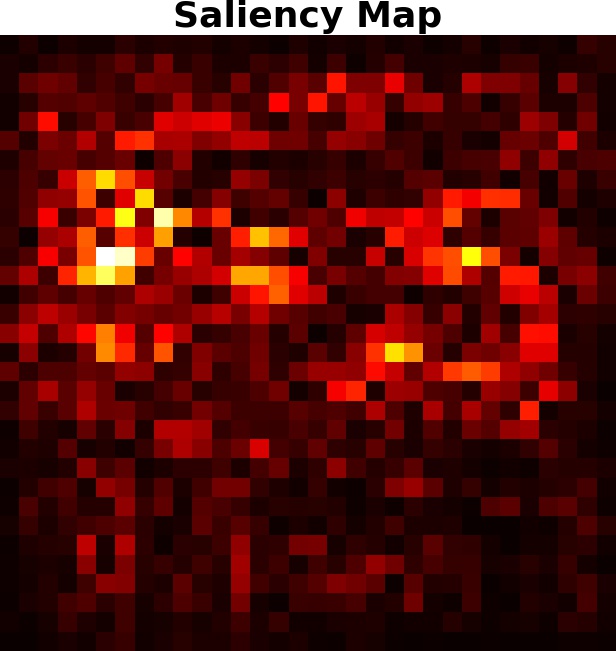} 
			\captionsetup{justification=centering}
			\caption*{Gini: 0.5505}
		\end{subfigure}
		\begin{subfigure}{0.23\textwidth}
			\includegraphics[width=\linewidth,bb=0 0 449 464]{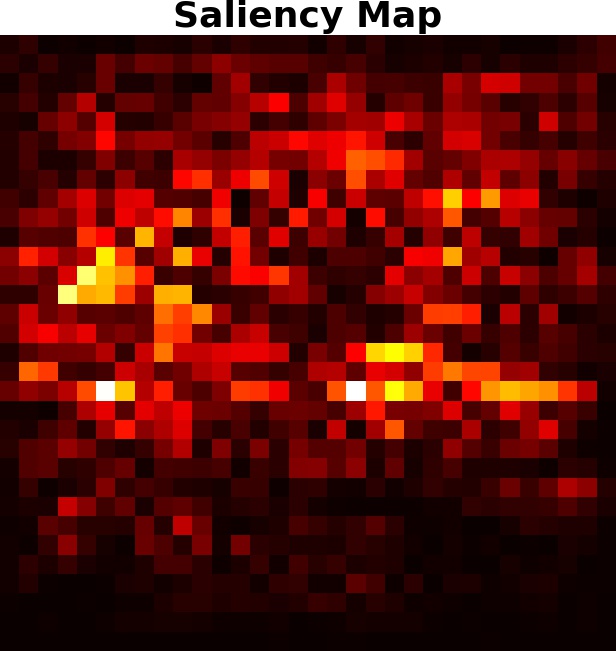} 
			\captionsetup{justification=centering}
			\caption*{Gini: 0.5844}
		\end{subfigure}
		\begin{subfigure}{0.23\textwidth}
			\includegraphics[width=\linewidth,bb=0 0 449 464]{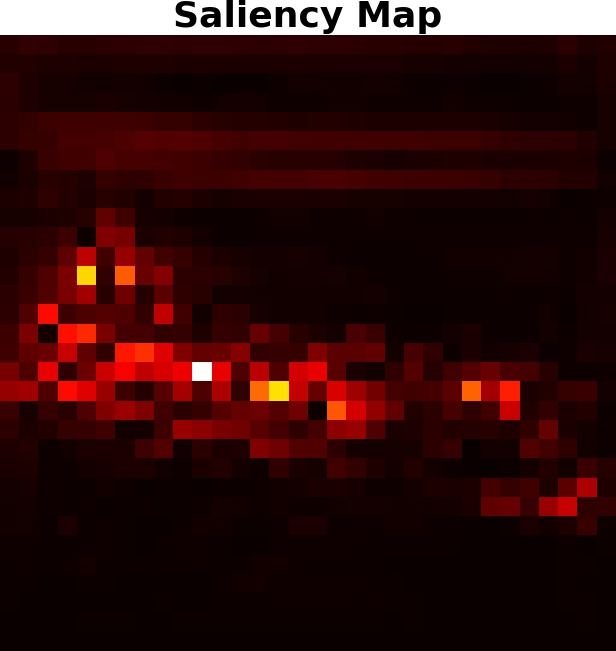} 
			\captionsetup{justification=centering}
			\caption*{Gini: 0.6778}
		\end{subfigure} 
		\caption{For all images, the models give \emph{correct} prediction -- ship.}
	\end{subfigure}
	\begin{subfigure}{\textwidth}
		\centering
		\begin{subfigure}{0.23\textwidth}
			\includegraphics[width=\linewidth,bb=0 0 449 464]{images/demo-figures/CIFAR-10/67/image.jpg} 
			\captionsetup{justification=centering}
			\caption*{}
		\end{subfigure}
		\begin{subfigure}{0.23\textwidth}
			\includegraphics[width=\linewidth,bb=0 0 449 464]{images/demo-figures/CIFAR-10/67/nat_saliency_map.jpg} 
			\captionsetup{justification=centering}
			\caption*{Gini: 0.6150}
		\end{subfigure}
		\begin{subfigure}{0.23\textwidth}
			\includegraphics[width=\linewidth,bb=0 0 449 464]{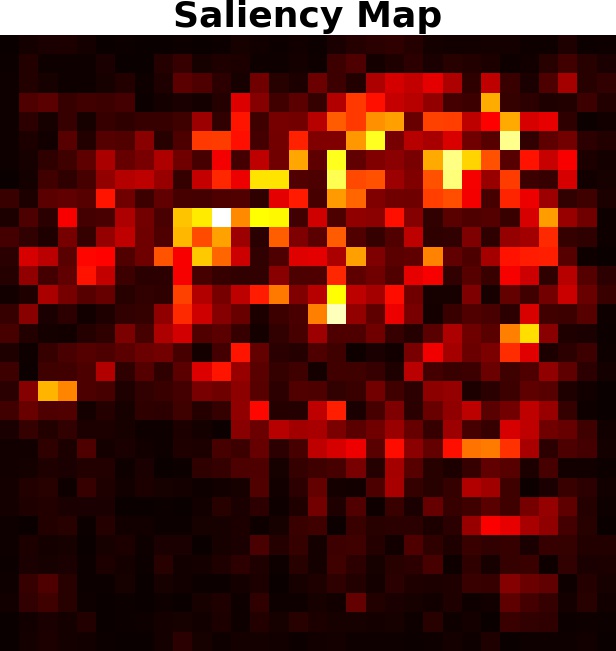} 
			\captionsetup{justification=centering}
			\caption*{Gini: 0.6198}
		\end{subfigure}
		\begin{subfigure}{0.23\textwidth}
			\includegraphics[width=\linewidth,bb=0 0 449 464]{images/demo-figures/CIFAR-10/67/adv_saliency_map.jpg} 
			\captionsetup{justification=centering}
			\caption*{Gini: 0.7084}
		\end{subfigure} 
		\caption{For all images, the models give \emph{correct} prediction -- bird.}
	\end{subfigure}
	\caption{Some examples on CIFAR-10. We can see the saliency maps (also called feature importance maps), computed via IG, of adversarially trained model are much sparser compared to other models.}
	\label{fig:demo-cifar10-ig}
\end{figure}

\begin{figure}[htb]
	\centering 
	\begin{minipage}{\linewidth}
		\centering
		\hspace{4.5cm} \textbf{Natural Training} \hspace{0.2cm} \textbf{L1-norm Regularization} \hspace{0.1cm} \textbf{Adversarial Training}
	\end{minipage}
	\begin{subfigure}{\textwidth}
		\centering
		\begin{subfigure}{0.23\textwidth}
			\includegraphics[width=\linewidth,bb=0 0 449 464]{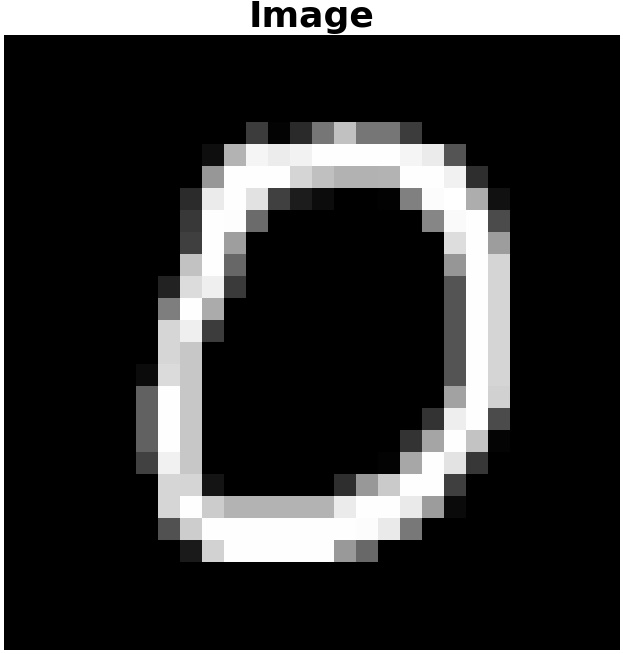} 
			\captionsetup{justification=centering}
			\caption*{}
		\end{subfigure}
		\begin{subfigure}{0.23\textwidth}
			\includegraphics[width=\linewidth,bb=0 0 449 464]{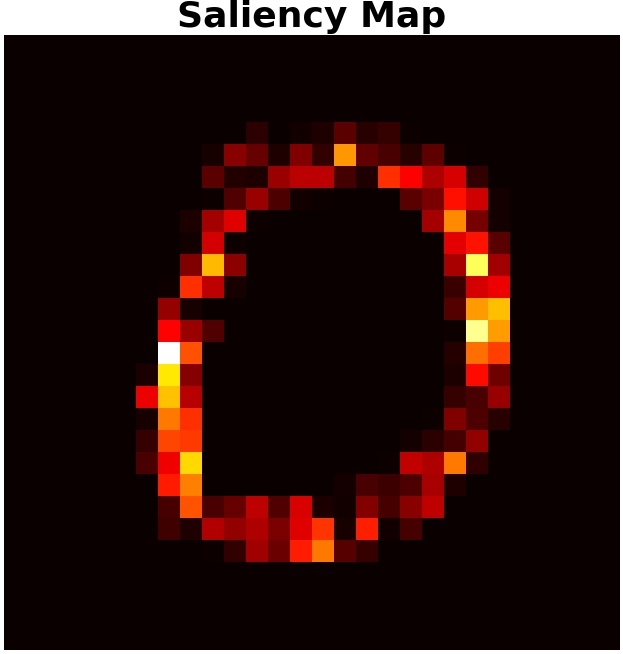} 
			\captionsetup{justification=centering}
			\caption*{Gini: 0.8982}
		\end{subfigure}
		\begin{subfigure}{0.23\textwidth}
			\includegraphics[width=\linewidth,bb=0 0 449 464]{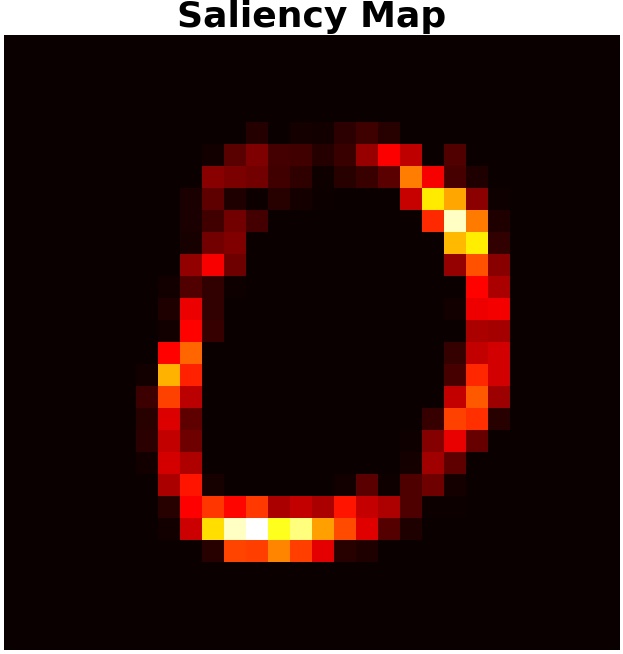} 
			\captionsetup{justification=centering}
			\caption*{Gini: 0.9000}
		\end{subfigure}
		\begin{subfigure}{0.23\textwidth}
			\includegraphics[width=\linewidth,bb=0 0 449 464]{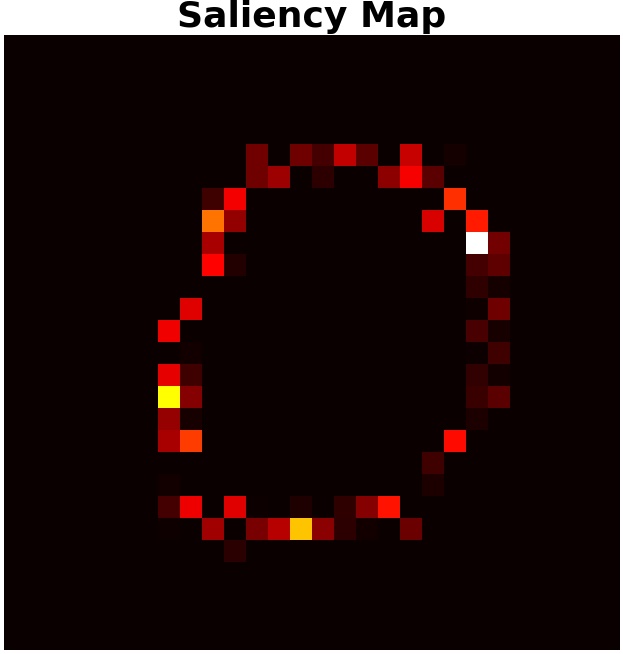} 
			\captionsetup{justification=centering}
			\caption*{Gini: 0.9528}
		\end{subfigure} 
		\caption{For all images, the models give \emph{correct} prediction -- 0.}
	\end{subfigure}
	\begin{subfigure}{\textwidth}
		\centering
		\begin{subfigure}{0.23\textwidth}
			\includegraphics[width=\linewidth,bb=0 0 449 464]{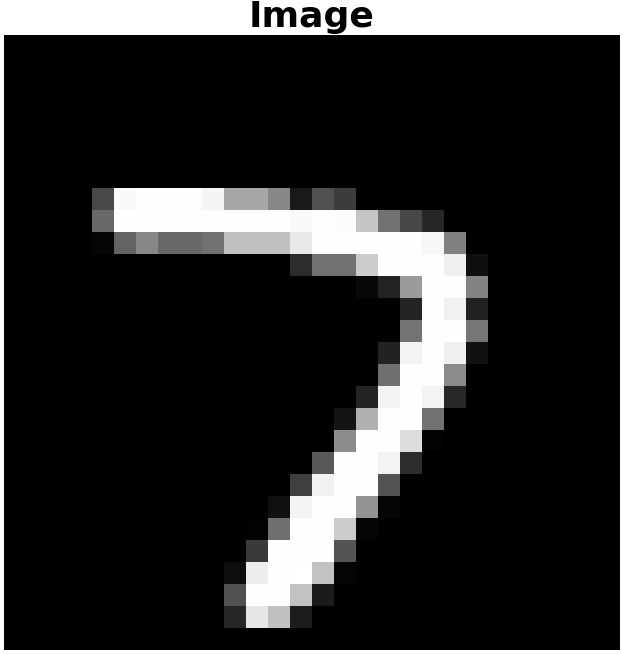} 
			\captionsetup{justification=centering}
			\caption*{}
		\end{subfigure}
		\begin{subfigure}{0.23\textwidth}
			\includegraphics[width=\linewidth,bb=0 0 449 464]{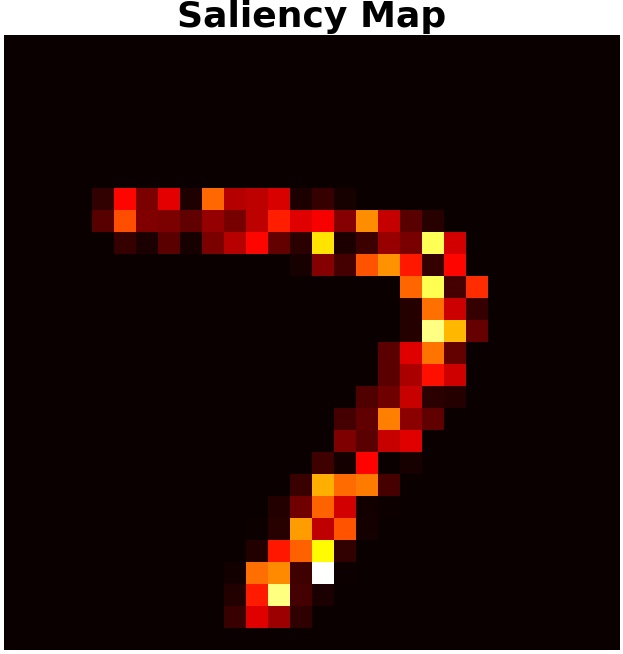} 
			\captionsetup{justification=centering}
			\caption*{Gini: 0.9156}
		\end{subfigure}
		\begin{subfigure}{0.23\textwidth}
			\includegraphics[width=\linewidth,bb=0 0 449 464]{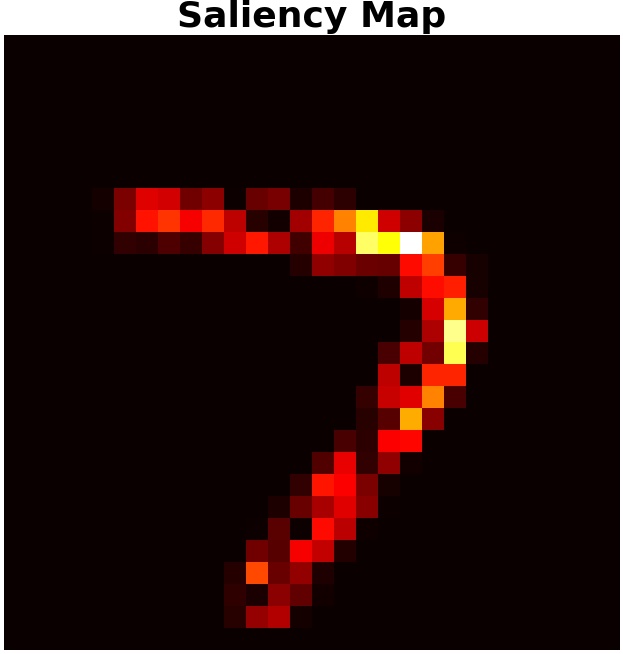} 
			\captionsetup{justification=centering}
			\caption*{Gini: 0.9156}
		\end{subfigure}
		\begin{subfigure}{0.23\textwidth}
			\includegraphics[width=\linewidth,bb=0 0 449 464]{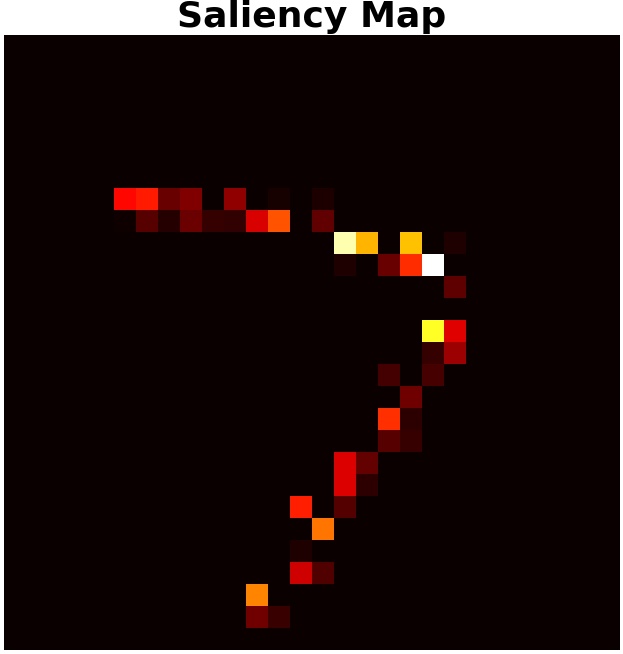} 
			\captionsetup{justification=centering}
			\caption*{Gini: 0.9685}
		\end{subfigure} 
		\caption{For all images, the models give \emph{correct} prediction -- 7.}
	\end{subfigure}
	\begin{subfigure}{\textwidth}
		\centering
		\begin{subfigure}{0.23\textwidth}
			\includegraphics[width=\linewidth,bb=0 0 449 464]{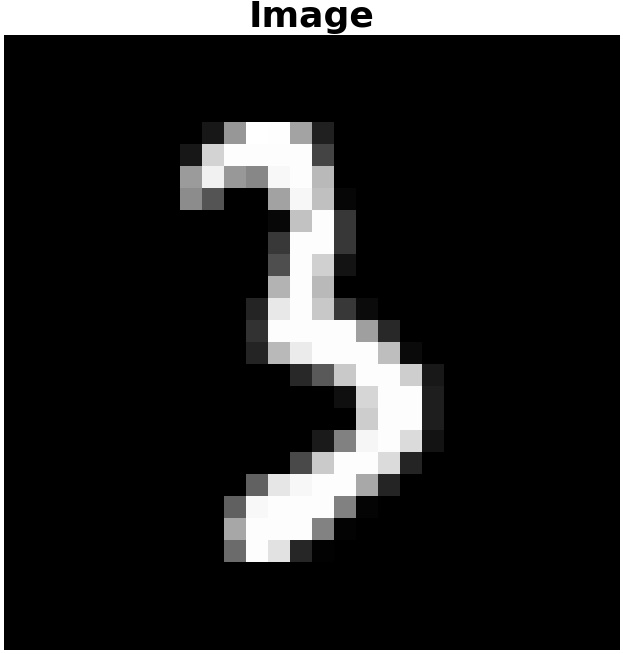} 
			\captionsetup{justification=centering}
			\caption*{}
		\end{subfigure}
		\begin{subfigure}{0.23\textwidth}
			\includegraphics[width=\linewidth,bb=0 0 449 464]{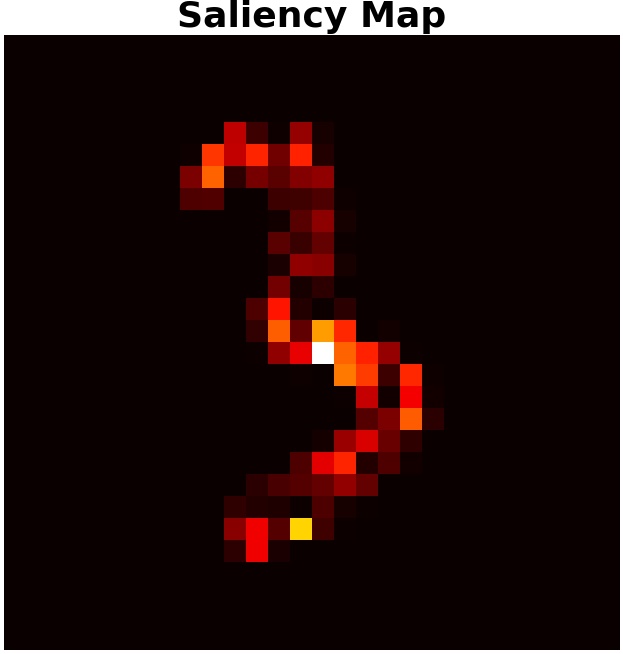} 
			\captionsetup{justification=centering}
			\caption*{Gini: 0.9373}
		\end{subfigure}
		\begin{subfigure}{0.23\textwidth}
			\includegraphics[width=\linewidth,bb=0 0 449 464]{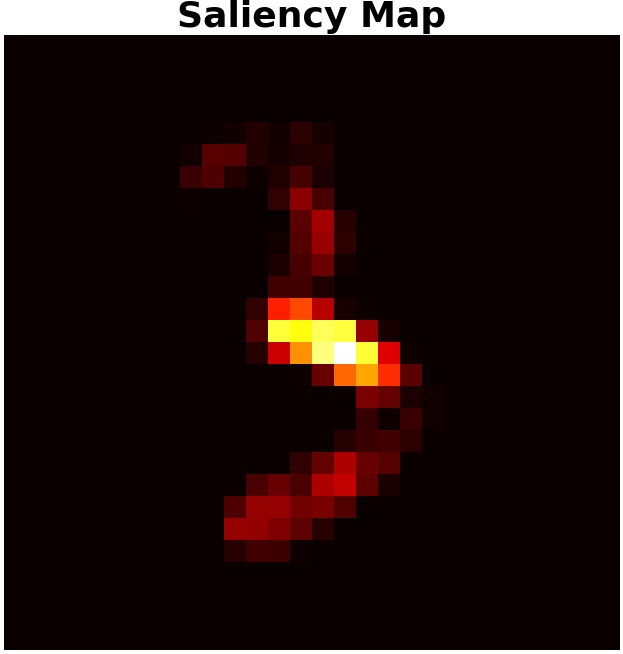} 
			\captionsetup{justification=centering}
			\caption*{Gini: 0.9452}
		\end{subfigure}
		\begin{subfigure}{0.23\textwidth}
			\includegraphics[width=\linewidth,bb=0 0 449 464]{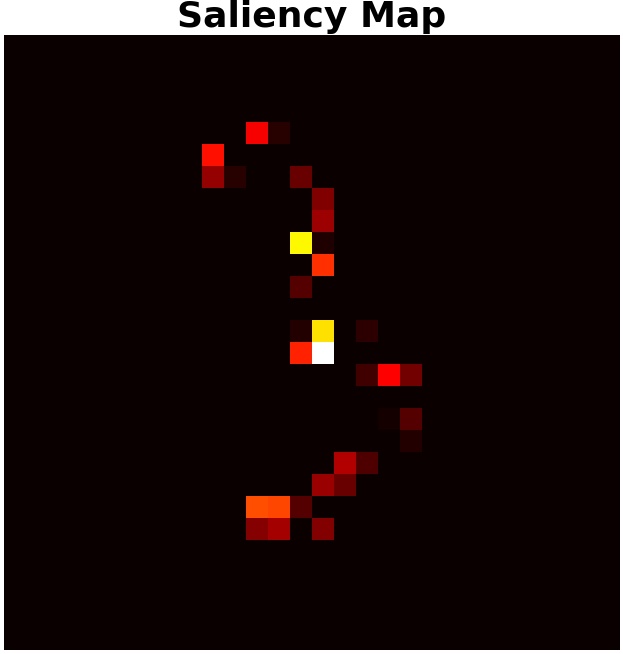} 
			\captionsetup{justification=centering}
			\caption*{Gini: 0.9773}
		\end{subfigure} 
		\caption{For all images, the models give \emph{correct} prediction -- 3.}
	\end{subfigure}
	\begin{subfigure}{\textwidth}
		\centering
		\begin{subfigure}{0.23\textwidth}
			\includegraphics[width=\linewidth,bb=0 0 449 464]{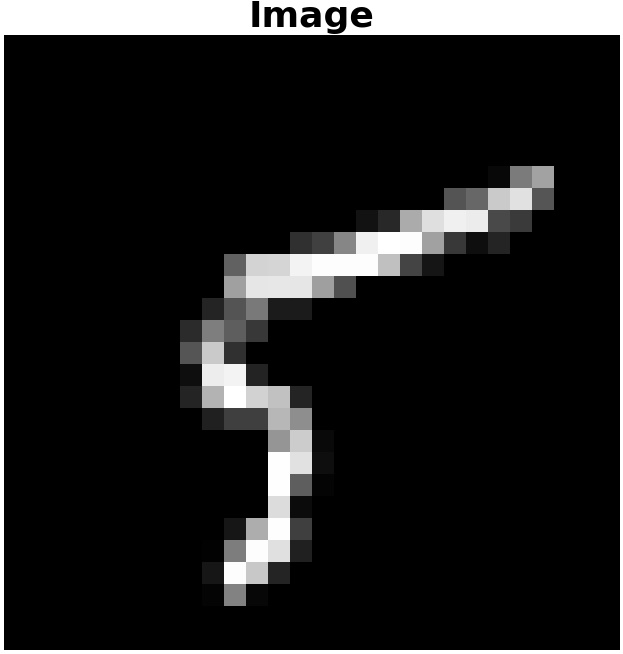} 
			\captionsetup{justification=centering}
			\caption*{}
		\end{subfigure}
		\begin{subfigure}{0.23\textwidth}
			\includegraphics[width=\linewidth,bb=0 0 449 464]{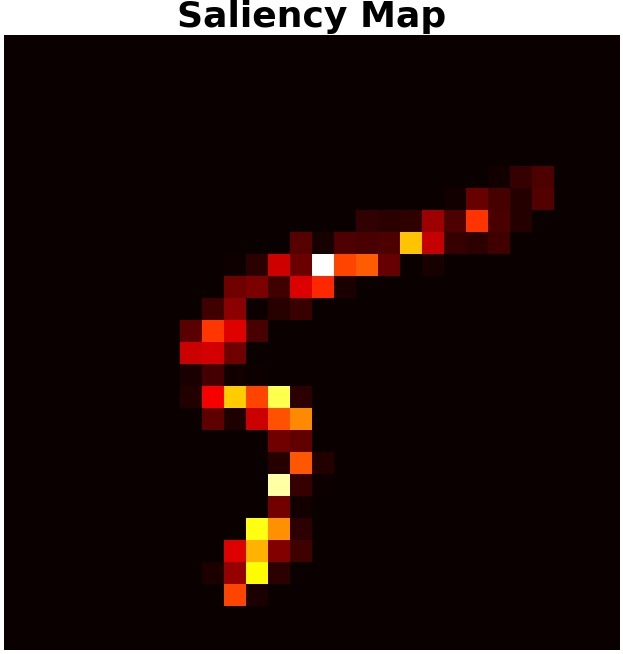} 
			\captionsetup{justification=centering}
			\caption*{Gini: 0.9476}
		\end{subfigure}
		\begin{subfigure}{0.23\textwidth}
			\includegraphics[width=\linewidth,bb=0 0 449 464]{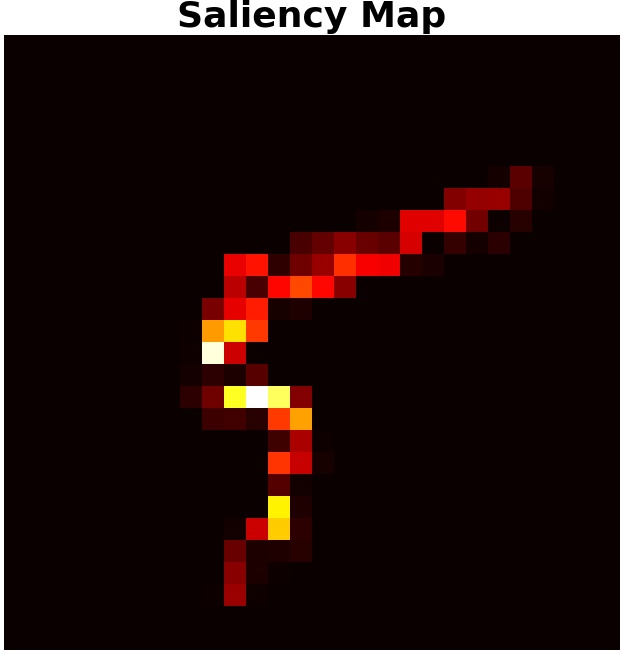} 
			\captionsetup{justification=centering}
			\caption*{Gini: 0.9473}
		\end{subfigure}
		\begin{subfigure}{0.23\textwidth}
			\includegraphics[width=\linewidth,bb=0 0 449 464]{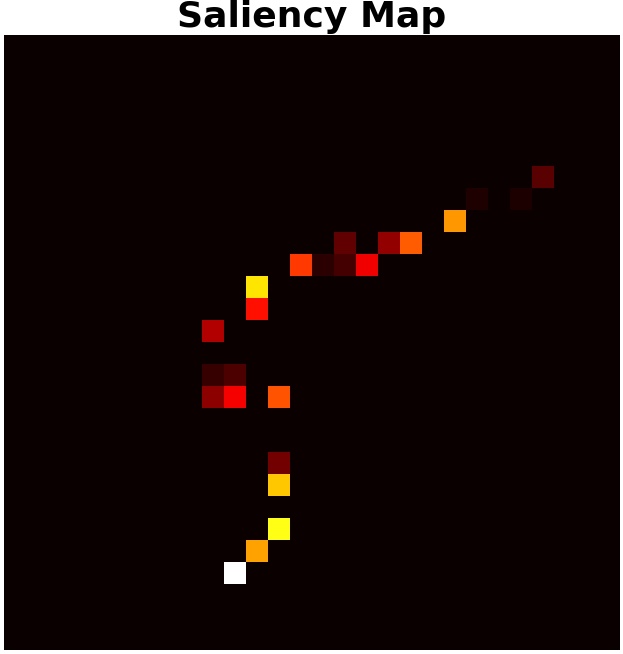} 
			\captionsetup{justification=centering}
			\caption*{Gini: 0.9825}
		\end{subfigure} 
		\caption{For all images, the models give \emph{correct} prediction -- 5.}
	\end{subfigure}
	\caption{Some examples on MNIST. We can see the saliency maps (also called feature importance maps), computed via DeepSHAP, of adversarially trained model are much sparser compared to other models.}
	\label{fig:demo-mnist-shap}
\end{figure}

\begin{figure}[htb]
	\centering 
	\begin{minipage}{\linewidth}
		\centering
		\hspace{4.5cm} \textbf{Natural Training} \hspace{0.2cm} \textbf{L1-norm Regularization} \hspace{0.1cm} \textbf{Adversarial Training}
	\end{minipage}
	\begin{subfigure}{\textwidth}
		\centering
		\begin{subfigure}{0.23\textwidth}
			\includegraphics[width=\linewidth,bb=0 0 449 464]{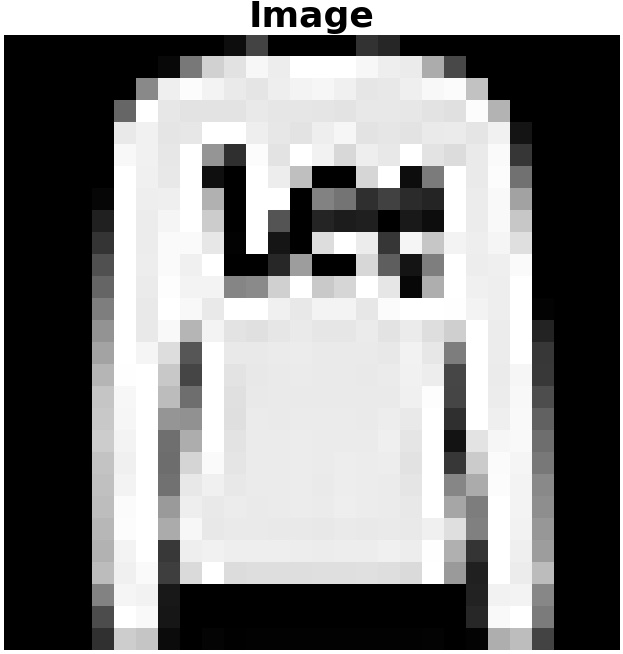} 
			\captionsetup{justification=centering}
			\caption*{}
		\end{subfigure}
		\begin{subfigure}{0.23\textwidth}
			\includegraphics[width=\linewidth,bb=0 0 449 464]{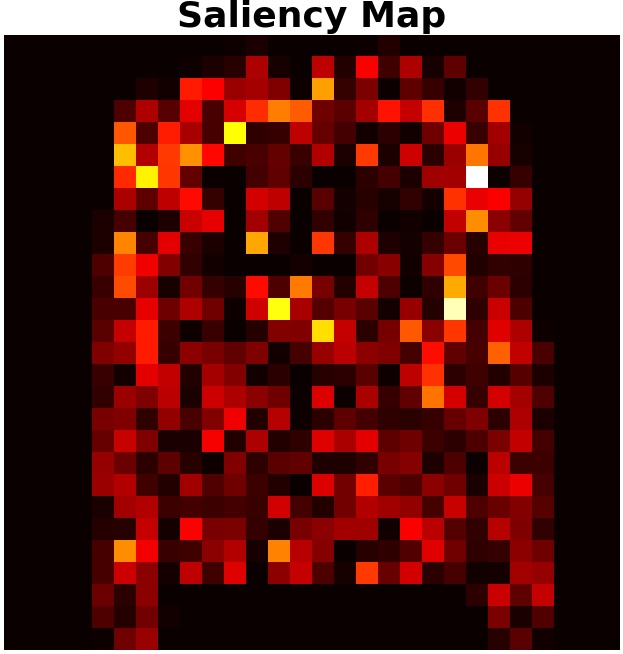} 
			\captionsetup{justification=centering}
			\caption*{Gini: 0.6749}
		\end{subfigure}
		\begin{subfigure}{0.23\textwidth}
			\includegraphics[width=\linewidth,bb=0 0 449 464]{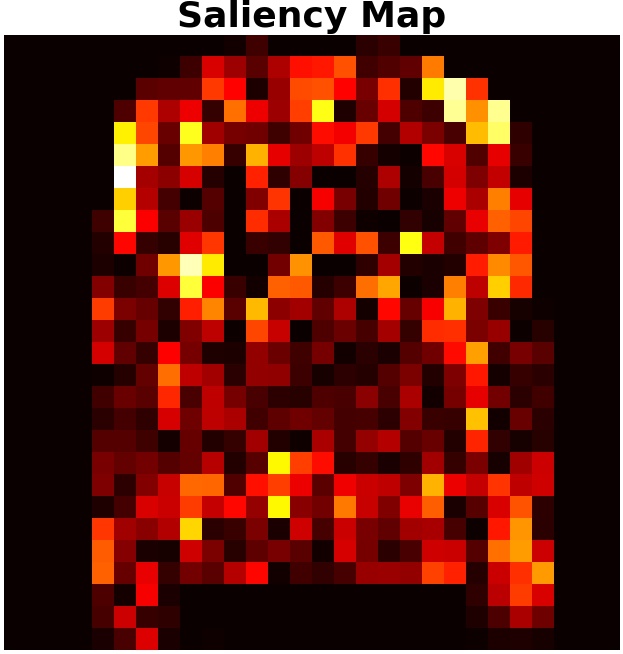} 
			\captionsetup{justification=centering}
			\caption*{Gini: 0.6676}
		\end{subfigure}
		\begin{subfigure}{0.23\textwidth}
			\includegraphics[width=\linewidth,bb=0 0 449 464]{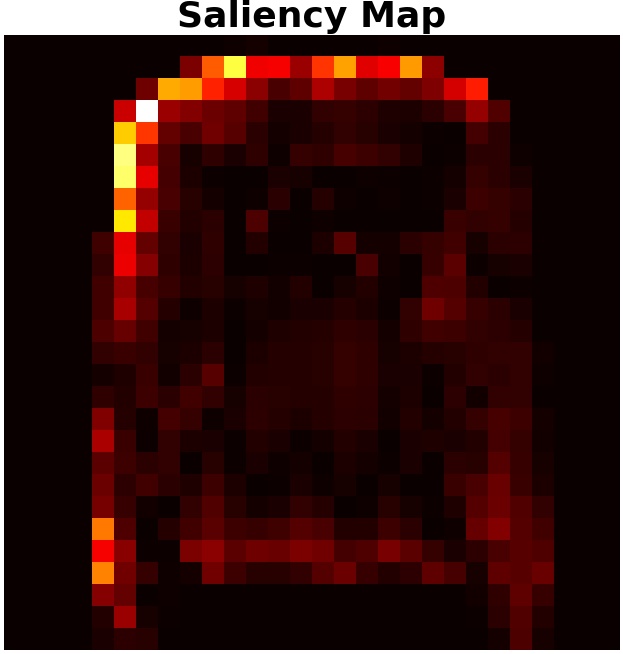} 
			\captionsetup{justification=centering}
			\caption*{Gini: 0.7435}
		\end{subfigure} 
		\caption{For all images, the models give \emph{correct} prediction -- Pullover.}
	\end{subfigure}
	\begin{subfigure}{\textwidth}
		\centering
		\begin{subfigure}{0.23\textwidth}
			\includegraphics[width=\linewidth,bb=0 0 449 464]{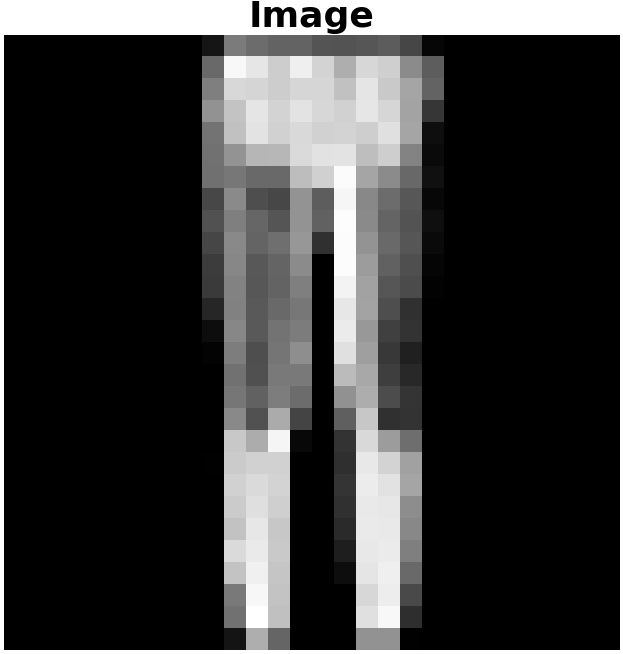} 
			\captionsetup{justification=centering}
			\caption*{}
		\end{subfigure}
		\begin{subfigure}{0.23\textwidth}
			\includegraphics[width=\linewidth,bb=0 0 449 464]{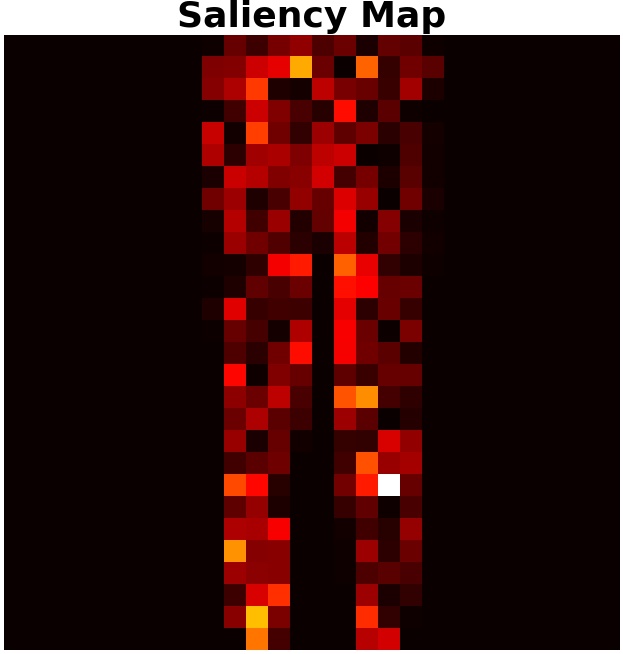} 
			\captionsetup{justification=centering}
			\caption*{Gini: 0.8322}
		\end{subfigure}
		\begin{subfigure}{0.23\textwidth}
			\includegraphics[width=\linewidth,bb=0 0 449 464]{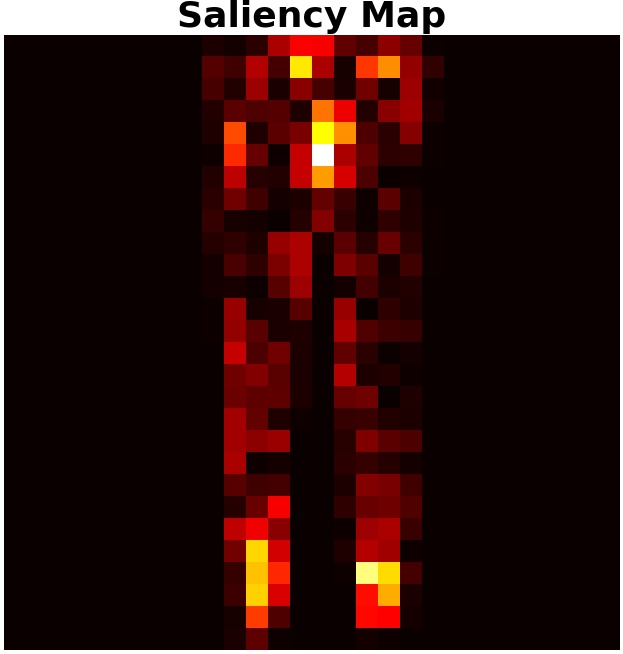} 
			\captionsetup{justification=centering}
			\caption*{Gini: 0.8628}
		\end{subfigure}
		\begin{subfigure}{0.23\textwidth}
			\includegraphics[width=\linewidth,bb=0 0 449 464]{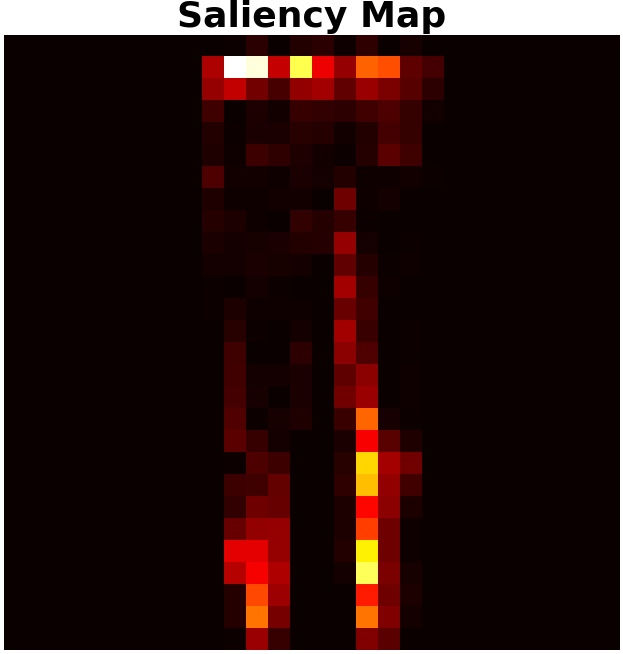} 
			\captionsetup{justification=centering}
			\caption*{Gini: 0.8953}
		\end{subfigure} 
		\caption{For all images, the models give \emph{correct} prediction -- Trouser.}
	\end{subfigure}
	\begin{subfigure}{\textwidth}
		\centering
		\begin{subfigure}{0.23\textwidth}
			\includegraphics[width=\linewidth,bb=0 0 449 464]{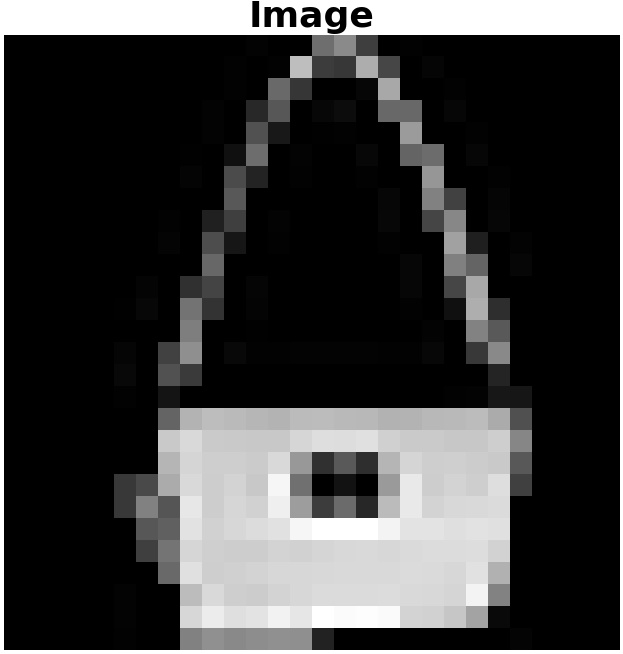} 
			\captionsetup{justification=centering}
			\caption*{}
		\end{subfigure}
		\begin{subfigure}{0.23\textwidth}
			\includegraphics[width=\linewidth,bb=0 0 449 464]{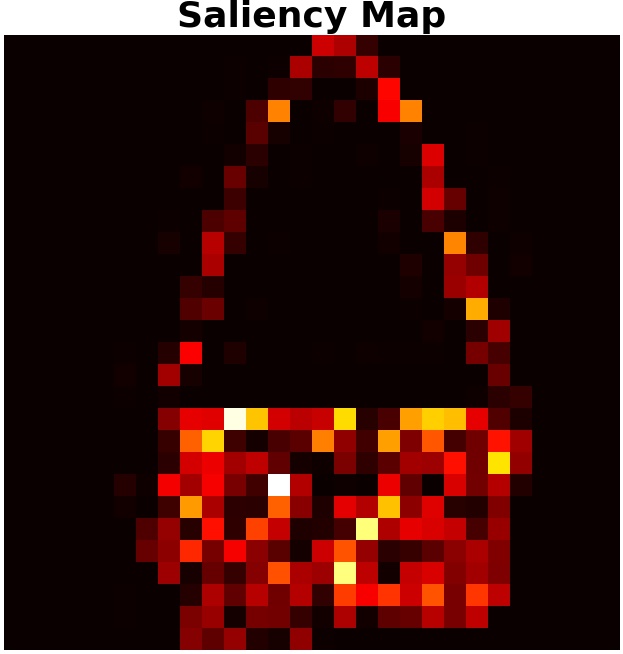} 
			\captionsetup{justification=centering}
			\caption*{Gini: 0.8343}
		\end{subfigure}
		\begin{subfigure}{0.23\textwidth}
			\includegraphics[width=\linewidth,bb=0 0 449 464]{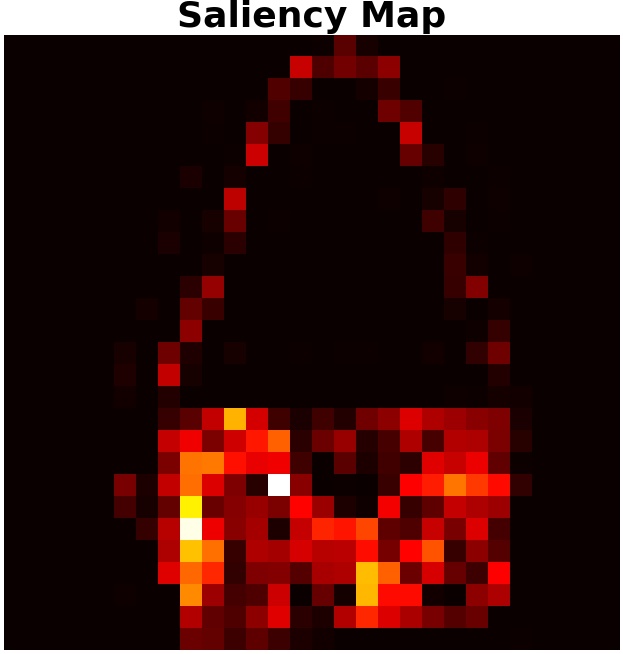} 
			\captionsetup{justification=centering}
			\caption*{Gini: 0.8374}
		\end{subfigure}
		\begin{subfigure}{0.23\textwidth}
			\includegraphics[width=\linewidth,bb=0 0 449 464]{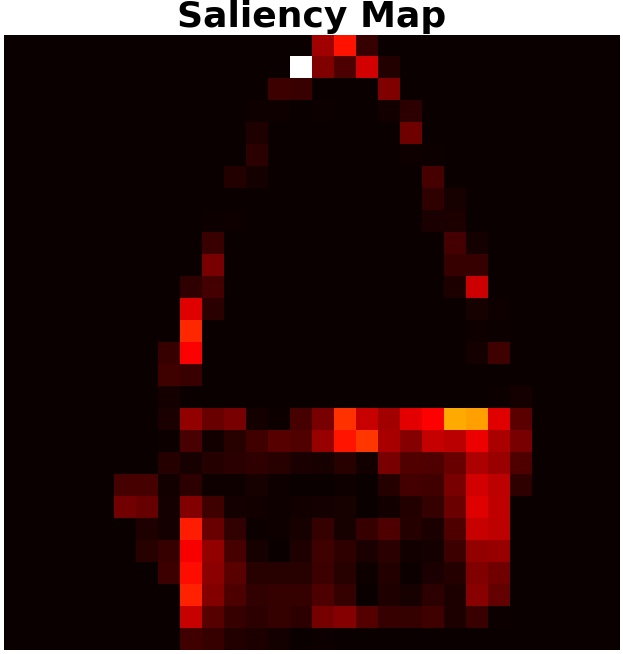} 
			\captionsetup{justification=centering}
			\caption*{Gini: 0.8683}
		\end{subfigure} 
		\caption{For all images, the models give \emph{correct} prediction -- Bag.}
	\end{subfigure}
	\begin{subfigure}{\textwidth}
		\centering
		\begin{subfigure}{0.23\textwidth}
			\includegraphics[width=\linewidth,bb=0 0 449 464]{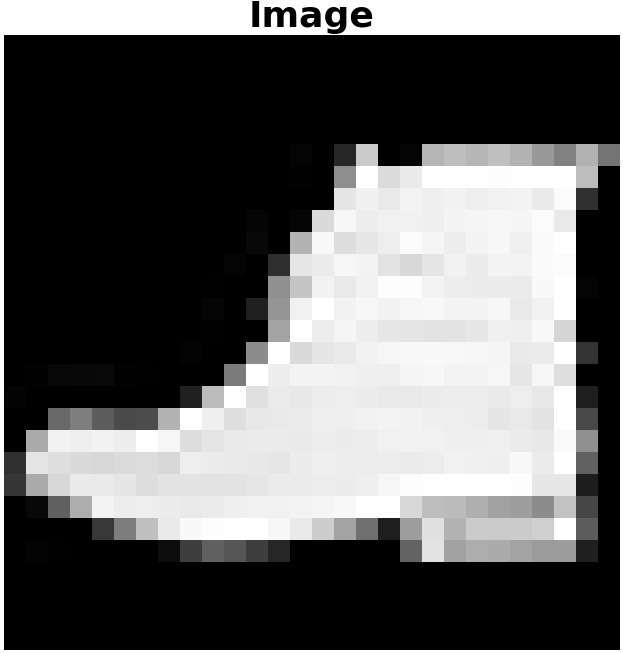} 
			\captionsetup{justification=centering}
			\caption*{}
		\end{subfigure}
		\begin{subfigure}{0.23\textwidth}
			\includegraphics[width=\linewidth,bb=0 0 449 464]{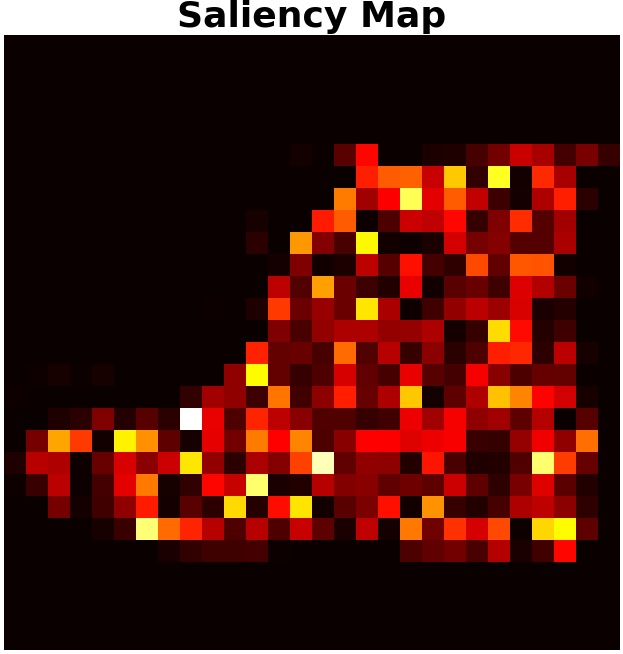} 
			\captionsetup{justification=centering}
			\caption*{Gini: 0.7701}
		\end{subfigure}
		\begin{subfigure}{0.23\textwidth}
			\includegraphics[width=\linewidth,bb=0 0 449 464]{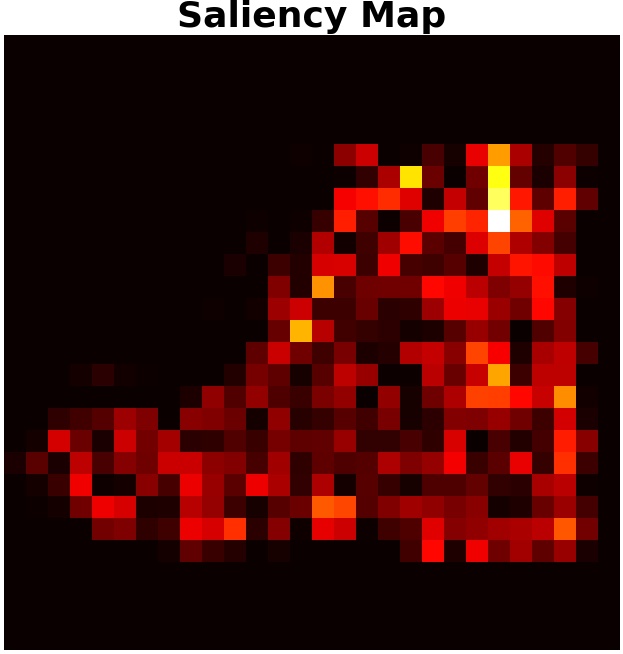} 
			\captionsetup{justification=centering}
			\caption*{Gini: 0.7575}
		\end{subfigure}
		\begin{subfigure}{0.23\textwidth}
			\includegraphics[width=\linewidth,bb=0 0 449 464]{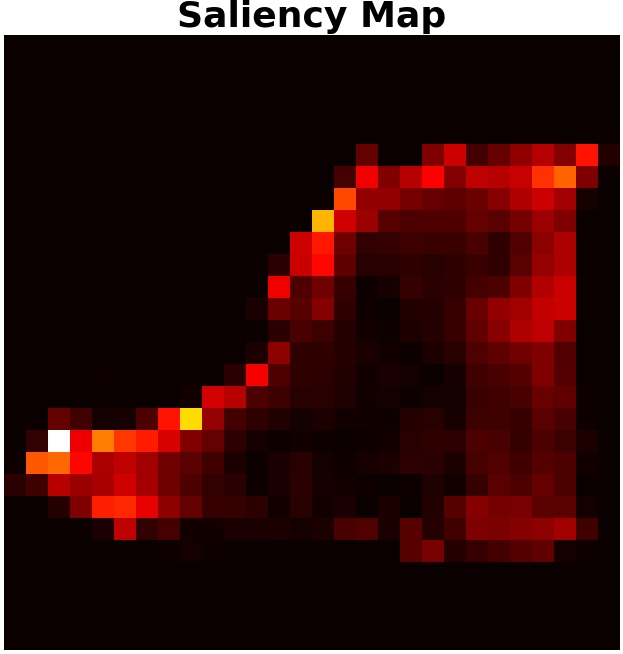} 
			\captionsetup{justification=centering}
			\caption*{Gini: 0.7920}
		\end{subfigure} 
		\caption{For all images, the models give \emph{correct} prediction -- Ankle boot.}
	\end{subfigure}
	\caption{Some examples on Fashion-MNIST. We can see the saliency maps (also called feature importance maps), computed via DeepSHAP, of adversarially trained model are much sparser compared to other models.}
	\label{fig:demo-fashion-mnist-shap}
\end{figure}



\end{appendices}

\end{document}